\def\shownotes{0}  
\newcommand{\authnote}[2]{{$\ll$\textsf{\footnotesize #1 notes: #2}$\gg$}}
\newcommand{\authnote}[2]{}
\newcommand{\yingyu}[1]{{\color{blue}\authnote{Yingyu}{{#1}}}}
\newcommand{\veps}{\varepsilon}
\newcommand{\bI}{\bold{I}}
\newcommand{\bsigma}{\bold{\sigma}}
\newcommand{\E}{\mathbb{E}}
\newcommand{\set}[1]{\mathcal{#1}}
\newcommand{\ReLU}{\bold{ReLU}}
\newtheorem{theorem}{Theorem}[section]
\newtheorem{lemma}[theorem]{Lemma}
\title{Learning Overparameterized Neural Networks via Stochastic Gradient Descent on Structured Data}
\author{
  Yuanzhi Li \\
  Computer Science Department\\
  Stanford University\\
  Stanford, CA 94305 \\
  \texttt{yuanzhil@stanford.edu} \\
	\And
	Yingyu Liang \\
	Department of Computer Sciences\\
	University of Wisconsin-Madison\\
	Madison, WI 53706\\
  \texttt{yliang@cs.wisc.edu} \\
}
\begin{document}

\maketitle

\begin{abstract}
Neural networks have many successful applications, while much less theoretical understanding has been gained. Towards bridging this gap, we study the problem of learning a two-layer overparameterized ReLU neural network for multi-class classification via stochastic gradient descent (SGD) from random initialization. In the overparameterized setting, when the data comes from mixtures of well-separated distributions, we prove that SGD learns a network with a small generalization error, albeit the network has enough capacity to fit arbitrary labels. Furthermore, the analysis provides interesting insights into several aspects of learning neural networks and can be verified based on empirical studies on synthetic data and on the MNIST dataset. 
\end{abstract}

\section{Introduction} \label{sec:intro}

Neural networks have achieved great success in many applications, but despite a recent increase of theoretical studies, much remains to be explained. For example, it is empirically observed that learning with stochastic gradient descent (SGD) in the overparameterized setting (i.e., learning a large network with number of parameters larger than the number of training data points) does not lead to overfitting~\cite{neyshabur2014search,zhang2016understanding}. Some recent studies use the low complexity of the learned solution to explain the generalization, but usually do not explain how the SGD or its variants favors low complexity solutions (i.e., the inductive bias or implicit regularization)~\cite{bartlett2017spectrally,neyshabur2017pac}. It is also observed that overparameterization and proper random initialization can help the optimization~\cite{sutskever2013importance,hardt2016identity,soudry2016no,livni2014computational}, but it is also not well understood why a particular initialization can improve learning. 
Moreover, most of the existing works trying to explain these phenomenons in general rely on unrealistic assumptions about the data distribution, such as Gaussian-ness and/or linear separability~\cite{zhong2017recovery,soltanolkotabi2017theoretical,ge2017learning,li2017convergence,brutzkus2017sgd}. 

This paper thus proposes to study the problem of learning a two-layer overparameterized neural network using SGD for classification, on data with a more realistic structure. In particular, the data in each class is a mixture of several components, and components from different classes are well separated in distance (but the components in each class can be close to each other). This is motivated by practical data. For example, on the dataset MNIST~\cite{lecun1998gradient}, each class corresponds to a digit and can have several components corresponding to different writing styles of the digit, and an image in it is a small perturbation of one of the components. On the other hand, images that belong to the same component are closer to each other than to an image of another digit. Analysis in this setting can then help understand how the structure of the practical data affects the optimization and generalization.

In this setting, we prove that when the network is sufficiently overparameterized, SGD provably learns a network close to the random initialization and with a small generalization error. This result shows that in the overparameterized setting and when the data is well structured, though in principle the network can overfit, SGD with random initialization introduces a strong inductive bias and leads to good generalization. 

Our result also shows that the overparameterization requirement and the learning time depends on the parameters inherent to the structure of the data but not on the ambient dimension of the data.
More importantly, the analysis to obtain the result also provides some interesting theoretical insights for various aspects of learning neural networks.
It reveals that the success of learning crucially relies on overparameterization and random initialization. These two combined together lead to a tight coupling around the initialization between the SGD and another learning process that has a benign optimization landscape. 
This coupling, together with the structure of the data,  allows SGD to find a solution that has a low generalization error, while still remains in the aforementioned neighborhood of the initialization.
Our work makes a step towrads explaining how overparameterization and random initialization help optimization, and how the inductive bias and good generalization arise from the SGD dynamics on structured data. 
Some other more technical implications of our analysis will be discussed in later sections, such as the existence of a good solution close to the initialization, and the low-rankness of the weights learned. 
Complementary empirical studies on synthetic data and on the benchmark dataset MNIST provide positive support for the analysis and insights.


\section{Related Work} \label{sec:related}

\noindent\textbf{Generalization of neural networks.} 
Empirical studies show interesting phenomena about the generalization of neural networks: practical neural networks have the capacity to fit random labels of the training data, yet they still have good generalization when trained on practical data~\cite{neyshabur2014search,zhang2016understanding,arpit2017closer}. These networks are overparameterized in that they have more parameters than statistically necessary, and their good generalization cannot be explained by na\"ively applying traditional theory.
Several lines of work have proposed certain low complexity measures of the learned network and derived generalization bounds to better explain the phenomena. \cite{bartlett2017spectrally,neyshabur2017pac,moustapha2017parseval} proved spectrally-normalized margin-based generalization bounds, \cite{dziugaite2017computing,neyshabur2017pac} derived bounds from a PAC-Bayes approach, and \cite{arora2018stronger,zhou2018compressibility,baykal2018data} derived bounds from the compression point of view. They, in general, do not address why the low complexity arises. This paper takes a step towards this direction, though on two-layer networks and a simplified model of the data.

\noindent\textbf{Overparameterization and implicit regularization.}
The training objectives of overparameterized networks in principle have many (approximate) global optima and some generalize better than the others~\cite{keskar2016large,dinh2017sharp,arpit2017closer}, while empirical observations imply that the optimization process in practice prefers those with better generalization. It is then an interesting question how this implicit regularization or inductive bias arises from the optimization and the structure of the data. 
Recent studies are on SGD for different tasks, such as logistic regression~\cite{soudry2017implicit} and matrix factorization~\cite{gunasekar2017implicit,ma2017implicit,li2017algorithmic}. More related to our work is~\cite{brutzkus2017sgd}, which studies the problem of learning a two-layer overparameterized network on linearly separable data and shows that SGD converges to a global optimum with good generalization. 
Our work studies the problem on data with a well clustered (and potentially not linearly separable) structure that we believe is closer to practical scenarios and thus can advance this line of research. 

\noindent\textbf{Theoretical analysis of learning neural networks.}
There also exists a large body of work that analyzes the optimization landscape of learning neural networks~\cite{kawaguchi2016deep,soudry2016no,xie2016diversity,ge2017learning,soltanolkotabi2017theoretical,tian2017analytical,brutzkus2017globally,zhong2017recovery,li2017convergence,boob2017theoretical}.
They in general need to assume unrealistic assumptions about the data such as Gaussian-ness, and/or have strong assumptions about the network such as using only linear activation. They also do not study the implicit regularization by the optimization algorithms.

\section{Problem Setup} \label{sec:preliminary}

In this work, a two-layer neural network with ReLU activation for $k$-classes classification is given by $f = (f_1, f_2, \cdots, f_k)$ such that for each $i \in [k]$:
\begin{align*}
f_i(x) = \sum_{r = 1}^m a_{i, r} \ReLU(\langle w_{r} , x \rangle)
\end{align*}
where $\{w_r \in \mathbb{R}^d \}$ are the weights for the $m$ neurons in the hidden layer, $ \{a_{i, r} \in \mathbb{R} \} $ are the weights of the top layer, and $\ReLU(z) = \max\{0, z\}$.

\noindent\textbf{Assumptions about the data.} 
The data is generated from a distribution $\mathcal{D}$ as follows.
There are $k \times l$ unknown distributions $\{\mathcal{D}_{i, j}\}_{i \in [k], j \in [l]}$ over $\mathbb{R}^d$ and probabilities $p_{i, j} \geq 0$ such that $\sum_{i,j} p_{i, j} = 1$. Each data point $(x,y)$ is i.i.d.\ generated by: 
(1) Sample $z \in [k] \times [l]$ such that  $\Pr[z = (i, j) ]= p_{i, j}$; (2) Set label $y = z[0]$, and sample $x$ from $\mathcal{D}_z$. Assume we sample $N$ points $\{(x_i, y_i)\}_{i=1}^N$.

Let us define the support of a distribution $\mathcal{D} $ with density $p$ over $\mathcal{R}^d$ as
$
\text{supp}(\mathcal{D}) = \{ x: p(x) > 0\},
$
the distance between two sets $\set{S}_1, \set{S}_2 \subseteq \mathcal{R}^d$ as
$
\text{dist}(\set{S}_1, \set{S}_2 ) = \min_{x \in \set{S}_1, y \in \set{S}_2} \{ \|x - y\|_2\},
$
and the diameter of a set $\set{S}_1 \subseteq \mathcal{R}^d$ as
$
\text{diam}(\set{S}_1 ) = \max_{x, y \in \set{S}_1} \{ \|x - y\|_2\}.
$
Then we are ready to make the assumptions about the data.

\begin{enumerate}
\item[\textbf{(A1)}] (Separability) There exists $\delta > 0$ such that for every $i_1 \not= i_2 \in [k]$ and every $j_1, j_2 \in [l]$, 
$
  \text{dist}\left(\text{supp}(\mathcal{D}_{i_1, j_1}), \text{supp}(\mathcal{D}_{i_2, j_2}) \right) \geq \delta.
$
Moreover, for every $i \in [k], j \in [l]$,\footnote{The assumption $ 1/(8l)$ can be made to $1/[(1 + \alpha) l]$ for any $\alpha > 0$ by paying a large polynomial in $1/\alpha$ in the sample complexity. We will not prove it in this paper because we would like to highlight the key factors.}
$
  \text{diam}(\text{supp}(\mathcal{D}_{i, j}) )\leq \lambda \delta, ~\text{for}~\lambda \leq 1/(8l).
$
\item[\textbf{(A2)}]  (Normalization) Any $x$ from the distribution has $\| x \|_2 = 1$. 
\end{enumerate}

A few remarks are worthy. Instead of having one distribution for one class, we allow an arbitrary $l \geq 1$ distributions in each class, which we believe is a better fit to the real data. For example, in MNIST, a class can be the number 1, and $l$ can be the different styles of writing $1$ ($1$ or $|$ or $/$). 

Assumption \textbf{(A2)} is for simplicity, while \textbf{(A1)} is our key assumption. With $l \geq 1$ distributions inside each class, our assumption allows data that is not linearly separable, e.g., XOR type data in $\mathcal{R}^2$ where there are two classes, one consisting of two balls of diameter $1/10$ with centers $(0,0)$ and $(2,2)$ and the other consisting of two of the same diameter with centers $(0,2)$ and $(2,0)$. See Figure~\ref{fig:separability} in Appendix~\ref{app_pre} for an illustration.  
Moreover, essentially the only assumption we have here is $\lambda =O(1/l)$. When $l = 1$, $\lambda = O(1)$, which is the minimal requirement on the order of $\lambda$ for the distribution to be efficiently learnable.
\yingyu{Why it's minimal requirement? } 
Our work allows larger $l$, so that the data can be more complicated inside each class. In this case, we require the separation to also be higher. When we increase $l$ to refine the distributions inside each class, we should expect the diameters of each distribution become smaller as well. As long as the rate of diameter decreasing in each distribution is greater than the total number of distributions, then our assumption will hold. 

\noindent\textbf{Assumptions about the learning process.}
We will only learn the weight $w_{ r}$ to simplify the analysis. Since the ReLU activation is positive homogeneous, the effect of overparameterization can still be studied, and a similar approach has been adopted in previous work~\cite{brutzkus2017sgd}. 
So the network is also written as $y = f(x, w) = (f_1(x, w), \cdots, f_k(x, w))$ for $w  = (w_1, \cdots, w_r)$. 

We assume the learning is from a random initialization:
\begin{enumerate}
\item[\textbf{(A3)}]  (Random initialization) $w_r^{(0)} \sim \mathcal{N}(0, \sigma^2 \bI)$, $a_{i, r} \sim \mathcal{N}(0, 1)$, with $\sigma =  \frac{1}{m^{1/2 }} $.
\end{enumerate}

The learning process minimizes the cross entropy loss over the softmax, defined as:
\begin{align*}
L(w) = - \frac{1}{N} \sum_{s = 1}^N \log o_{y_s}(x_s, w), \text{~where~} o_{y}(x, w) = \frac{ e^{f_{y}(x, w)}}{\sum_{i = 1}^k e^{f_i (x, w)}}.
\end{align*}
Let $L(w, x_s, y_s) = - \log o_{y_s}(x_s, w)$ denote the cross entropy loss for a particular point $(x_s, y_s)$. 

We consider a minibatch SGD of batch size $B$, number of iterations $T = N/B$ and learning rate $\eta$ as the following process: Randomly divide the total training examples into $T$ batches, each of size $B$. Let the indices of the examples in the $t$-th batch be $\set{B}_t$. At each iteration, the update is\footnote{Strictly speaking, $L(w, x_s, y_s)$ does not have gradient everywhere due to the non-smoothness of ReLU. One can view $\frac{\partial L(w, x_s, y_s)}{\partial w_{r}}$ as a convenient notation for the right hand side of (\ref{eqn:gradient}).}
\begin{align}
w^{(t + 1)}_r & = w^{(t)}_r - \eta \frac{1}{B} \sum_{s \in \set{B}_t}\frac{\partial L( w^{(t)} , x_s, y_s)}{\partial w_{r}^{(t)}}, \forall r \in [m], \text{~where~} \nonumber\\
\frac{\partial L(w, x_s, y_s)}{\partial w_{r}} & = \left(    \sum_{i \not= y_s} a_{i, r} o_i(x_s, w)  - \sum_{i \not= y_s} a_{y_s, r} o_i(x_s, w) \right)  1_{\langle w_{ r} , x_s \rangle \geq 0} x_{s}. \label{eqn:gradient}
\end{align}

\section{Main Result} \label{sec:overview}

For notation simplicity, for a target error $\veps$ (to be specified later), with high probability (or w.h.p.) means with probability $1 - 1/\text{poly}(1/\delta, k, l,  m, 1/\veps)$ for a sufficiently large polynomial $\text{poly}$, and $\tilde{O}$ hides factors of $\text{poly}(\log 1/\delta,\log k,\log l,\log  m, \log 1/\veps)$.

\begin{theorem} \label{thm:main}
Suppose the assumptions \textbf{(A1)(A2)(A3)} are satisfied. Then for every $\veps > 0$, there is $M = \text{poly}(k, l, 1/\delta, 1/\veps)$ such that for every $m \geq M$, after doing a minibatch SGD with batch size $B = \text{poly}(k, l, 1/\delta, 1/\veps, \log m)$ and learning rate $\eta = \frac{1}{m \cdot \text{poly}(k, l, 1/\delta, 1/\veps, \log m)}$ for  $T = \text{poly}(k, l, 1/\delta, 1/\veps, \log m)$ iterations, with high probability:
\begin{align*}
\Pr_{(x, y) \sim \mathcal{D}}\left[ \forall j \in [k], j \not= y, f_y(x, w^{(T)}) > f_j(x, w^{(T)}) \right] \geq 1 - \veps.
\end{align*}
\end{theorem}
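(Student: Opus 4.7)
The plan is to argue that in the highly overparameterized regime, SGD on the ReLU network closely tracks a much simpler \emph{pseudo-dynamics} obtained by freezing the activation pattern at the random initialization. Because $\sigma = 1/\sqrt{m}$, each pre-activation $\langle w_r^{(0)}, x\rangle$ is $O(1/\sqrt{m})$ with high probability, and during training the per-neuron movement can be shown to be of the same order. Consequently, only an $o(1)$ fraction of neurons ever flip their sign on any fixed point, and $f$ behaves almost like the predictor linear in $w$, $\tilde f_i(x,w) = \sum_r a_{i,r} \langle w_r, x\rangle 1_{\langle w_r^{(0)}, x\rangle \geq 0}$, for which the loss is convex.

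First I would analyze the random feature map induced by the initialization. Using the separation condition \textbf{(A1)}, for any two unit vectors $x,x'$ with $\|x-x'\|\geq \delta$, a standard Gaussian anti-concentration plus Bernstein argument shows that a $\Theta(\delta)$ fraction of neurons activate on $x$ but not on $x'$ (or vice versa), with high probability over $w^{(0)}$. Combined with the independent signs in $a_{i,r}$, this lets me explicitly construct, for any target assignment of the $kl$ component centers to classes, a displacement $w^\ast$ with $\|w^\ast_r - w^{(0)}_r\|_2 = \tilde{O}(\text{poly}(k,l,1/\delta)/\sqrt m)$ such that the pseudo-network achieves margin $\Omega(1)$ on each center. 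The diameter bound $\lambda \delta \leq \delta/(8l)$ then extends this margin to the entire support of every component, giving a reference weight $w^\ast$ close to initialization at which the population loss is tiny.

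Next I would run the coupling argument. Since the pseudo-loss is convex in $w$, a standard $\|w^{(t)}-w^\ast\|_2^2$ potential argument for minibatch SGD (with small enough $\eta$ and batch size $B$ controlling minibatch variance) gives convergence to small pseudo-loss in $T = \text{poly}(k,l,1/\delta,1/\veps,\log m)$ iterations. I would then inductively bound the gap between the true SGD trajectory and this pseudo-trajectory: at step $t$ the gradient discrepancy is supported on the neurons whose activation has flipped on a batch point, and one shows simultaneously that (i) $\|w^{(t)}-w^{(0)}\|_2 = \tilde{O}(\text{poly}(k,l,1/\delta)/\sqrt m)$ and (ii) the fraction of such flipped neurons stays $o(1)$. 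Choosing $m$ large makes the accumulated perturbation negligible compared to the convergence rate of the pseudo-dynamics, so the true SGD iterates inherit small training loss and still satisfy $\|w^{(t)}-w^{(0)}\|_2 \ll 1$.

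Finally, generalization is almost automatic. The random-feature analysis shows the trained $f(\cdot,w^{(T)})$ is $\tilde{O}(\text{poly}(k,l,1/\delta))$-Lipschitz on the unit sphere, so large-margin correctness on each of the $kl$ component centers (guaranteed by the pseudo-network analysis) propagates to the entire support via \textbf{(A1)}; a standard concentration argument ensures that, for $N$ polynomial in the same parameters, every component of nonnegligible mass is represented in the training set. The main obstacle, and where the threshold $M$ is fixed, is closing the three-way coupling between $m$, $T$ and $\|w^{(t)}-w^{(0)}\|_2$: $w^\ast$ must be within reach of SGD, yet also close enough to $w^{(0)}$ that activation flips remain rare, and the per-step gradient error must not swamp the $1/T$-scale progress of the convex pseudo-dynamics. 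Balancing these requirements is where the polynomial overparameterization $M = \text{poly}(k,l,1/\delta,1/\veps)$ is consumed, and is the technical heart of the proof.
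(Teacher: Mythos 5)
Your proposal is broadly sound but takes a genuinely different route from the paper on the central convergence argument. Both proofs share the key idea of coupling the true SGD trajectory to a ``pseudo'' dynamics obtained by freezing the ReLU activation patterns at initialization, and both use the fact that with $\sigma = 1/\sqrt{m}$ and a shrinking learning rate, only a vanishing fraction of patterns flip during the relevant horizon. Where you diverge is in how convergence is established. You propose to explicitly construct a reference point $w^\ast$ with $\|w^\ast - w^{(0)}\| = \tilde{O}(\mathrm{poly}(k,l,1/\delta)/\sqrt{m})$ achieving small pseudo-loss (using the fact that the separation $\delta$ makes a $\Theta(\delta)$ fraction of neurons distinguish any two well-separated points), note that the pseudo-network $\tilde f$ is linear in $w$ so the cross-entropy pseudo-loss is convex, and then run the standard $\|w^{(t)}-w^\ast\|_2^2$ potential argument for SGD on convex objectives. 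The paper instead never constructs $w^\ast$ and never invokes convexity: it proves a gradient lower bound (``error large $\Rightarrow$ pseudo-gradient large,'' Lemmas~\ref{lem:g_relu}, \ref{lem:v_g}, \ref{lem:v_g_2}), which together with the coupling (Lemma~\ref{lem:coupling}, \ref{lem:coup_full}) and a smoothness estimate (Lemma~\ref{lem:nonsmooth_grad}) shows that each SGD step decreases the true loss by a fixed amount as long as the error is large; combined with $L(w^{(0)}) = \tilde{O}(1)$, this bounds the number of ``bad'' iterations. The two routes are roughly equivalent in difficulty and both are standard in the overparameterized-network literature, but they spend the separation hypothesis differently: you use it to find a good near-initialization solution, the paper uses it to show the pseudo-gradient cannot vanish. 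One caution for your route: the paper's ``pseudo gradient'' is not exactly the gradient of $L(\tilde f(\cdot, w))$ — the softmax outputs $o_i(x,w)$ in it are still computed with the true $f$, not with $\tilde f$ — so to use convexity of $L(\tilde f(\cdot, w))$ you must additionally control $|f - \tilde f|$ along the trajectory, which is an extra (manageable, but necessary) step that the paper's gradient-lower-bound argument sidesteps.
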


Our theorem implies if the data satisfies our assumptions, and we parametrize the network properly, then we only need polynomial in $k, l, 1/\delta$ many samples to achieve a good prediction error. This error is measured directly on the true distribution $\mathcal{D}$, not merely on the input data used to train this network. Our result is also dimension free: There is no dependency on the underlying dimension $d$ of the data, the complexity is fully captured by $k, l, 1/\delta$. Moreover, no matter how much the network is overparameterized, it will only increase the total iterations by factors of $\log m$. So we can overparameterize by an \emph{sub-exponential amount} without significantly increasing the complexity. 

Furthermore, we can always treat each input example as an individual distribution, thus $\lambda$ is always zero. In this case, if we use batch size $B$ for $T$ iterations, we would have $l = N= BT$. Then our theorem indicate that as long as $m = \text{poly}(N, 1/\delta')$, where $\delta'$ is the minimal distance between each examples, we can actually fit arbitrary labels of the input data. However, since the total iteration only depends on $\log m$, when $m = \text{poly}(N, 1/\delta')$ but the input data is actually structured (with small $k, l$ and large $\delta$), then SGD can actually achieve a small generalization error, \emph{even when} the network has enough capacity to fit arbitrary labels of the training examples (and can also be done by SGD). Thus, we prove that SGD has a strong inductive bias on structured data: Instead of finding a bad global optima that can fit arbitrary labels, it actually finds those with good generalization guarantees. This gives more thorough explanation to the empirical observations in~\cite{neyshabur2014search,zhang2016understanding}.

\section{Intuition and Proof Sketch for A Simplified Case} \label{sec:sketch}

To train a neural network with ReLU activations, there are two questions need to be addressed:
\begin{enumerate}
\item[1] Why can SGD optimize the training loss? Or even finding a critical point? Since the underlying network is highly non-smooth,  existing theorems do not give any finite convergence rate of SGD for training neural network with ReLUs activations.
\item[2] Why can the trained network generalize? Even when the capacity is large enough to fit random labels of the input data? This is known as the inductive bias of SGD.
\end{enumerate}

This work takes a step towards answering these two questions. We show that when the network is overparameterized, it becomes more ``pseudo smooth'', which makes it easir for SGD to minimize the training loss, and furthermore, it will not hurt the generalization error. Our proof is based on the following important observation:
\begin{itemize}[topsep=0pt,itemsep=0pt,partopsep=0pt, parsep=0pt]
	\item[] The more we overparameterize the network, the less likely the activation pattern for one neuron and one data point will change in a fixed number of iterations.
\end{itemize}

This observation allows us to couple the gradient of the true neural network with a ``pseudo gradient'' where the activation pattern for each data point and each neuron is fixed. That is, when computing the ``pseudo gradient'', for fixed $r, i$, whether the $r$-th hidden node is activated on the $i$-th data point $x_i$ will always be the same for different $t$. 
(But for fixed $t$,  for different $r$ or $i$, the sign can be different.) We are able to prove that unless the generalization error is small, the ``pseudo gradient'' will always be large. Moreover, we show that the network is actually smooth thus SGD can minimize the loss. 


We then show that when the number $m$ of hidden neurons increases, with a properly decreasing learning rate,  the total number of iterations it takes to minimize the loss is roughly not changed. 
However, the total number of iterations that we can couple the true gradient with the pseudo one increases. Thus, there is a polynomially large $m$ so that we can couple these two gradients until the network reaches a small generalization error. 


\subsection{A Simplified Case: No Variance}

Here we illustrate the proof sketch for a simplified case and Appendix~\ref{app:proof_simple} provides the proof. The proof for the general case is provided in Appendix~\ref{app:proof_general}. 
In the simplified case, we further assume:
\begin{enumerate}
\item[\textbf{(S)}]  (No variance) Each $\mathcal{D}_{a, b}$ is a single data point $(x_{a, b}, a)$, and also we are doing full batch gradient descent as opposite to the minibatch SGD. 
\end{enumerate}
Then we reload the loss notation as
$
 L(w) = \sum_{a \in [k], b \in [l]} p_{a, b}L(w, x_{a, b}, a),
$ 
and the gradient is
\begin{align*}
\frac{\partial L(w)}{\partial w_{r}} & = \sum_{a \in [k], b \in [l]} p_{a,b} \left(\sum_{i \not= a} a_{i, r} o_i(x_{a,b}, w) - \sum_{i \not= a} a_{a, r} o_i(x_{a,b}, w)  \right)  1_{\langle w_{ r} , x_{a,b} \rangle \geq 0} x_{a,b}.
\end{align*}
Following the intuition above, we define the pseudo gradient as
\begin{align*}
\frac{\tilde \partial L(w)}{\partial w_{r}} & = \sum_{a \in [k], b \in [l]} p_{a,b} \left(\sum_{i \not= a} a_{i, r} o_i(x_{a,b}, w) - \sum_{i \not= a} a_{a, r} o_i(x_{a,b}, w)  \right)  1_{\langle w^{(0)}_{ r} , x_{a,b} \rangle \geq 0} x_{a,b},
\end{align*}
where it uses $1_{\langle w^{(0)}_{ r} , x_{a,b} \rangle \geq 0}$ instead of $1_{\langle w_{r} , x_{a,b} \rangle \geq 0}$ as in the true gradient. That is, the activation pattern is set to be that in the initialization.
Intuitively, the pseudo gradient is similar to the gradient for a pseudo network $g$  (but not exactly the same), defined as
$
g_i(x,w) := \sum_{r = 1}^m a_{i, r} \langle w_r , x \rangle 1_{\left\langle w_{r}^{(0)} , x \right \rangle \geq 0}. 
$
Coupling the gradients is then similar to coupling the networks $f$ and $g$.


For simplicity, let 
$
  v_{a, a, b} := \sum_{i \not= a} o_i(x_{a,b}, w) = \frac{\sum_{i \not= a} e^{f_{i} (x_{a, b}, w)}}{\sum_{i = 1}^k e^{f_{i} (x_{a, b}, w)}}
$ and when $s \neq a$, 
$
  v_{s,a,b} := - o_s(x_{a,b}, w) = -\frac{e^{f_{s} (x_{a, b}, w)}}{\sum_{i = 1}^k e^{f_{i} (x_{a, b}, w)}}.
$
Roughly, if $v_{a, a ,b}$ is small, then $f_{a} (x_{a, b}, w)$ is relatively larger compared to the other $f_{i} (x_{a, b}, w)$, so the classification error is small.

%

We prove the following two main lemmas. The first says that at each iteration, the total number of hidden units whose gradient can be coupled with the pseudo one is quite large.

\begin{lemma}[Coupling]\label{lem:coupling_maintext}
W.h.p. over the random initialization, for every $\tau > 0$, for every $t = \tilde{O}\left( \frac{\tau}{ \eta}\right) $, we have that for at least $1 - \frac{e \tau kl}{ \sigma}$ fraction of $r \in [m]$:  
$
 \frac{\partial L(w^{(t)})}{\partial w_r} =  \frac{\tilde\partial L(w^{(t)})}{\partial w_r}.
$
\end{lemma}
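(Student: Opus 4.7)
The plan is to exploit the fact that each SGD step perturbs $w_r$ by only $\tilde{O}(\eta)$, so after $t=\tilde{O}(\tau/\eta)$ iterations the drift $\|w_r^{(t)}-w_r^{(0)}\|$ is at most $\tau$. Then for most $r$, this drift is too small to flip the activation sign of $\langle w_r,x_{a,b}\rangle$ on any of the $kl$ training points, because at initialization $\langle w_r^{(0)},x_{a,b}\rangle$ is bounded away from zero with high probability over $w_r^{(0)}$. Since the gradient formulas for $\partial L/\partial w_r$ and $\tilde\partial L/\partial w_r$ differ only through these activation indicators, matching all $kl$ indicators implies the two gradients coincide.

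First I would establish a \emph{trajectory-independent} bound on the gradient norm. Writing the full-batch loss gradient using (\ref{eqn:gradient}) as a $p_{a,b}$-weighted sum over $(a,b)$, the scalar prefactor is $\sum_{i\ne a}(a_{i,r}-a_{a,r})\,o_i(x_{a,b},w)$. Because $o_i\ge 0$ and $\sum_i o_i=1$, this scalar is at most $2\max_i|a_{i,r}|$, and $\|x_{a,b}\|=1$ by (A2). On the high-probability event that $\max_{i,r}|a_{i,r}|=O(\sqrt{\log(km)})$, we therefore get $\|\partial L(w)/\partial w_r\|=\tilde O(1)$ uniformly in $w$ and $r$. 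Telescoping over $t$ steps with stepsize $\eta$ gives
\[
\|w_r^{(t)}-w_r^{(0)}\|\;\le\;t\eta\cdot\tilde O(1)\;\le\;\tau
\]
for every $r\in[m]$ simultaneously, provided $t=\tilde O(\tau/\eta)$ with polylog constants absorbed correctly.

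Second, I would note that the activation can change, i.e., $1_{\langle w_r^{(t)},x_{a,b}\rangle\ge 0}\ne 1_{\langle w_r^{(0)},x_{a,b}\rangle\ge 0}$, only when $|\langle w_r^{(0)},x_{a,b}\rangle|\le \|w_r^{(t)}-w_r^{(0)}\|\cdot\|x_{a,b}\|\le\tau$. Since $w_r^{(0)}\sim\mathcal{N}(0,\sigma^2\bI)$ and $\|x_{a,b}\|=1$, the scalar $\langle w_r^{(0)},x_{a,b}\rangle$ is $\mathcal{N}(0,\sigma^2)$, and Gaussian anti-concentration yields
\[
\Pr\!\left[\,|\langle w_r^{(0)},x_{a,b}\rangle|\le \tau\,\right]\;\le\;\frac{2\tau}{\sigma\sqrt{2\pi}}.
\]
A union bound over the $kl$ training points shows that, for each $r$, the probability that some activation flips is at most $e\tau kl/\sigma$ after a constant adjustment.

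Finally, I would lift the per-neuron bound to the claimed fraction-of-neurons bound via independence. Crucially, the drift bound in step one is deterministic given the $a_{i,r}$'s, so the ``bad'' event for neuron $r$ depends only on $w_r^{(0)}$, and the $w_r^{(0)}$ are independent across $r$. A Chernoff bound over $m$ i.i.d.\ Bernoulli indicators then yields that the empirical fraction of bad $r$ is at most its expectation $e\tau kl/\sigma$ with failure probability $e^{-\Omega(m)}$, which is subsumed by the ``w.h.p.'' convention since $m$ is taken polynomially large. The main obstacle I anticipate is avoiding circularity in step one: if one tried to bound drift using current-time activations, the analysis would couple trajectory and initialization. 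This is sidestepped by observing that $\|\partial L/\partial w_r\|$ is bounded by a quantity ($2\max_i|a_{i,r}|$) that is fixed at initialization and never varies, so the drift bound holds unconditionally on the SGD trajectory.
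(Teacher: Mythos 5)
Your proposal follows essentially the same route as the paper's own proof: bound the per-step gradient norm uniformly by $\tilde O(1)$ using the boundedness of $|a_{i,r}|$ to get a drift bound $\|w_r^{(t)}-w_r^{(0)}\|\le L\eta t$, observe that an activation flips only if $|\langle w_r^{(0)},x_{a,b}\rangle|$ is below the drift, apply Gaussian anti-concentration plus a union bound over the $kl$ points, and aggregate over the $m$ neurons. The one place you are more explicit than the paper is the final step: the paper asserts $|\mathcal{H}|\ge 1-e\tau kl/\sigma$ ``directly'' from the per-neuron probability bound, whereas you correctly note that this requires a Chernoff-type concentration over the independent $w_r^{(0)}$, which is legitimate once the drift bound is observed to be deterministic conditional on the high-probability event for $\{a_{i,r}\}$; this is a genuine (small) gap you fill in rather than gloss over.
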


The second lemma says that the pseudo gradient is large unless the error is small.

\begin{lemma}\label{lem:v_g_maintext}
For $m = \tilde{\Omega} \left( \frac{k^3 l^2}{\delta } \right)$, for every $\{ p_{a, b}v_{i, a, b}\}_{i, a \in [k], b \in [l] } \in [-v, v]$ (that depends on $w_r^{(0)}, a_{i, r}$, etc.) with $\max \{ p_{a, b}v_{i, a, b}\}_{i, a \in [k], b \in [l] } = v$, there exists at least $\Omega( \frac{\delta }{kl})$ fraction of $r \in [m]$ such that 
$
\left\|\frac{\tilde\partial L(w)}{\partial w_{r}}  \right\|_2 =   \tilde{\Omega}\left(    \frac{ v \delta }{kl}  \right).
$
\end{lemma}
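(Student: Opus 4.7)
The pseudo gradient at neuron $r$ reads $\frac{\tilde\partial L(w)}{\partial w_r} = -\sum_{a,b} \bigl(\sum_i a_{i,r}\, p_{a,b} v_{i,a,b}\bigr)\, I_r(x_{a,b})\, x_{a,b}$, where $I_r(x) := 1_{\langle w_r^{(0)}, x\rangle \geq 0}$. Let $(i^*, a^*, b^*)$ attain the maximum, set $x^* := x_{a^*,b^*}$, and note $|p_{a^*, b^*} v_{i^*, a^*, b^*}| = v$. My plan is to exhibit an $\Omega(\delta/(kl))$ fraction of neurons $r$ for which the $(a^*, b^*)$ contribution dominates the pseudo gradient, and then lower-bound its norm via a projection argument to obtain $\tilde\Omega(v\delta/(kl))$.

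I would define a good event $\mathcal{E}_r$ on the initialization $w_r^{(0)}$ that isolates the $(a^*, b^*)$ contribution in the activation pattern: $I_r(x^*) = 1$ together with $I_r(x_{a,b}) = 0$ for every $(a,b)$ with $a \neq a^*$. Because (A1) forces each such $x_{a,b}$ to be $\delta$-separated from $x^*$ with both on the unit sphere by (A2), these directions subtend angles $\Omega(\delta)$ with $x^*$, and a careful cone/half-space calculation localized around the ray through $x^*$ gives $\Pr[\mathcal{E}_r] \geq \Omega(\delta/(kl))$. On top of this, the Gaussian linear combination $\sum_i a_{i,r}\, p_{a^*,b^*} v_{i,a^*,b^*}$ has variance at least $v^2$ (one of its coefficients has magnitude $v$), and so has magnitude $\Omega(v)$ with constant probability over the $a_{i,r}$'s, independently of $w_r^{(0)}$.

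Under $\mathcal{E}_r$ the pseudo gradient collapses to a same-class sum $-\sum_{b\in [l]} \bigl(\sum_i a_{i,r}\, p_{a^*,b} v_{i,a^*,b}\bigr)\, I_r(x_{a^*,b})\, x_{a^*,b}$, and I would project onto a direction $u$ designed to preserve $x^*$'s contribution while neutralizing interference from $x_{a^*,b}$ with $b \neq b^*$, e.g.\ $u = P^\perp x^*$ where $P$ is the orthogonal projection onto the span of those interfering same-class directions. Since $\langle x^*, u\rangle = \|u\|^2$ and $\langle x_{a^*,b}, u\rangle = 0$ for $b \neq b^*$, the inner product $\langle \tilde\partial L/\partial w_r, u\rangle$ isolates the $(a^*,b^*)$ term, giving $\|\tilde\partial L/\partial w_r\| \geq v\|u\|$; it then suffices to show $\|u\| \geq \tilde\Omega(\delta/(kl))$. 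The main technical obstacle is precisely that (A1) does not constrain within-class distances, so the interfering directions can crowd $x^*$ and shrink $\|u\|$ (possibly to zero); I plan to handle this by exploiting the Gaussian randomness of the coefficient vector $(c_{r,a^*,b})_b$, which has a non-degenerate covariance inherited from $\max |p_{a,b} v_{i,a,b}| = v$, so that adversarial cancellation of the $x^*$-component can only occur on a controlled fraction of the top-layer randomness, leaving enough ``good'' neurons to meet the asserted norm bound.
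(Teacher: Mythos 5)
There is a genuine gap, and it is in the very first step: the event $\mathcal{E}_r$ you define (activation on $x^*$ \emph{on}, activations on every $x_{a,b}$ with $a \neq a^*$ \emph{off}) need not have probability $\Omega(\delta/(kl))$ — it can have probability zero. Take two other-class directions $x' = (\cos\delta, \sin\delta, 0, \dots)$ and $x'' = (\cos\delta, -\sin\delta, 0, \dots)$ with $x^* = e_1$. Then $\langle w, x^*\rangle \geq 0$, $\langle w, x'\rangle < 0$, $\langle w, x''\rangle < 0$ jointly force $w_2 < -w_1\cot\delta$ and $w_2 > w_1 \cot\delta$ with $w_1 \geq 0$, which is a measure-zero set. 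Your heuristic that each of the $(k-1)l$ wedges individually has angular measure $\Omega(\delta)$ does not give you a probability bound on the intersection: the wedges can be pairwise incompatible. The paper's conditioning is deliberately weaker. It decomposes $w_r^{(0)} = \alpha x^* + \beta$ and conditions on $|\alpha| \leq \tau$ together with $|\langle \beta, x_{a,b}\rangle| \geq 4\tau$ for all other-class points — so those units are \emph{not} deactivated, they just have frozen sign, which makes their contributions a \emph{linear function of} $\alpha$. The separability assumption gives each such condition probability $1 - O(\tau/(\delta\sigma))$, so for $\tau \lesssim \delta\sigma/(kl)$ the intersection (conjoined with $|\alpha| \leq \tau$) has probability $\Omega(\tau/\sigma) = \Omega(\delta/(kl))$.

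The second half of your plan also breaks, and you already sense why. Projecting onto $u = P^\perp x^*$ to kill same-class interference can collapse because nothing in (A1) prevents $x_{a^*,b}$, $b\neq b^*$, from being collinear with or arbitrarily close to $x^*$, which drives $\|u\|$ to zero. The Gaussian randomness of $a_{i,r}$ does not rescue this: the $v_{i,a^*,b}$ are adversarial and can conspire against most draws of $a_{i,r}$. The mechanism the paper actually uses is sign-based, not geometric: all same-class coefficients $p_{a^*,b} v_{a^*,a^*,b}$ are \emph{nonnegative}, so under the conditioning above the quantity $h(\alpha) = \langle P_{1,r}, w_r^{(0)}\rangle$ decomposes as (a convex piecewise-linear function with subdifferential gap $\geq v$ at $\alpha=0$, coming from the same-class ReLUs) plus (a linear function of $\alpha$, coming from the other-class terms). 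Lemma~\ref{lem:non_smooth_linear} then shows a convex function with a kink of size $v$ cannot be $o(v\tau)$ close to any linear function on a constant fraction of $[-\tau,\tau]$, which is exactly what yields the $\tilde\Omega(v\delta/(kl))$ norm bound. You would need to replace your projection step with this convexity-versus-linear argument, and replace your ``all deactivated'' event with the paper's ``margin on $\beta$, small $\alpha$'' event, to make the proof go through.
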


We now illustrate how to use these two lemmas to show the convergence for a small enough learning rate $\eta$. For simplicity, let us assume that $kl/\delta= O(1)$ and $\veps = o(1)$. Thus, by Lemma~\ref{lem:v_g_maintext} we know that unless $ v \leq \veps$, there are $\Omega(1)$ fraction of $r$ such that $\left\|\tilde\partial L(w)/\partial w_{r} \right\|_2 = \Omega(\veps)$. Moreover, by Lemma~\ref{lem:coupling_maintext} we know that we can pick $\tau = \Theta(\sigma \veps)$ so  $e \tau/\sigma = \Theta(\veps)$, which implies that there are $\Omega(1)$ fraction of $r$  such that $\left\|\partial L(w)/\partial w_{r} \right\|_2 = \Omega(\veps)$ as well. For small enough learning rate $\eta$, doing one step of gradient descent will thus decrease $L(w)$ by $ \Omega(\eta m \veps^2)$, so it converges in $t = O\left(1/ \eta m \veps^2 \right)$ iterations. In the end, we just need to make sure that $1/\eta m \veps^2 \leq O(\tau/\eta) = \Theta(\sigma \veps/\eta)$ so we can always apply the coupling Lemma~\ref{lem:coupling_maintext}. By $\sigma = \tilde{O}(1/m^{-1/2})$ we know that this is true as long as $m \geq \text{poly}(1/\veps)$. A small $v$ can be shown to lead to a small generalization error.


\section{Discussion of Insights from the Analysis} \label{sec:discuss}

Our analysis, though for learning two-layer networks on well structured data, also sheds some light upon learning neural networks in more general settings.

\noindent\textbf{Generalization.}
Several lines of recent work explain the generalization phenomenon of overparameterized networks by low complexity of the learned networks, from the point views of spectrally-normalized margins~\cite{bartlett2017spectrally,neyshabur2017pac,moustapha2017parseval}, compression~\cite{arora2018stronger,zhou2018compressibility,baykal2018data}, and PAC-Bayes~\cite{dziugaite2017computing,neyshabur2017pac}. 

Our analysis has partially explained how SGD (with proper random initialization) on structured data leads to the low complexity from the compression and PCA-Bayes point views. We have shown that in a neighborhood of the random initialization, w.h.p.\ the gradients are similar to those of another benign learning process, and thus SGD can reduce the error and reach a good solution while still in the neighborhood. 
The closeness to the initialization then means the weights (or more precisely the difference between the learned weights and the initialization) can be easily compressed. In fact, empirical observations have been made and connected to generalization in~\cite{nagarajan2017implicit,arora2018stronger}. Furthermore,~\cite{arora2018stronger} explicitly point out such a compression using a helper string (corresponding to the initialization in our setting). \cite{arora2018stronger} also point out that the compression view can be regarded as a more explicit form of the PAC-Bayes view, and thus our intuition also applies to the latter.

The existence of a solution of a small generalization error near the initialization is itself not obvious. 
Intuitively, on structured data, the updates are structured signals spread out across the weights of the hidden neurons. 
Then for prediction, the random initialized part in the weights has strong cancellation, while the structured signal part in the weights collectively affects the output. Therefore, the latter can be much smaller than the former while the network can still give accurate predictions. In other words, there can be a solution not far from the initialization with high probability. 

Some insight is provided on the low rank of the weights. More precisely, when the data are well clustered around a few patterns, the accumulated updates (difference between the learned weights and the initialization) should be approximately low rank, which can be seen from checking the SGD updates. However, when the difference is small compared to the initialization, the spectrum of the final weight matrix is dominated by that of the initialization and thus will tend to closer to that of a random matrix. Again, such observations/intuitions have been made in the literature and connected to compression and generalization (e.g.,~\cite{arora2018stronger}).


\noindent\textbf{Implicit regularization v.s.\ structure of the data.}
Existing work has analyzed the implicit regularization of SGD on logistic regression~\cite{soudry2017implicit}, matrix factorization~\cite{gunasekar2017implicit,ma2017implicit,li2017algorithmic}, and learning two-layer networks on linearly separable data~\cite{brutzkus2017sgd}. 
Our setting and also the analysis techniques are novel compared to the existing work. 
One motivation to study on structured data is to understand the role of structured data play in the implicit regularization, i.e., the observation that the solution learned on less structured or even random data is further away from the initialization. 
Indeed, our analysis shows that when the network size is fixed (and sufficiently overparameterized), learning over poorly structured data (larger $k$ and $\ell$) needs more iterations and thus the solution can deviate more from the initialization and has higher complexity. An extreme and especially interesting case is when the network is overparameterized so that in principle it can fit the training data by viewing each point as a component while actually they come from structured distributions with small number of components. In this case, we can show that it still learns a network with a small generalization error; see the more technical discussion in Section~\ref{sec:overview}. 

We also note that our analysis is under the assumption that the network is sufficiently overparameterized, i.e., $m$ is a sufficiently large polynomial of $k$, $\ell$ and other related parameters measuring the structure of the data. 
There could be the case that $m$ is smaller than this polynomial but is more than sufficient to fit the data,  i.e., the network is still overparameterized. Though in this case the analysis still provides useful insight, it does not fully apply; see our experiments with relatively small $m$. On the other hand, the empirical observations~\cite{neyshabur2014search,zhang2016understanding} suggest that practical networks are highly overparameterized, so our intuition may still be helpful there. 

\noindent\textbf{Effect of random initialization.}
Our analysis also shows how proper random initializations helps the optimization and consequently generalization. Essentially, this guarantees that w.h.p. for weights close to the initialization, many hidden ReLU units will have the same activation patterns (i.e., activated or not) as for the initializations, which means the gradients in the neighborhood look like those when the hidden units have fixed activation patterns. This  allows SGD makes progress when the loss is large, and eventually learns a good solution.
We also note that it is essential to carefully set the scale of the initialization, which is a extensively studied topic~\cite{martens2010deep,sutskever2013importance}. Our initialization has a scale related to the number of hidden units, which is particularly useful when the network size is varying, and thus can be of interest in such practical settings.



\section{Experiments} \label{sec:exp}

\newcommand{\widthscale}{0.49}
\begin{figure}
     \centering
     \subfloat[][Test accuracy]{\includegraphics[width=\widthscale\linewidth]{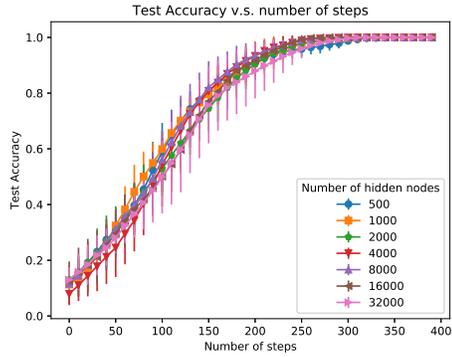}\label{fig:gaussiankrd_ls_sigma1_bm1000_acc}}
     \subfloat[][Coupling]{\includegraphics[width=\widthscale\linewidth]{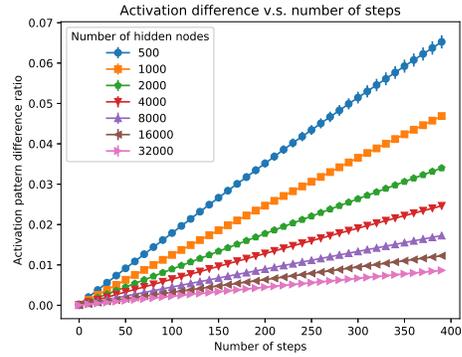}\label{fig:gaussiankrd_ls_sigma1_bm1000_act}}\\
		 \subfloat[][Distance from the initialization]{\includegraphics[width=\widthscale\linewidth]{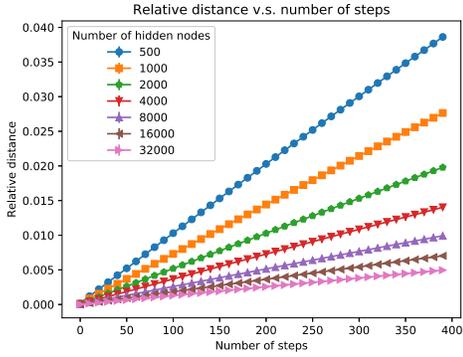}\label{fig:gaussiankrd_ls_sigma1_bm1000_rel_dist}}
		 \subfloat[][Rank of accumulated updates ($y$-axis in log-scale)]{\includegraphics[width=\widthscale\linewidth]{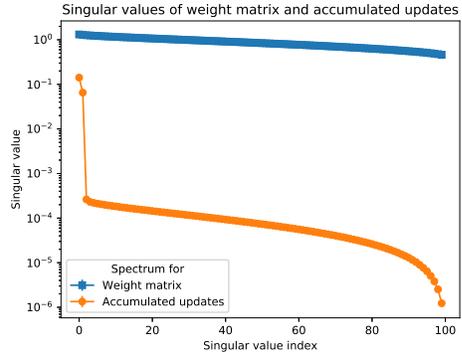}\label{fig:gaussiankrd_ls_sigma1_bm1000_rank_s_sd}}
     \caption{Results on the synthetic data.}
     \label{fig:gaussiankrd}
\end{figure}

\begin{figure}
     \centering
     \subfloat[][Test accuracy]{\includegraphics[width=\widthscale\linewidth]{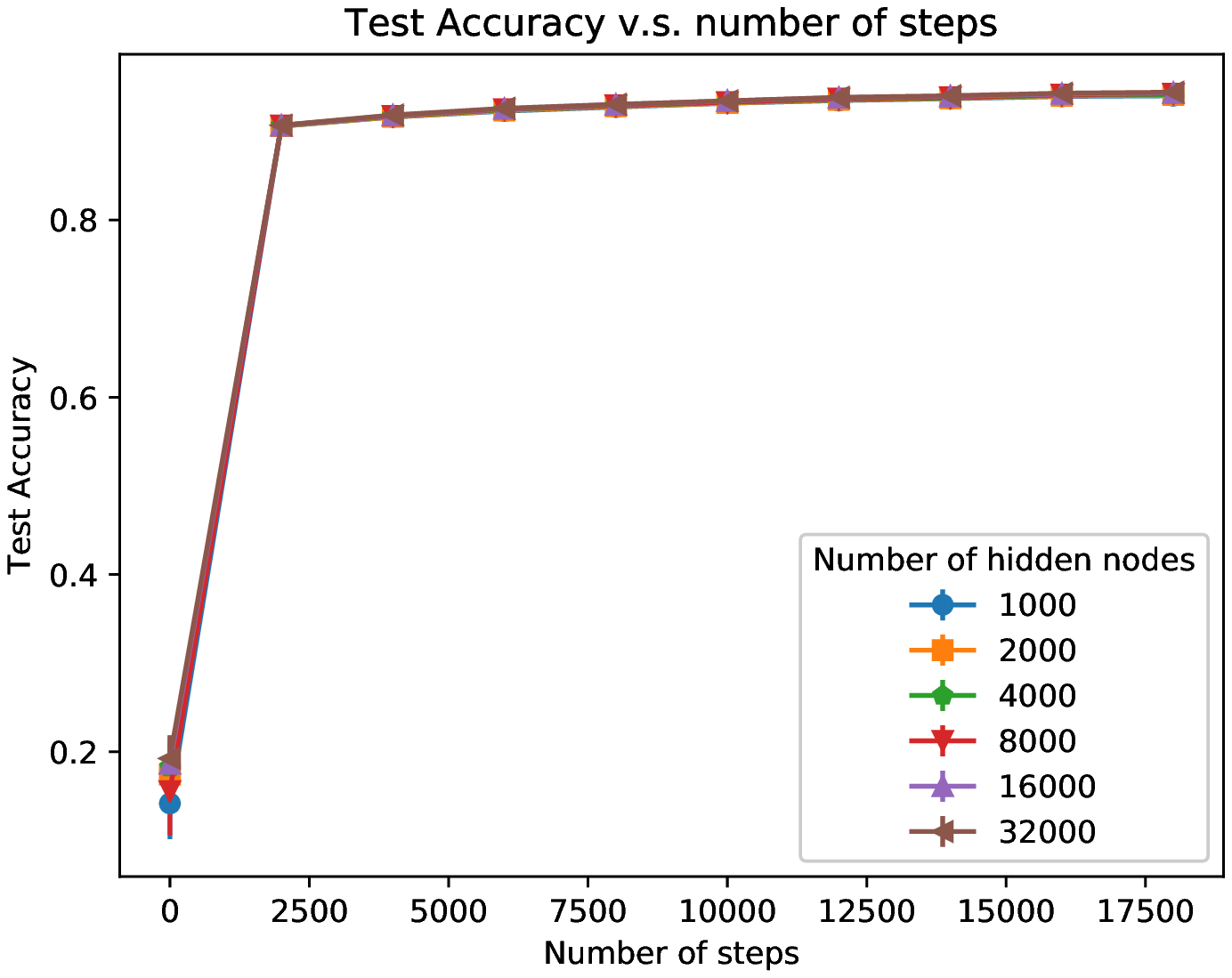}\label{fig:mnist_ls_ms20000_lr100_bm4000_acc}}
     \subfloat[][Coupling]{\includegraphics[width=\widthscale\linewidth]{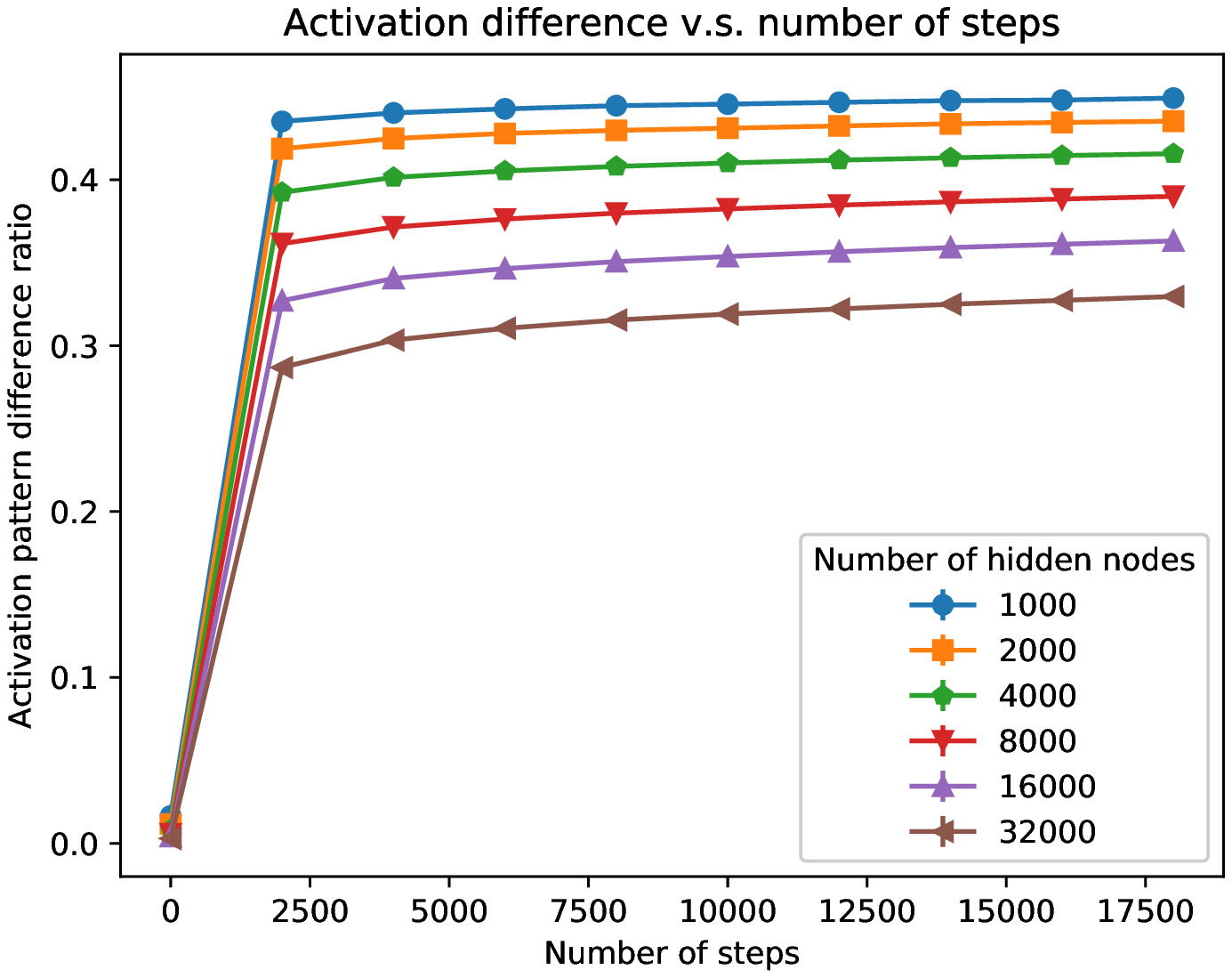}\label{fig:mnist_ls_ms20000_lr100_bm4000_act}}\\
		 \subfloat[][Distance from the initialization]{\includegraphics[width=\widthscale\linewidth]{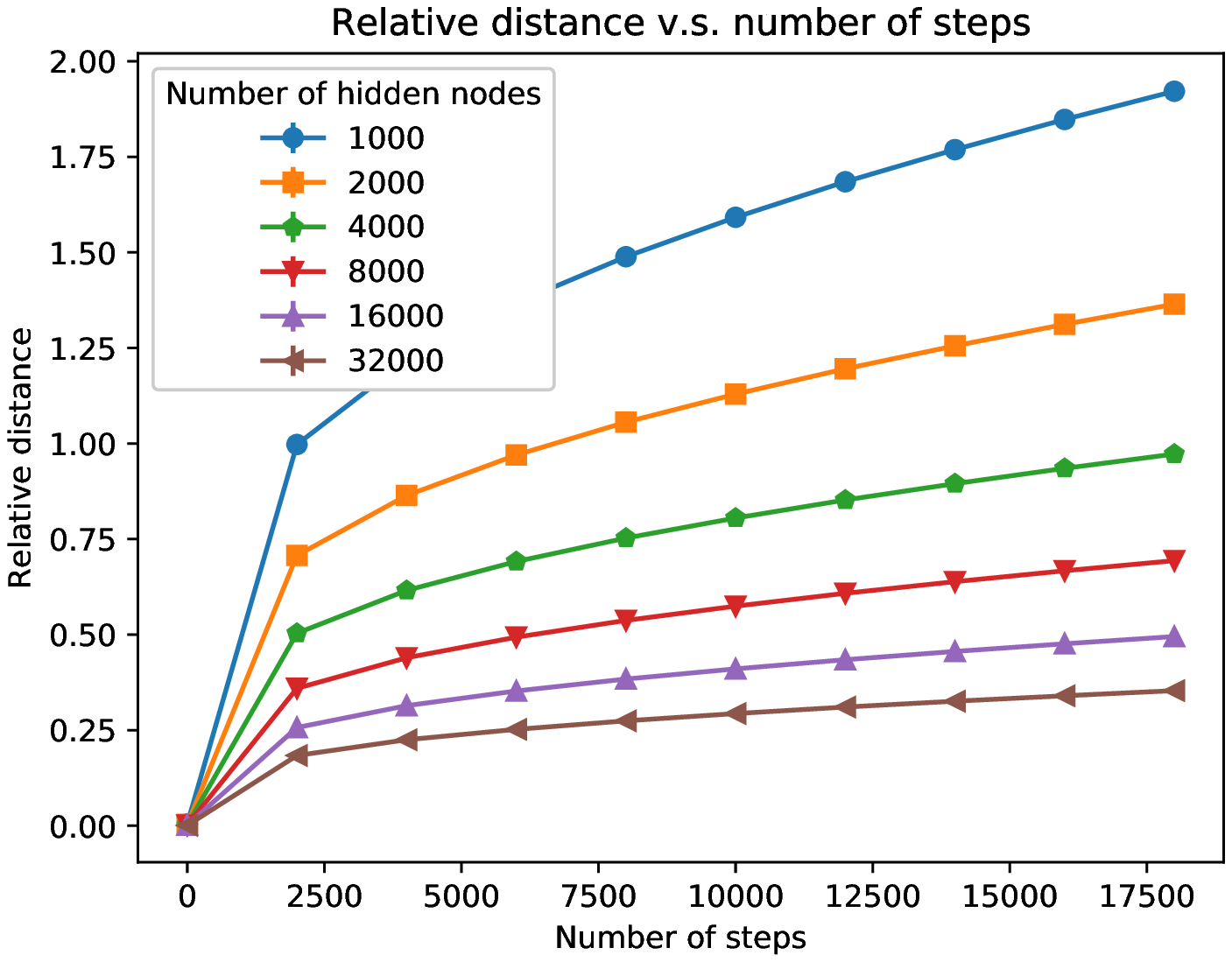}\label{fig:mnist_ls_ms20000_lr100_bm4000_rel_dist}}
		 \subfloat[][Rank of accumulated updates ($y$-axis in log-scale)]{\includegraphics[width=\widthscale\linewidth]{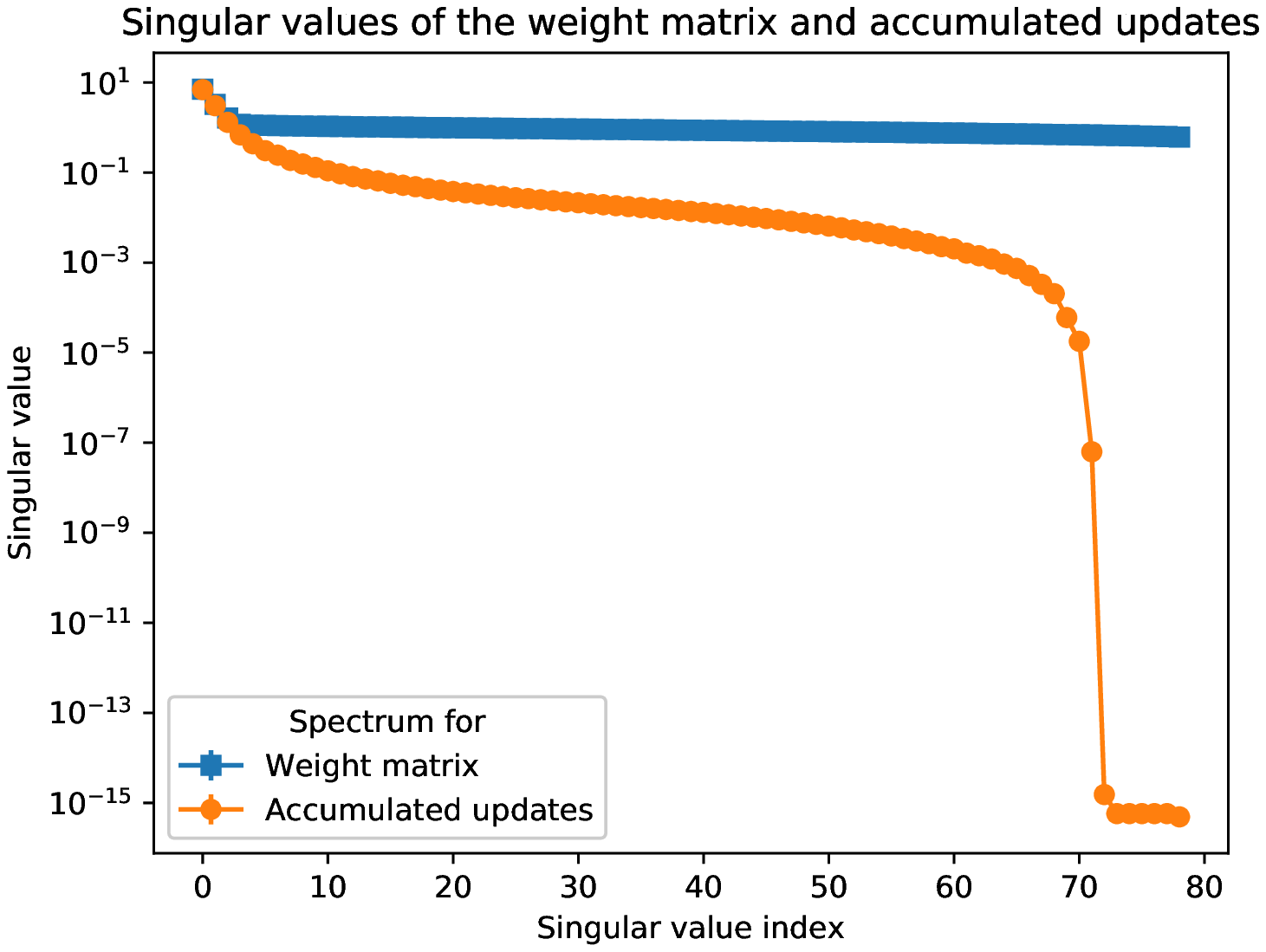}\label{fig:mnist_ls_ms20000_lr100_bm4000_rank_s_sd}}
     \caption{Results on the MNIST data.}
     \label{fig:mnist}
\end{figure}

This section aims at verifying some key implications:
(1) the activation patterns of the hidden units couple with those at initialization;
(2) The distance from the learned solution from the initialization is relatively small compared to the size of initialization;
(3) The accumulated updates (i.e., the difference between the learned weight matrix and the initialization) have approximately low rank. 
These are indeed supported by the results on the synthetic and the MNIST data. 
Additional experiments are presented in Appendix~\ref{app:exp}.

\noindent\textbf{Setup.}
The synthetic data are of 1000 dimension and consist of $k=10$ classes, each having $\ell=2$ components. Each component is of equal probability $1/(kl)$, and is a Gaussian with covariance $\sigma^2/d I$ and its mean is i.i.d.\ sampled from a Gaussian distribution $\mathcal{N}(0, \sigma^2_0/d)$, where $\sigma=1$ and $\sigma_0 = 5$. $1000$ training data points and $1000$ test data points are sampled. 

The network structure and the learning process follow those in Section~\ref{sec:preliminary}; the number of hidden units $m$ varies in the experiments, and the weights are initialized with $\mathcal{N}(0, 1/\sqrt{m})$. On the synthetic data, the SGD is run for $T=400$ steps with batch size $B=16$ and learning rate $\eta = 10/m$. On MNIST, the SGD is run for $T=2 \times 10^4$ steps with batch size $B=64$ and learning rate $\eta = 4 \times 10^2/m$. 

Besides the test accuracy, we report three quantities corresponding to the three observations/implications to be verified. First, for coupling, we compute the fraction of hidden units whose activation pattern changed compared to the time at initialization. Here, the activation pattern is defined as $1$ if the input to the ReLU is positive and $0$ otherwise. Second, for distance, we compute the relative ratio $\|w^{(t)} - w^{(0)}\|_F / \|w^{(0)}\|_F$, where $w^{(t)}$ is the weight matrix at time $t$. Finally, for the rank of the accumulated updates, we plot the singular values of $w^{(T)} - w^{(0)}$ where $T$ is the final step. All experiments are repeated 5 times, and the mean and standard deviation are reported. 

\noindent\textbf{Results.}
%
Figure~\ref{fig:gaussiankrd} shows the results on the synthetic data. The test accuracy quickly converges to $100\%$, which is even more significant with larger number of hidden units, showing that the overparameterization helps the optimization and generalization. Recall that our analysis shows that for a learning rate linearly decreasing with the number of hidden nodes $m$, the number of iterations to get the accuracy to achieve a desired accuracy should be roughly the same, which is also verified here. The activation pattern difference ratio is less than $0.1$, indicating a strong coupling. The relative distance is less than $0.1$, so the final solution is indeed close to the initialization. Finally, the top 20 singular values of the accumulated updates are much larger than the rest while the spectrum of the weight matrix do not have such structure, which is also consistent with our analysis.

Figure~\ref{fig:mnist} shows the results on MNIST. The observation in general is similar to those on the synthetic data (though less significant), and also the observed trend become more evident with more overparameterization. 
Some additional results (e.g., varying the variance of the synthetic data) are provided in the appendix that also support our theory.

\section{Conclusion} \label{sec:con}

This work studied the problem of learning a two-layer overparameterized ReLU neural network via stochastic gradient descent (SGD) from random initialization, on data with structure inspired by practical datasets.  While our work makes a step towards theoretical understanding of SGD for training neural networs, it is far from being conclusive. In particular, the real data could be separable with respect to different metric than $\ell_2$, or even a non-convex distance given by some manifold. We view this an important open direction.


\section*{Acknowledgements}
We would like to thank the anonymous reviewers of NeurIPS'18 and Jason Lee for helpful comments. 
This work was supported in part by FA9550-18-1-0166, 
NSF grants CCF-1527371, DMS-1317308, 
Simons Investigator Award, Simons Collaboration Grant, and ONR-N00014-16-1-2329. 
Yingyu Liang would also like to acknowledge that support for this research was provided by the Office of the Vice Chancellor for Research and Graduate Education at the University of Wisconsin Madison with funding from the Wisconsin Alumni Research Foundation.

\bibliographystyle{plain}
\bibliography{bibfile}

\begin{thebibliography}{10}

\bibitem{arora2018stronger}
Sanjeev Arora, Rong Ge, Behnam Neyshabur, and Yi~Zhang.
\newblock Stronger generalization bounds for deep nets via a compression
  approach.
\newblock {\em arXiv preprint arXiv:1802.05296}, 2018.

\bibitem{arpit2017closer}
Devansh Arpit, Stanis{\l}aw Jastrz{\k{e}}bski, Nicolas Ballas, David Krueger,
  Emmanuel Bengio, Maxinder~S Kanwal, Tegan Maharaj, Asja Fischer, Aaron
  Courville, Yoshua Bengio, et~al.
\newblock A closer look at memorization in deep networks.
\newblock {\em arXiv preprint arXiv:1706.05394}, 2017.

\bibitem{bartlett2017spectrally}
Peter~L Bartlett, Dylan~J Foster, and Matus~J Telgarsky.
\newblock Spectrally-normalized margin bounds for neural networks.
\newblock In {\em Advances in Neural Information Processing Systems}, pages
  6241--6250, 2017.

\bibitem{baykal2018data}
Cenk Baykal, Lucas Liebenwein, Igor Gilitschenski, Dan Feldman, and Daniela
  Rus.
\newblock Data-dependent coresets for compressing neural networks with
  applications to generalization bounds.
\newblock {\em arXiv preprint arXiv:1804.05345}, 2018.

\bibitem{boob2017theoretical}
Digvijay Boob and Guanghui Lan.
\newblock Theoretical properties of the global optimizer of two layer neural
  network.
\newblock {\em arXiv preprint arXiv:1710.11241}, 2017.

\bibitem{brutzkus2017globally}
Alon Brutzkus and Amir Globerson.
\newblock Globally optimal gradient descent for a convnet with gaussian inputs.
\newblock {\em arXiv preprint arXiv:1702.07966}, 2017.

\bibitem{brutzkus2017sgd}
Alon Brutzkus, Amir Globerson, Eran Malach, and Shai Shalev-Shwartz.
\newblock Sgd learns over-parameterized networks that provably generalize on
  linearly separable data.
\newblock {\em arXiv preprint arXiv:1710.10174}, 2017.

\bibitem{dinh2017sharp}
Laurent Dinh, Razvan Pascanu, Samy Bengio, and Yoshua Bengio.
\newblock Sharp minima can generalize for deep nets.
\newblock {\em arXiv preprint arXiv:1703.04933}, 2017.

\bibitem{dziugaite2017computing}
Gintare~Karolina Dziugaite and Daniel~M Roy.
\newblock Computing nonvacuous generalization bounds for deep (stochastic)
  neural networks with many more parameters than training data.
\newblock {\em arXiv preprint arXiv:1703.11008}, 2017.

\bibitem{ge2017learning}
Rong Ge, Jason~D Lee, and Tengyu Ma.
\newblock Learning one-hidden-layer neural networks with landscape design.
\newblock {\em arXiv preprint arXiv:1711.00501}, 2017.

\bibitem{gunasekar2017implicit}
Suriya Gunasekar, Blake~E Woodworth, Srinadh Bhojanapalli, Behnam Neyshabur,
  and Nati Srebro.
\newblock Implicit regularization in matrix factorization.
\newblock In {\em Advances in Neural Information Processing Systems}, pages
  6152--6160, 2017.

\bibitem{hardt2016identity}
Moritz Hardt and Tengyu Ma.
\newblock Identity matters in deep learning.
\newblock {\em arXiv preprint arXiv:1611.04231}, 2016.

\bibitem{kawaguchi2016deep}
Kenji Kawaguchi.
\newblock Deep learning without poor local minima.
\newblock In {\em Advances in Neural Information Processing Systems}, pages
  586--594, 2016.

\bibitem{keskar2016large}
Nitish~Shirish Keskar, Dheevatsa Mudigere, Jorge Nocedal, Mikhail Smelyanskiy,
  and Ping Tak~Peter Tang.
\newblock On large-batch training for deep learning: Generalization gap and
  sharp minima.
\newblock {\em arXiv preprint arXiv:1609.04836}, 2016.

\bibitem{lecun1998gradient}
Yann LeCun, L{\'e}on Bottou, Yoshua Bengio, and Patrick Haffner.
\newblock Gradient-based learning applied to document recognition.
\newblock {\em Proceedings of the IEEE}, 86(11):2278--2324, 1998.

\bibitem{li2017algorithmic}
Yuanzhi Li, Tengyu Ma, and Hongyang Zhang.
\newblock Algorithmic regularization in over-parameterized matrix recovery.
\newblock {\em arXiv preprint arXiv:1712.09203}, 2017.

\bibitem{li2017convergence}
Yuanzhi Li and Yang Yuan.
\newblock Convergence analysis of two-layer neural networks with relu
  activation.
\newblock In {\em Advances in Neural Information Processing Systems}, pages
  597--607, 2017.

\bibitem{livni2014computational}
Roi Livni, Shai Shalev-Shwartz, and Ohad Shamir.
\newblock On the computational efficiency of training neural networks.
\newblock In {\em Advances in Neural Information Processing Systems}, pages
  855--863, 2014.

\bibitem{ma2017implicit}
Cong Ma, Kaizheng Wang, Yuejie Chi, and Yuxin Chen.
\newblock Implicit regularization in nonconvex statistical estimation: Gradient
  descent converges linearly for phase retrieval, matrix completion and blind
  deconvolution.
\newblock {\em arXiv preprint arXiv:1711.10467}, 2017.

\bibitem{martens2010deep}
James Martens.
\newblock Deep learning via hessian-free optimization.
\newblock In {\em ICML}, volume~27, pages 735--742, 2010.

\bibitem{moustapha2017parseval}
Cisse Moustapha, Bojanowski Piotr, Grave Edouard, Dauphin Yann, and Usunier
  Nicolas.
\newblock Parseval networks: Improving robustness to adversarial examples.
\newblock {\em arXiv preprint arXiv:1704.08847}, 2017.

\bibitem{nagarajan2017implicit}
Vaishnavh Nagarajan and Zico Kolter.
\newblock Generalization in deep networks: The role of distance from
  initialization.
\newblock {\em NIPS workshop on Deep Learning: Bridging Theory and Practice},
  2017.

\bibitem{neyshabur2017pac}
Behnam Neyshabur, Srinadh Bhojanapalli, David McAllester, and Nathan Srebro.
\newblock A pac-bayesian approach to spectrally-normalized margin bounds for
  neural networks.
\newblock {\em arXiv preprint arXiv:1707.09564}, 2017.

\bibitem{neyshabur2014search}
Behnam Neyshabur, Ryota Tomioka, and Nathan Srebro.
\newblock In search of the real inductive bias: On the role of implicit
  regularization in deep learning.
\newblock {\em arXiv preprint arXiv:1412.6614}, 2014.

\bibitem{soltanolkotabi2017theoretical}
Mahdi Soltanolkotabi, Adel Javanmard, and Jason~D Lee.
\newblock Theoretical insights into the optimization landscape of
  over-parameterized shallow neural networks.
\newblock {\em arXiv preprint arXiv:1707.04926}, 2017.

\bibitem{soudry2016no}
Daniel Soudry and Yair Carmon.
\newblock No bad local minima: Data independent training error guarantees for
  multilayer neural networks.
\newblock {\em arXiv preprint arXiv:1605.08361}, 2016.

\bibitem{soudry2017implicit}
Daniel Soudry, Elad Hoffer, and Nathan Srebro.
\newblock The implicit bias of gradient descent on separable data.
\newblock {\em arXiv preprint arXiv:1710.10345}, 2017.

\bibitem{sutskever2013importance}
Ilya Sutskever, James Martens, George Dahl, and Geoffrey Hinton.
\newblock On the importance of initialization and momentum in deep learning.
\newblock In {\em International conference on machine learning}, pages
  1139--1147, 2013.

\bibitem{tian2017analytical}
Yuandong Tian.
\newblock An analytical formula of population gradient for two-layered relu
  network and its applications in convergence and critical point analysis.
\newblock {\em arXiv preprint arXiv:1703.00560}, 2017.

\bibitem{xie2016diversity}
Bo~Xie, Yingyu Liang, and Le~Song.
\newblock Diversity leads to generalization in neural networks.
\newblock {\em arXiv preprint Arxiv:1611.03131}, 2016.

\bibitem{zhang2016understanding}
Chiyuan Zhang, Samy Bengio, Moritz Hardt, Benjamin Recht, and Oriol Vinyals.
\newblock Understanding deep learning requires rethinking generalization.
\newblock {\em arXiv preprint arXiv:1611.03530}, 2016.

\bibitem{zhong2017recovery}
Kai Zhong, Zhao Song, Prateek Jain, Peter~L Bartlett, and Inderjit~S Dhillon.
\newblock Recovery guarantees for one-hidden-layer neural networks.
\newblock {\em arXiv preprint arXiv:1706.03175}, 2017.

\bibitem{zhou2018compressibility}
Wenda Zhou, Victor Veitch, Morgane Austern, Ryan~P Adams, and Peter Orbanz.
\newblock Compressibility and generalization in large-scale deep learning.
\newblock {\em arXiv preprint arXiv:1804.05862}, 2018.

\end{thebibliography}

\newpage
\appendix

\section{Proofs for the Simplified Case} \label{app:proof_simple}

In the simplified case, we make the following simplifying assumption: 
\begin{enumerate}
\item[\textbf{(S)}]  (No variance) Each $\mathcal{D}_{a, b}$ is a single data point $(x_{a, b}, a)$, and also we are doing full batch gradient descent as opposite to the minibatch SGD. 
\end{enumerate}
Recall that the loss is then
$
 L(w) = \sum_{a \in [k], b \in [l]} p_{a, b}L(w, x_{a, b}, a).
$ 
The gradient descent update on $w$ is given by
\begin{align*}
w^{(t + 1)}_r = w^{(t)}_r - \eta \frac{\partial L(w^{(t)}) }{\partial w^{(t)}_r},
\end{align*}
and the gradient is
\begin{align*}
\frac{\partial L(w)}{\partial w_{r}} & = \sum_{a \in [k], b \in [l]} p_{a,b} \left(\sum_{i \not= a} a_{i, r} o_i(x_{a,b}, w) - \sum_{i \not= a} a_{a, r} o_i(x_{a,b}, w)  \right)  1_{\langle w_{ r} , x_{a,b} \rangle \geq 0} x_{a,b},
\end{align*}
where $o_{y}(x, w) = \frac{ e^{f_{y}(x, w)}}{\sum_{i = 1}^k e^{f_i (x, w)}}$.
The pseudo gradient is defined as
\begin{align*}
\frac{\tilde \partial L(w)}{\partial w_{r}} & = \sum_{a \in [k], b \in [l]} p_{a,b} \left(\sum_{i \not= a} a_{i, r} o_i(x_{a,b}, w) - \sum_{i \not= a} a_{a, r} o_i(x_{a,b}, w)  \right)  1_{\langle w^{(0)}_{ r} , x_{a,b} \rangle \geq 0} x_{a,b}.
\end{align*}

Let us call 
$$
v_{s, a, b}(w) = 
\left\{ \begin{array}{ll}
        \frac{\sum_{i \not= a} e^{f_{i} (x_{a, b}, w)}}{\sum_{i = 1}^k e^{f_{i} (x_{a, b}, w)}}   & \mbox{if $s = a$};\\
        -   \frac{e^{f_{s} (x_{a, b}, w)}}{\sum_{i = 1}^k e^{f_{i} (x_{a, b}, w)}}  & \mbox{otherwise}.
\end{array} \right.
$$ 
When clear from the context, we write $v_{s, a, b}(w)$ as $v_{s, a, b}$. 
Then we can simplify the above expression as: 
\begin{align*}
\frac{\tilde \partial L(w)}{\partial w_{r}} &= \sum_{a \in [k], b \in [l], i \in [k]} p_{a, b}a_{i, r} v_{i, a, b} 1_{\left\langle w_{ r}^{(0)} , x_{a, b} \right\rangle \geq 0} x_{a, b}.
\end{align*}
By definition, $v_{i, a, b}$'s satisfy: 
\begin{enumerate}
\item $\forall a \in [k], b \in [l]: v_{a, a, b } \in [0, 1]$.
\item $\sum_{i = 1}^k v_{i, a, b } = 0$.
\end{enumerate}
Furthermore, $v_{a, a, b}$ indicates the ``classification error''. The smaller $v_{a, a, b}$ is, the smaller the classification error is. 

In the following subsections, we first show that the gradient is coupled with the pseudo gradient, then show that if the classification error is large then the pseudo gradient is large, and finally prove the convergence. 

%
%
%
%
%
%
%

\subsection{Coupling}

We will show that $\partial L(w^{(t)}) / \partial w_r$ is close to $\tilde \partial L(w^{(t)}) / \partial w_r$ in the following sense: 

\begin{lemma}[Coupling, Lemma~\ref{lem:coupling_maintext} restated]\label{lem:coupling}

W.h.p. over the random initialization, for every $\tau > 0$, for every $t = \tilde{O}\left( \frac{\tau}{ \eta}\right) $, we have that for at least $1 - \frac{e \tau kl}{ \sigma}$ fraction of $r \in [m]$:  
\begin{align*}
 \frac{\partial L(w^{(t)})}{\partial w_r} =  \frac{\tilde\partial L(w^{(t)})}{\partial w_r}.
\end{align*}

\end{lemma}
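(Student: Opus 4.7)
The two gradients differ at neuron $r$ only through the activation indicators: the true gradient has $1_{\langle w_r^{(t)},x_{a,b}\rangle\ge 0}$ where the pseudo gradient has $1_{\langle w_r^{(0)},x_{a,b}\rangle\ge 0}$. Hence $\partial L(w^{(t)})/\partial w_r=\tilde\partial L(w^{(t)})/\partial w_r$ whenever the signs of $\langle w_r^{(\cdot)},x_{a,b}\rangle$ agree at times $0$ and $t$ for every $(a,b)\in[k]\times[l]$. So it suffices to upper bound the fraction of $r\in[m]$ for which at least one such sign flips.

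The plan has three steps. First, I bound the displacement $\|w_r^{(t)}-w_r^{(0)}\|_2$. Since $|v_{s,a,b}|\le 1$, $\|x_{a,b}\|_2=1$, and a standard sub-Gaussian bound gives $\max_{i,r}|a_{i,r}|=\tilde O(1)$ w.h.p.\ at initialization, each gradient satisfies $\|\partial L(w)/\partial w_r\|_2 = \tilde O(k)$; summing over $t$ steps of size $\eta$ yields $\|w_r^{(t)}-w_r^{(0)}\|_2 \le \eta t\cdot \tilde O(k) = \tilde O(\tau)$ whenever $t=\tilde O(\tau/\eta)$. By Cauchy–Schwarz, $|\langle w_r^{(t)}-w_r^{(0)},x_{a,b}\rangle|\le \tilde O(\tau)$ for every $(a,b)$. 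Consequently, a sign flip for $(r,(a,b))$ forces the ``bad event'' $E_{r,a,b}:=\{|\langle w_r^{(0)},x_{a,b}\rangle|\le C\tau\}$ for an absolute constant $C$.

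Second, I estimate $\Pr[E_{r,a,b}]$. Since $w_r^{(0)}\sim\mathcal N(0,\sigma^2 I)$ and $\|x_{a,b}\|_2=1$, the inner product $\langle w_r^{(0)},x_{a,b}\rangle$ is $\mathcal N(0,\sigma^2)$, and Gaussian anti-concentration gives $\Pr[E_{r,a,b}]\le C\tau\sqrt{2/\pi}/\sigma \le e\tau/\sigma$ (choosing $C$ to match the constant in the lemma). A union bound over the $kl$ pairs yields $\Pr[\exists (a,b): E_{r,a,b}]\le e\tau kl/\sigma$.

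Third, I turn this per-neuron probability into a statement about the empirical fraction. The crucial point is that across $r$ the initializations $w_r^{(0)}$ are independent, and the displacement bound above is a deterministic consequence of the high-probability event $\max_{i,r}|a_{i,r}|=\tilde O(1)$. Conditioning on this event, the indicators ``some $(a,b)$ flips for neuron $r$'' are independent Bernoullis each bounded by $e\tau kl/\sigma$, so a Chernoff bound guarantees that at most an $e\tau kl/\sigma$ (plus lower-order) fraction of the $r\in[m]$ experience any flip, with probability $1-\exp(-\Omega(m\cdot e\tau kl/\sigma))$, which is $1-\mathrm{poly}^{-1}$ for $m$ sufficiently large. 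I expect the main obstacle to be step one: formally controlling the per-step gradient norm \emph{throughout} the trajectory (rather than only at initialization), since $|a_{i,r}|$ is held fixed but the output logits $f_i(x,w)$ could in principle grow along the trajectory; one must observe that $|v_{s,a,b}|\le 1$ holds uniformly because it comes from softmax probabilities, so the displacement bound does not compound, and this closes the argument.
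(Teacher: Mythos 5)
Your strategy is the same as the paper's: bound the per-step gradient norm to control the displacement $\|w_r^{(t)}-w_r^{(0)}\|_2$, observe that a sign flip at $(r,a,b)$ then requires $|\langle w_r^{(0)},x_{a,b}\rangle|$ to be small, apply Gaussian anti-concentration for that inner product (which is $\mathcal{N}(0,\sigma^2)$), and union-bound over the $kl$ pairs. The Chernoff step you add to pass from the per-neuron probability to a statement about the empirical fraction over $r$ is a point the paper glosses over, and you argue correctly that the indicators are independent across $r$ because they depend only on the $w_r^{(0)}$'s.

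There is one genuine slip, and it is consequential for the constant in the lemma. You bound the per-neuron gradient norm as $\tilde{O}(k)$, but the correct bound is $\tilde{O}(1)$: the coefficient in (\ref{eqn:gradient}) is
$
\left|\sum_{i\neq y_s} a_{i,r}\,o_i - a_{y_s,r}\sum_{i\neq y_s}o_i\right| \le 2\max_i|a_{i,r}|
$
because the softmax outputs satisfy $\sum_i o_i = 1$, $o_i\ge 0$, so $\sum_{i\neq y_s}o_i\le 1$. You appear to have bounded each $o_i$ by $1$ separately, which is where the spurious factor of $k$ enters. This matters: with a $\tilde{O}(k)$ bound, the displacement over $t=\tilde{O}(\tau/\eta)$ steps is $\tilde{O}(k\tau)$, not $\tilde{O}(\tau)$ as you assert (the $k$ cannot be absorbed by $\tilde{O}$, which only hides polylogs). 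That in turn forces the bad-event threshold to be $C\tau$ with $C=\tilde{O}(k)$, giving $\Pr[E_{r,a,b}]=\tilde{O}(k)\tau/\sigma$ and, after the union bound, a fraction $\tilde{O}(k^2l)\tau/\sigma$ — off by a factor of $k$ from the lemma's $e\tau kl/\sigma$. Replacing $\tilde{O}(k)$ with $\tilde{O}(1)$ (as the paper does, setting $L=\tilde{O}(1)$ from the sub-Gaussian bound on $|a_{i,r}|$ and using $\sum_i o_i=1$) closes the gap and recovers the stated constant, and then the rest of your argument goes through.
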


\begin{proof}
W.h.p. we know that every $|a_{i, r}| \leq L = \tilde{O}(1) $. Thus, for every $r \in [m]$ and every $t \geq 0$, we have 
\begin{align*}
\left\| \frac{\partial L(w^{(t)})}{\partial w_r} \right\|_2 \leq L
\end{align*}
which implies that $\left\|w^{(t)}_r - w^{(0)}_r\right\|_2 \leq L \eta t$.

Now, for every $\tau \geq 0$, we consider the set $\set{H}$ such that 
\[
  \set{H} = \left\{ r \in [m] \mid \forall a \in [k], b \in [l]: \left| \langle  w_r^{(0)}, x_{a, b} \rangle\right|  \geq \tau \right\}.
\]

For every $r \in \set{H}$ and every $t \leq \frac{\tau}{2 L \eta}$, we know that for every $a \in [k] , b \in [l]$: 
\begin{align*}
 \left|\left\langle  w_r^{(t)} - w_r^{(0)}, x_{a, b} \right\rangle \right| \leq  L \eta t \leq \frac{\tau}{2} 
\end{align*}
which implies that
\begin{align*}
1_{\left\langle  w_r^{(0)}, x_{a, b} \right\rangle  \geq 0 } = 1_{\left\langle  w_r^{(t)}, x_{a, b} \right\rangle  \geq 0 }.
\end{align*}
This implies that  $ \frac{\partial L(w^{(t)})}{\partial w_r} =  \frac{\tilde\partial L(w^{(t)})}{\partial w_r}
$. 

Now, we need to bound the size of $\set{H}$. Since $\left\langle  w_r^{(0)}, x_{a, b} \right\rangle \sim \mathcal{N}(0, \sigma^2)$, by standard property of Gaussian we directly have that for $|\set{H}| \geq 1 - \frac{e \tau kl}{ \sigma}$. 
\end{proof}

\subsection{Error Large $\implies$ Gradient Large} 

The pseudo gradient can be rewritten as the following summation: 
\begin{align*}
\frac{\tilde\partial L(w)}{\partial w_{r}}   = \sum_{i \in [k]} a_{i, r} P_{i, r}
\end{align*}
where 
\begin{align*}
P_{i, r} = \sum_{a \in [k], b \in [l]} p_{a, b} v_{i, a, b} 1_{\left\langle w_{ r}^{(0)} , x_{a, b} \right\rangle \geq 0} x_{a, b}.
\end{align*}

We would like to show that if some $p_{a,b}v_{i,a,b}$ is large, a good fraction of $r \in [m]$ will have large pseudo gradient. 
Now, the first step is to show that for any fixed $\{p_{a, b} v_{i, a, b}\}$ (that does not depend on the random initialization $w^{(0)}_r$), with good probability (over the random choice of $w_{r}^{(0)}$) we have that $P_{i, r}$ is large; see Lemma~\ref{lem:g_relu}. Then we will take a union bound over an epsilon net on $\{ p_{a, b} v_{i, a, b}\}$ to show that for every $\{ p_{a, b} v_{,i a, b}\}$ (that can depend on $w_r^{(0)}$), at least a good fraction of of $P_{i,  r}$ is large; See Lemma~\ref{lem:v_g}.

\begin{lemma}[The geometry of $\ReLU$]\label{lem:g_relu}

For any possible fixed $\{p_{a, b}v_{1, a, b}\}_{a \in [k], b \in [l]} \in [-v, v]$ such that $p_{1, 1}v_{1, 1, 1} = v$, we have:
\begin{align*}
\Pr\left[ \left\|P_{1, r} \right\|_2 = \tilde{\Omega}\left(    \frac{ v \delta }{kl}  \right)\right] = \Omega\left(\frac{\delta }{kl} \right).
\end{align*}
\end{lemma}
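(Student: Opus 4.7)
The plan is to use Gaussian anti-concentration together with a reflection coupling. First, I orthogonally decompose $w_r^{(0)} = \alpha\, x_{1,1} + w^\perp$, where $\alpha = \langle w_r^{(0)}, x_{1,1}\rangle \sim \mathcal{N}(0,\sigma^2)$ is independent of $w^\perp \in x_{1,1}^\perp$. I consider the ``thin-slab'' event $\mathcal{E} := \{\alpha \in [0, \tau\sigma]\}$ with $\tau = \Theta(\delta/(kl))$; the Gaussian density at $0$ gives $\Pr[\mathcal{E}] = \Theta(\tau) = \Omega(\delta/(kl))$, which already matches the probability in the conclusion. On $\mathcal{E}$, I pair $w := w_r^{(0)}$ with its reflection $w' := w - 2\alpha\, x_{1,1}$. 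This map is an orthogonal reflection across the hyperplane $x_{1,1}^\perp$, hence measure-preserving under the isotropic Gaussian, and it sends $\mathcal{E}$ to the mirror event $\{\alpha \in [-\tau\sigma,0]\}$.

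I then lower bound $\|P_{1,r}(w) - P_{1,r}(w')\|_2$: the triangle inequality gives a matching lower bound on $\max(\|P_{1,r}(w)\|_2,\|P_{1,r}(w')\|_2)$, and measure preservation transfers the probability bound back to $w$. The ``signal'' is $v\, x_{1,1}$, coming from the fact that the $(1,1)$ activation turns off between $w$ and $w'$. The ``noise'' is from activation flips at $(a,b) \neq (1,1)$, each of which requires $|\langle w^\perp, x_{a,b}\rangle| \leq \tau\sigma\, |\langle x_{1,1}, x_{a,b}\rangle|$. For cross-class points ($a \neq 1$), Assumption (A1) forces $\langle x_{1,1}, x_{a,b}\rangle \leq 1 - \delta^2/2$, so the orthogonal component of $x_{a,b}$ (equivalently, the standard deviation of $\langle w^\perp, x_{a,b}\rangle$) is of order $\sigma\delta$ outside a degenerate regime. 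Gaussian anti-concentration then gives a per-point flip probability of $O(\tau/\delta)$; a union bound over $\leq kl$ points yields total flip probability $O(kl\tau/\delta) = O(1)$, which is at most $1/2$ by choosing $\tau = c\delta/(kl)$ for a small enough constant $c$. For same-class points ($a=1,\, b \neq 1$), $v_{1,1,b} \geq 0$, and a case analysis on $\sign(\langle x_{1,b}, x_{1,1}\rangle)$ shows that any flipped contribution adds non-negatively to the $x_{1,1}$-projection of the difference.

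On the resulting sub-event of $\mathcal{E}$ of probability $\Omega(\delta/(kl))$, I deduce $\langle P_{1,r}(w) - P_{1,r}(w'), x_{1,1}\rangle \geq v$, hence $\|P_{1,r}(w) - P_{1,r}(w')\|_2 \geq v$; combining with the triangle inequality and the measure-preserving reflection gives $\Pr[\|P_{1,r}\|_2 \geq v/2] \geq \Omega(\delta/(kl))$, which implies the claimed $\tilde\Omega(v\delta/(kl))$ bound. The main technical obstacle will be the ``near-antipodal'' cross-class points, i.e., $x_{a,b}$ with $\langle x_{1,1}, x_{a,b}\rangle$ close to $-1$, for which $\langle w^\perp, x_{a,b}\rangle$ has too small a variance for the crude anti-concentration above. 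The saving observation is that in this regime $x_{a,b} \approx -x_{1,1}$ and $p_{a,b}v_{1,a,b} \leq 0$, so the contribution of such a point (when activated) is approximately a non-negative multiple of $x_{1,1}$; a refined case analysis then shows that this effect only helps, rather than hurts, the desired lower bound on the norm.
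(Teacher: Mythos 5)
Your reflection coupling is a genuinely different route from the paper's. The paper uses the same decomposition $w_r^{(0)} = \alpha x_{1,1}+\beta$, but then conditions on an event where $|\alpha|\leq\tau$ \emph{and} none of the cross-class activations change as $\alpha$ ranges over $[-\tau,\tau]$; on that event $h(\alpha)=\langle P_{1,r},w_r^{(0)}\rangle$ is convex in $\alpha$ with a kink of size $\geq v$ at $0$, and the paper finishes by the non-smooth-vs.-linear anti-concentration lemma followed by a Gaussian tail bound to pass from $|h|$ large to $\|P_{1,r}\|_2$ large. You replace both of these steps by a single deterministic reflection estimate, and (modulo the gap below) your version would give the quantitatively stronger conclusion $\|P_{1,r}\|_2=\Omega(v)$ rather than $\tilde\Omega(v\delta/(kl))$, since you avoid the $\tau/\sigma$ loss from anti-concentration and the $\sqrt{\log}$ loss from the tail bound; you also sidestep the subtlety that $P_{1,r}$ itself depends on $w_r^{(0)}$ through the indicator.

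However, the near-antipodal case is a genuine gap and the fix you sketch does not close it. Suppose $x_{a,b}$ with $a\neq 1$ has $\langle x_{a,b},x_{1,1}\rangle=-1+c$ with $c$ tiny (allowed by \textbf{(A1)}, which only forces $\langle x_{a,b},x_{1,1}\rangle\leq 1-\delta^2/2$). Then $\langle w^\perp,x_{a,b}\rangle$ has standard deviation $\sigma\sqrt{2c-c^2}$, which can be far below $\tau\sigma$, so the activation at $(a,b)$ flips between $w$ and $w'$ with probability near $1$. When it flips, its contribution to $\langle P_{1,r}(w)-P_{1,r}(w'),x_{1,1}\rangle$ is $p_{a,b}v_{1,a,b}\,(1_w-1_{w'})\,\langle x_{a,b},x_{1,1}\rangle$, and on the slab $\alpha>0$ all three factors have definite sign: $p_{a,b}v_{1,a,b}\leq 0$; $1_w-1_{w'}=-1$ because $\langle x_{a,b},x_{1,1}\rangle<0$ makes the unit turn \emph{off} going from $w'$ to $w$; and $\langle x_{a,b},x_{1,1}\rangle<0$. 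The product is therefore $\leq 0$, with magnitude up to $|p_{a,b}v_{1,a,b}|\,|\langle x_{a,b},x_{1,1}\rangle|\approx|p_{a,b}v_{1,a,b}|$, which can be as large as $v$ and hence exactly cancel the $+v$ gained from the $(1,1)$ flip. Your ``saving observation'' — that such a point, when active, adds a nonnegative multiple of $x_{1,1}$ to $P_{1,r}$ — is correct for $P_{1,r}(w')$ alone, but when you take the difference $P_{1,r}(w)-P_{1,r}(w')$, which is what your argument is built on, the sign reverses and this term hurts; and you give no way to extract a norm lower bound on one of $P_{1,r}(w)$, $P_{1,r}(w')$ directly. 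To make the reflection argument go through you would need either an additional separation from $-x_{1,1}$ (e.g.\ $\|x_{a,b}+x_{1,1}\|\geq\delta$ for cross-class pairs), or a genuinely different treatment of the near-antipodal contributions.
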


Clearly, without $\ReLU, P_{1, r}$ can be arbitrarily small if, say, $\forall b \in [l], v_{1, 1,  b} = v, p_{1, b} = p$ and $\sum_{b \in [l]} x_{1, b}  = 0$. However, $\ReLU$ would prevent the cancellation of those $x_{1, b}$'s. 

\begin{proof}[Proof of Lemma~\ref{lem:g_relu}]
We will first prove that 
\begin{align*}
h\left(w_{r}^{(0)}\right) = \sum_{a \in [k], b \in [l]} p_{a, b} v_{1, a, b} \ReLU\left(\left\langle w_{ r}^{(0)} , x_{a, b} \right\rangle \right) = \langle P_{1, r}, w_r^{(0)} \rangle
\end{align*} 
is large with good probability. 

Let us decompose $w_{r}^{(0)}$ into: 
\begin{align*}
w_{r}^{(0)} = \alpha x_{1, 1} + \beta
\end{align*}

where $\beta \bot x_{1, 1}$.  For every $\tau \geq 0$, consider the event $\set{E}_{\tau}$ defined as
\begin{enumerate}
\item $\left| \alpha \right|\leq \tau$, and
\item for all $a\in[k] \backslash [1], b\in [l]$: $\left|\left\langle \beta, x_{a, b} \right\rangle \right|\geq 4 \tau$.
\end{enumerate}

By the definition of initialization $w_{r}^{(0)}$, we know that: 
\begin{align*}
\alpha \sim \mathcal{N}(0, \bsigma^2)
\end{align*} 

and 
\begin{align*}
\langle \beta, x_{a, b} \rangle \sim \mathcal{N}(0, ( 1 - \langle x_{a, b}, x_{1, 1} \rangle^2 ) \sigma^2)
\end{align*}

By assumption we know that for every $a\in[k] \backslash [1], b\in [l]$: 
$$
1 - \langle x_{a, b}, x_{1, 1} \rangle^2 \geq \delta^2.
$$
This implies that 
$$
 \Pr\left[\left|\left\langle \beta, x_{a, b} \right\rangle \right| \leq 4\tau \right] \leq \frac{4 e \tau}{\delta \sigma}.
$$ 
Thus if we pick $\tau \leq \frac{\delta \sigma}{16e kl}$, taking a union bound we know that 
\begin{align*}
 \Pr\left[ \forall a\in[k] \backslash [1], b\in [l] : \left|\left\langle \beta, x_{a, b} \right\rangle \right|\geq 4 \tau\right] \geq \frac{1}{2}.
\end{align*}

Moreover, since $\Pr[\left| \alpha \right|\leq \tau ] \geq \frac{\tau}{e \sigma} $ and $\alpha$ is independent of $\beta$, we know that $\Pr[\set{E}_{\tau}]   \geq \frac{\tau }{16 e^2 \sigma}$.

The following proof will conditional on this event $\set{E}_{\tau}$, and then treat $\beta$ as fixed and let $\alpha$ be the only random variable. In this way, we will have: for every $\alpha$ such that $|\alpha |\leq \tau$ and for every $a\in[k] \backslash [1], b\in [l]$, since $| \langle \beta, x_{a, b} \rangle| \geq 4 \tau$ and $|\alpha \langle x_{1, 1},  x_{a, b} \rangle| \leq \tau$, 
\begin{align*}
 \ReLU\left(\left\langle w_{ r}^{(0)} , x_{a, b} \right\rangle \right) =  \left(\alpha \langle x_{1, 1},  x_{a, b} \rangle + \langle \beta, x_{a, b} \rangle\right)1_{\langle \beta, x_{a, b} \rangle \geq 0}
 \end{align*}
which is a linear function of $\alpha$. 
With this information, we can rewrite $h\left(w_{r}^{(0)}\right)$ as: 
\begin{align*}
h\left(w_{r}^{(0)}\right) = h(\alpha) & := p_{1, 1 } v_{1, 1, 1} \ReLU(\alpha) \\ 
& +  \sum_{b \in [l] \backslash [1]} p_{1, b}v_{1, 1, b}   \ReLU\left(\alpha \langle x_{1, 1},  x_{1, b} \rangle  + \langle \beta, x_{a, b} \rangle \right) \\
& + \textsf{Linear}(\alpha)
\end{align*}
where $p_{1, b}v_{1, 1, b} \geq 0$ and $\textsf{Linear}(\alpha)$ is some linear function in $\alpha$. Thus, we know that 
\begin{align*}
\phi(\alpha) := p_{1, 1 } v_{1, 1, 1} \ReLU(\alpha) +  \sum_{b \in [l] \backslash [1]} p_{1, b}v_{1, 1, b}   \ReLU\left(\alpha \langle x_{1, 1},  x_{1, b} \rangle\right)  
\end{align*}
is a convex function with $\left|\partial_{\max} \phi (0) - \partial_{\min} \phi (0) \right| \geq  v$. Then applying Lemma \ref{lem:non_smooth_linear} gives
 \begin{align*}
\Pr_{\alpha \sim U(-\tau, \tau)}\left[\left| \phi(\alpha) + \textsf{Linear}(\alpha) \right|  \geq \frac{v \tau}{128}\right] \geq \frac{1}{16}.
\end{align*}
Since for $\tau \leq  \frac{\delta \sigma}{16e kl}$, conditional on $\set{E}_{\tau}$ the density $p(\alpha) \in \left[ \frac{1}{e \tau}, \frac{e}{\tau} \right]$, which implies that 
\begin{align*}
\Pr\left[h\left(w_{r}^{(0)}\right) \geq \frac{v \tau}{128} \mid \set{E}_{\tau}\right] \geq \frac{1}{16e}.
\end{align*}
Thus we have:
\begin{align}\label{eq:relu_geo1}
\Pr\left[h\left(w_{r}^{(0)}\right) \geq \frac{v \tau}{128} \right] \geq \Pr\left[h\left(w_{r}^{(0)}\right) \geq \frac{v \tau}{128}  \mid \set{E}_{\tau} \right] \Pr[\set{E}_{\tau}] = \Omega\left( \frac{\tau}{\sigma} \right).
\end{align}

Now we can look at $P_{1, r}$. By the random initialization of $w_{r}^{(0)}$, and since by our assumption $v_{1, a, b}, x_{a,b}$ are not functions of $w_r^{(0)}$, a standard tail bound of Gaussian random variables shows that  for every fixed $v_{1, a, b}$ and every $c > 10$:
\begin{align*}
\Pr\left[h\left(w_{r}^{(0)}\right)  \geq 10 c \sigma  \| P_{1, r}\|_2 \right] &= \Pr\left[ \left\langle P_{1, r} , w_{r}^{(0)} \right\rangle \geq 10 c \sigma  \| P_{1, r}\|_2 \right] 
\\
& \leq e^{- c^2}.
\end{align*}

%

Taking $c = 100 \sqrt{\log \frac{kl }{\delta \sigma}}$ and putting together with inequality~\eqref{eq:relu_geo1} with $\tau = \Theta\left( \frac{\delta \sigma}{kl}\right)$ complete the proof. 
\end{proof}

Now, we can take an epsilon net and switch the order of the quantifiers in Lemma \ref{lem:g_relu} as shown in the following lemma.

\begin{lemma}[Lemma~\ref{lem:v_g_maintext} restated]\label{lem:v_g}
For $m = \tilde{\Omega} \left( \frac{k^3 l^2}{\delta } \right)$, for every $\{ p_{a, b}v_{i, a, b}\}_{i, a \in [k], b \in [l] } \in [-v, v]$ (that depends on $w_r^{(0)}, a_{i, r}$, etc.) with $\max \{ p_{a, b}v_{i, a, b}\}_{i, a \in [k], b \in [l] } = v$, there exists at least $\Omega( \frac{\delta }{kl})$ fraction of $r \in [m]$ such that 
\begin{align*}
\left\|\frac{\tilde\partial L(w)}{\partial w_{r}}  \right\|_2 =   \tilde{\Omega}\left(    \frac{ v \delta }{kl}  \right).
\end{align*}
\end{lemma}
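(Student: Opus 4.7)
The plan is to lift the pointwise bound of Lemma \ref{lem:g_relu} (which assumes the coefficients $\{p_{a,b} v_{1,a,b}\}$ are deterministic and controls a single neuron with probability over $w_r^{(0)}$) to a statement uniform over arbitrary, possibly initialization-dependent coefficient tensors, and amplified to a fraction-of-$r$ guarantee. The three tools I will combine are: an $\epsilon$-net over the coefficient tensor, Gaussian anti-concentration to pass from $\|P_{i^*,r}\|_2$ to $\|\tilde\partial L/\partial w_r\|_2$, and a Chernoff bound across the $m$ independent neurons.

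First I guess, by union bound over $k^2 l$ triples, which triple $(i^*,a^*,b^*)$ attains $p_{a^*,b^*} v_{i^*,a^*,b^*} = v$; thereafter I may treat the distinguished coordinate as fixed. Next I build an $\ell_\infty$-net $\mathcal{N} \subseteq [-v,v]^{k^2 l}$ of resolution $\epsilon_0 = \Theta(v\delta/(k^3 l^2))$ and logarithmic size $\tilde{O}(k^2 l)$, and arrange that each net point carries the value $v$ at the distinguished coordinate. The pseudo gradient is $\tilde{O}(k^2 l)$-Lipschitz in the coefficients in $\ell_\infty$ (using $\|x_{a,b}\|_2 = 1$ and $|a_{i,r}| = \tilde{O}(1)$ w.h.p.), so once the bound is established at every net point, snapping to the nearest net point costs a perturbation dominated by the target $\tilde{\Omega}(v\delta/(kl))$.

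For a single fixed $u \in \mathcal{N}$ and a single index $r$, I will show that the initialization $(w_r^{(0)}, a_{\cdot, r})$ makes $r$ ``good'' with probability $\Omega(\delta/(kl))$, in two steps:
\begin{enumerate}
\item[(i)] By (a relabeled version of) Lemma \ref{lem:g_relu}, over the randomness of $w_r^{(0)}$ alone, $\|P_{i^*, r}(u)\|_2 = \tilde{\Omega}(v\delta/(kl))$ with probability $\Omega(\delta/(kl))$.
\item[(ii)] Conditioning on that event, the vectors $\{P_{i,r}(u)\}_{i \in [k]}$ are deterministic functions of $w_r^{(0)}$; since $a_{1,r},\ldots,a_{k,r} \sim \mathcal{N}(0,1)$ are independent of $w_r^{(0)}$, the pseudo gradient with coefficients $u$ is a Gaussian vector $\sum_i a_{i,r} P_{i,r}(u)$. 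Its projection onto the unit vector $P_{i^*,r}(u)/\|P_{i^*,r}(u)\|_2$ is a one-dimensional centered Gaussian of variance at least $\|P_{i^*,r}(u)\|_2^2$, so by standard Gaussian anti-concentration its magnitude exceeds $\tfrac{1}{2}\|P_{i^*,r}(u)\|_2$ with constant probability, forcing $\|\sum_i a_{i,r} P_{i,r}(u)\|_2 = \tilde{\Omega}(v\delta/(kl))$.
\end{enumerate}
Multiplying the two independent probabilities gives the per-$r$ bound. Since the pairs $(w_r^{(0)}, a_{\cdot, r})$ for different $r$ are independent, a Chernoff bound yields that the fraction of good $r$'s is $\Omega(\delta/(kl))$ except with failure probability $\exp(-\Omega(m\delta/(kl)))$. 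Union bounding over the $k^2 l$ triples and over $|\mathcal{N}|$ net points imposes $m \gtrsim (kl/\delta)\log|\mathcal{N}| = \tilde{\Omega}(k^3 l^2/\delta)$, matching the hypothesis; transferring from net points to arbitrary coefficient tensors via the Lipschitz estimate then closes the proof.

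The main obstacle is the circular dependence between the coefficients and the randomness: the values $p_{a,b} v_{i,a,b}$ are softmax outputs of the very network whose weights $a_{i,r}$ I want to randomize over, so a direct application of Gaussian anti-concentration is not valid. Freezing the coefficients to a net point $u$ breaks this coupling and is precisely what makes step (ii) legitimate; the $\epsilon$-net must then be fine enough that the Lipschitz slack is absorbed into the target bound, while still coarse enough that the union bound only forces $m = \tilde{\Omega}(k^3 l^2/\delta)$.
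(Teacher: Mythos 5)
Your proposal is correct and follows essentially the same route as the paper's proof: freeze the coefficient tensor on an $\ell_\infty$ $\epsilon$-net, use the lemma on the geometry of $\ReLU$ to lower-bound $\|P_{i^*,r}\|_2$ over the randomness of $w_r^{(0)}$, use Gaussian anti-concentration of $a_{\cdot,r}$ (independent of $w_r^{(0)}$) to transfer the bound from $P_{i^*,r}$ to $\sum_i a_{i,r}P_{i,r}$, then amplify over $m$ independent neurons via Chernoff and union-bound over the net. One small point in your favor: your net resolution $\epsilon_0 = \Theta(v\delta/(k^3 l^2))$ correctly accounts for the $\tilde{O}(k^2 l)$ Lipschitz constant of the pseudo gradient in the coefficients, whereas the paper's display writes the perturbation slack as $\tilde{O}(\epsilon)$ and takes $\epsilon = \tilde{\Theta}(v\delta/(kl))$, which as stated is off by a factor of $k^2 l$; the correction only changes logarithmic factors and leaves the requirement $m = \tilde{\Omega}(k^3 l^2/\delta)$ unchanged.
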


This lemma implies that if the classification error is large, then many $w_r$ will have a large gradient.

\begin{proof}[Proof of Lemma \ref{lem:v_g}]

We first consider fixed $\{ p_{a, b} v_{i, a, b}\}_{i, a \in [k], b \in [l] } \in [-v, v]$. First of all, using the randomness of $a_{i, r}$ we know that with probability at least $1/e$,
\begin{align*}
 \left\|\frac{\tilde\partial L(w)}{\partial w_{r}}  \right\|_2 = \left\|\sum_{i = 1}^k a_{i, r}P_{i, r} \right\|_2 \geq \| P_{1, r} \|_2.
\end{align*}

Now, apply Lemma~\ref{lem:g_relu} we know that 
\begin{align*}
\Pr\left[\left\|P_{1, r} \right\|_2 =   \tilde{\Omega}\left(    \frac{ v \delta }{kl}  \right) \right] = \Omega\left( \frac{\delta }{kl} \right)
\end{align*}
which implies that for fixed $\{ p_{a, b} v_{i, a, b}\}_{i, a \in [k], b \in [l] } \in [-v, v]$ the probability that there are less than $O( \frac{\delta }{kl})$ of $r$ such that $\left\|\sum_{i = 1}^k P_{i, r} \right\|_2$ is $  \tilde{\Omega}\left(    \frac{ v \delta }{kl}  \right) $  is no more than  a value $p_{fix}$ given by: 
\begin{align*}
p_{fix} \leq \exp \left\{- \Omega \left(  \frac{ \delta m}{kl} \right) \right\}.
\end{align*}

Moreover, for every $\veps  > 0$, for two different $\{ p_{a, b} v_{i, a, b}\}_{i, a \in [k], b \in [l] }, \{ p_{a, b} v_{i, a, b}'\}_{i, a \in [k], b \in [l] } \in [-v, v]$  such that for all $i \in [k], a \in [k], b \in [l]$: $| p_{a, b}v_{i, a, b} -p_{a, b} v_{i, a , b}| \leq \veps$. Moreover, since  w.h.p.\ we know that every $|a_{i, r}| \leq L = \tilde{O}(1) $, it shows:
\begin{align*}
\left\| \sum_{a \in [k], b \in [l], i \in [k]} p_{a, b}a_{i, r} ( v_{i, a, b}  - v_{i, a, b}') 1_{\left\langle w_{ r}^{(0)} , x_{a, b} \right\rangle \geq 0} x_{a, b}  \right\|_2 \leq L\veps = \tilde{O}(\veps)
\end{align*}

which implies that we can take an $\ell_{\infty}$ $\veps$-net over $\{ p_{a, b} v_{i, a, b}\}_{i, a \in [k], b \in [l] } \in [-v, v]$ with $\veps =  \tilde{\Theta}\left(    \frac{ v \delta }{kl}  \right)$. 
Thus, the probability that  there exists $\{ p_{a, b} v_{i, a, b}\}_{i, a \in [k], b \in [l] } \in [-v, v]$, such that there are  no more than  $O( \frac{\delta }{kl})$ fraction of $r \in [m]$with $\left\|\sum_{i = 1}^k P_{i, r} \right\|_2 =  \tilde{\Omega}\left(    \frac{ v \delta }{kl}  \right)$  is no more than:
\begin{align*}
p \leq p_{fix} \left(\frac{v}{\veps} \right)^{k^2 l} \leq  \exp \left\{- \Omega \left(  \frac{\delta  m}{kl} \right)  + k^2 l \log \frac{v}{\veps}\right\}.
\end{align*}

With $m = \tilde{\Omega} \left( \frac{k^3 l^2}{\delta } \right)$ we complete the proof. 
\end{proof}

\subsection{Convergence} 
Having the lemmas, we can now prove the convergence: 

\begin{lemma}[Convergence]\label{lem:convergence}
Let us denote $\max \{ {p_{a, b} v^{(t)}_{i, a, b}} \} = v^{(t)}$. Then for a sufficiently small $\eta$, we have that for every $T = \tilde{\Theta} \left( \frac{\sigma \delta }{kl \eta}  \right)$, 
\begin{align*}
\frac{1}{T} \sum_{t = 1}^T \left( v^{(t)} \right)^2 =  \tilde{O}\left(  \frac{k^5 l^5}{\delta^4 \sigma m} \right).
\end{align*}
\end{lemma}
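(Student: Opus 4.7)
I would prove this via the standard descent-lemma strategy: establish a per-iteration decrease $L(w^{(t+1)}) \le L(w^{(t)}) - \tfrac{\eta}{2}\|\nabla L(w^{(t)})\|_2^2$, lower bound $\|\nabla L(w^{(t)})\|_2^2$ in terms of $v^{(t)}$, and telescope using $L(w^{(0)}) = O(\log k)$ and $L(w^{(T)}) \ge 0$. To lower bound the true gradient, I would pick the coupling parameter $\tau$ in Lemma~\ref{lem:coupling} small enough that the uncoupled fraction $e\tau kl/\sigma$ is at most half of the ``large pseudo-gradient'' fraction $\Omega(\delta/(kl))$ guaranteed by Lemma~\ref{lem:v_g}, i.e.\ $\tau = \tilde\Theta\!\left(\sigma\delta/(kl)^2\right)$. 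This ensures that for every iteration $t$ with $t \le T = \tilde\Theta(\tau/\eta) = \tilde\Theta(\sigma\delta/(kl\eta))$ (absorbing polylog and polynomial factors into $\tilde\Theta$), an $\Omega(\delta/(kl))$-fraction of indices $r \in [m]$ simultaneously satisfy $\partial L/\partial w_r = \tilde\partial L/\partial w_r$ and $\|\tilde\partial L/\partial w_r\|_2 = \tilde\Omega(v^{(t)}\delta/(kl))$. Summing these contributions gives
\[
\|\nabla L(w^{(t)})\|_2^2 \;=\; \sum_{r=1}^m \left\|\frac{\partial L(w^{(t)})}{\partial w_r}\right\|_2^2 \;\ge\; \tilde\Omega\!\left(\frac{m\,(v^{(t)})^2\,\delta^3}{k^3 l^3}\right).
\]

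\textbf{Descent step and the main obstacle.} The technical heart of the argument is the one-step descent inequality for a sufficiently small step size $\eta = 1/\bigl(m\cdot\mathrm{poly}(k,l,1/\delta,\log m)\bigr)$. This is the main obstacle, because $L(w)$ is only piecewise smooth: on any region of fixed ReLU activation patterns $f$ is linear in $w$ and the softmax cross-entropy loss has Hessian of spectral norm $\tilde O(m)$, but activation-pattern changes produce kinks in the gradient. I plan to handle this in the same spirit as Lemma~\ref{lem:coupling}: a single update satisfies $\|\eta\,\partial L/\partial w_r\|_2 \le \tilde O(\eta)$, so by Gaussian anticoncentration only a $\tilde O(\eta/\sigma)$ fraction of neurons flip any activation pattern during the step; on those neurons the per-neuron change in $L$ is bounded crudely by $\tilde O(\eta^2)$ using the Lipschitz continuity of $\ReLU$, while on the remaining neurons the ordinary Taylor bound $\tfrac{1}{2}\eta^2\,\tilde O(m)\,\|\nabla L\|_2^2$ applies because $f$ is linear in $w$ along that segment. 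Combined, these give $L(w^{(t+1)}) - L(w^{(t)}) \le -\eta\,\|\nabla L(w^{(t)})\|_2^2 + O(\eta^2 m)$, which for the stipulated $\eta$ yields the desired one-sided descent.

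\textbf{Telescoping and conclusion.} Summing the descent inequality over $t = 0,\ldots,T-1$ and using $L(w^{(T)}) \ge 0$ together with $L(w^{(0)}) = O(\log k) = \tilde O(1)$ gives
\[
\frac{\eta}{2}\sum_{t=0}^{T-1}\tilde\Omega\!\left(\frac{m\,(v^{(t)})^2\,\delta^3}{k^3 l^3}\right) \;\le\; L(w^{(0)}) - L(w^{(T)}) \;=\; \tilde O(1).
\]
Rearranging for $\sum_t (v^{(t)})^2$ and dividing through by $T = \tilde\Theta(\sigma\delta/(kl\eta))$ produces the claimed bound $\frac{1}{T}\sum_t (v^{(t)})^2 = \tilde O\!\left(k^5 l^5/(\sigma m \delta^4)\right)$, where the factor of $kl$ beyond a naive count absorbs the slack from the coupling/pseudo-gradient intersection step. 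Once the gradient lower bound and the descent inequality are in place, the rest is routine bookkeeping; the genuinely delicate piece, as noted, is justifying the descent step in the presence of ReLU non-smoothness via the per-neuron activation-flip argument above.
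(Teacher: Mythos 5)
Your proposal matches the paper's proof essentially step for step: pick $\tau = \tilde\Theta\bigl(\sigma\delta/(k^2l^2)\bigr)$ so the uncoupled fraction from Lemma~\ref{lem:coupling} is dominated by the $\Omega(\delta/(kl))$ fraction from Lemma~\ref{lem:v_g}, lower bound the true gradient by $\tilde\Omega\bigl(m(v^{(t)})^2\delta^3/(k^3l^3)\bigr)$, and telescope a descent inequality over $T = \tilde\Theta(\tau/\eta)$ steps. The paper intentionally leaves the one-step descent at "for a sufficiently small $\eta$" in this simplified case (deferring the non-smoothness analysis to the general proof via Lemma~\ref{lem:nonsmooth_grad}), whereas you sketch it explicitly; the only slips are that your stated smooth Taylor remainder $O(\eta^2 m)$ should be $\tilde O(\eta^2 m^2)$ (or $\tilde O(\eta^2 m\|\nabla L\|^2)$) and that $\tau/\eta = \sigma\delta/(k^2l^2\eta)$, not $\sigma\delta/(kl\eta)$ — though the paper's own lemma statement carries the same $kl$-vs-$k^2l^2$ discrepancy relative to its proof, so this does not count against you.
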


By our choice of $\sigma = \tilde{O} \left( \frac{1}{m^{1/2}} \right)$, we know that 
\begin{align*}
\frac{1}{T} \sum_{t = 1}^T \left( v^{(t)} \right)^2 = \tilde{O}\left(  \frac{k^5 l^5}{\delta^4  m^{1/2}} \right)
\end{align*}

Thus, this lemma shows that eventually $v^{(t)}$ will be small. However, we do not give any bound on how small the step size $\eta$ needs to be, and how a small $v^{(t)}$ leads to a small classification error.
These are addressed in the proof of the general case in the next section, but here we are content with an eventually small $v^{(t)}$ for a sufficiently small $\eta$.

\begin{proof}[Proof of Lemma~\ref{lem:convergence}]

By Lemma \ref{lem:v_g}, we know that there are at least $\Omega\left(\frac{\delta}{kl} \right)$ fraction of $r \in [m]$ such that 
\begin{align*}
\left\|   \frac{\tilde\partial L(w^{(t)})}{\partial w_r} \right\|_2 = \tilde{\Omega}\left(    \frac{ v^{(t)} \delta }{kl}  \right).
\end{align*}
   
Now combine with Lemma~\ref{lem:coupling}. If we pick $\tau = O\left(   \frac{\sigma  \delta }{k^2l^2} \right)$, then at least $\Omega\left(\frac{\delta}{kl} \right)$ fraction of $r \in [m]$ have   
\begin{align*}
\left\|   \frac{\partial L(w^{(t)})}{\partial w_r} \right\|_2 = \tilde{\Omega}\left(    \frac{ v^{(t)} \delta }{kl}  \right).
   \end{align*}

Thus, for a sufficiently small $\eta$, we have: 
\begin{align*}
L(w^{(t)}) - L(w^{(t + 1)}) =  \eta  \tilde{\Omega}\left(    \left( \frac{ v^{(t)} \delta }{kl}  \right)^2  \frac{\delta m}{kl}  \right).
\end{align*}

By the property of the initialization, we know that $L(w^{(0)}) = \tilde{O}(1)$. This implies that for every $t = \tilde{O}\left( \frac{\tau}{ \eta}\right) = \tilde{O} \left( \frac{\sigma  \delta }{k^2l^2 \eta}  \right) $ we have: 
\begin{align*}
\sum_{s = 1}^t \left( v^{(s)} \right)^2  = \tilde{O}\left(  \frac{k^3 l^3}{\delta^3  \eta m} \right).
\end{align*}

Now, we can take $T = \tilde{\Theta} \left( \frac{\sigma   \delta }{k^2l^2 \eta}  \right)$ to obtain  
\begin{align*}
\frac{1}{T} \sum_{t = 1}^T \left( v^{(t)} \right)^2 =  \tilde{O}\left(  \frac{k^5 l^5}{\delta^4 \sigma m} \right).
\end{align*}
This completes the proof.
\end{proof}

\subsection{Technical Lemmas}
The following lemma above non-smooth convex function v.s. linear function is needed in the proof.
\begin{lemma}\label{lem:non_smooth_linear}
Let $\phi: \mathbb{R} \to \mathbb{R}$ be a convex function that is non-smooth at $0$. Let $\partial \phi (0)$ be the set of partial gradient of $\phi$ at $0$. Define
\begin{align*}
\partial_{\max} \phi (0) = \max \{ \partial \phi (0) \} , \quad \partial_{\min} \phi (0) = \min \{ \partial \phi (0)\}. 
\end{align*}

We have for every $\tau \geq 0$, for every linear function $l(\alpha)$: 
\begin{align*}
\int_{-\tau}^{\tau} | \phi(\alpha) - l(\alpha) | d \alpha \geq \frac{\tau^2 (\partial_{\max} \phi (0)  - \partial_{\min} \phi (0) )}{8}.
\end{align*}

Moreover, 
\begin{align*}
\Pr_{\alpha \sim U(-\tau, \tau)}\left[  | \phi(\alpha) - l(\alpha) | \geq \frac{\tau (\partial_{\max} \phi (0)  - \partial_{\min} \phi (0) )}{128} \right] \geq \frac{1}{16}.
\end{align*}
\end{lemma}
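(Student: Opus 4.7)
The plan is to fix an arbitrary linear $l$ and work with $h(\alpha) = \phi(\alpha) - l(\alpha)$, which is convex on $\mathbb{R}$ with a kink at $0$ of size at least $g := \partial_{\max}\phi(0) - \partial_{\min}\phi(0)$ (a linear shift preserves the one-sided derivatives and hence the kink). The key starting inequality is obtained by adding the two subgradient inequalities at $0$: for every $\alpha > 0$,
$$ h(\alpha) + h(-\alpha) \geq 2h(0) + g\alpha. $$
Integrating over $\alpha \in [0,\tau]$ gives the base estimate $\int_{-\tau}^\tau h(\beta)\,d\beta \geq g\tau^2/2 + 2\tau h(0)$.

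For Part~(a), if $h(0) \geq -3g\tau/16$ then already $\int|h| \geq \int h \geq g\tau^2/8$. Otherwise, write $A = -h(0) > 3g\tau/16$. On the interval $I = \{h \leq 0\} \cap [-\tau,\tau]$, the function $-h$ is concave with $-h(0) = A$, so by a triangle/chord bound $\int_I(-h) \geq A \cdot \mathrm{len}(I)/2$. Moreover, convexity forces the zeros $-\beta_-, \beta_+$ of $h$ to satisfy $A/\beta_+ + A/\beta_- \geq g$ (from the kink constraint on the slopes from $0$ to the zeros), and on the region $\{h \geq 0\}$ one has $h(\alpha) \geq (A/\beta_+)(\alpha - \beta_+)$ by convexity and the same slope bound. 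Combining $\int|h| \geq A(\beta_-+\beta_+)/2 + (A/\beta_+)(\tau-\beta_+)^2/2 + (A/\beta_-)(\tau-\beta_-)^2/2$ and optimizing over admissible $\beta_\pm$, one obtains $\int|h| \geq 2(\sqrt{2}-1)A\tau > g\tau^2/8$ whenever $A > 3g\tau/16$. The degenerate case where $h \leq 0$ on all of $[-\tau,\tau]$ is handled separately: the kink and the condition $h(\pm\tau)\leq 0$ force $A \geq g\tau/2$, and then the chord bound alone gives $\int|h| \geq A\tau \geq g\tau^2/2$.

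For Part~(b), I argue by contradiction. Suppose $|S| < \tau/8$ where $S = \{\alpha \in [-\tau,\tau] : |h(\alpha)| \geq g\tau/128\}$. By Part~(a) and the trivial bound $\int_{S^c}|h| < (g\tau/128)\cdot 2\tau = g\tau^2/64$, we get $\int_S|h| > 7g\tau^2/64$, whence averaging yields some $\alpha^*\in S$ with $|h(\alpha^*)| > 7g\tau/8$. Since $h$ is convex, $\{h \geq t\}$ is the complement of an interval, so $h$ is monotone past $\alpha^*$: either $h\geq h(\alpha^*) > g\tau/128$ on $[\alpha^*,\tau]$ or on $[-\tau,\alpha^*]$. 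Either option forces $|S| \geq \tau/8$ unless $\alpha^*$ lies within $\tau/8$ of the boundary. In the remaining case $|\alpha^*| > 7\tau/8$, the set $S^c$ has length $> 15\tau/8$ and structurally consists of at most two intervals, hence must contain points at distance at least $7\tau/8$ from $0$ on both sides whenever $0 \in S^c$. Convexity then yields $h'_+(0) \leq 2(g\tau/128)/(7\tau/8) = g/56$ and $h'_-(0) \geq -g/56$, so the kink is at most $g/28 < g$, a contradiction.

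The main obstacle will be the subcase where $0$ itself lies in the deep-negative region $\{h \leq -g\tau/128\}$, i.e., $|h(0)|$ is large. Here the previous slope bounds degrade linearly in $|h(0)|$ and do not immediately give the contradiction. I would handle this by exploiting the concavity of $-h$ on the valley $J = \{h \leq -g\tau/128\}$: the valley is contained in $S$ and so has length $|J| < \tau/8$, but the chord/triangle bound in $J$ together with $-h(0) = |h(0)|$ produces $\int_J|h| \geq (v-u)(|h(0)|/2 + g\tau/256)$. Combining this contribution with the kink inequality $(|h(0)| - g\tau/128)(1/v + 1/|u|) \geq g$ (derived from slopes at $0$ using the boundary values $h(u) = h(v) = -g\tau/128$ of the valley) produces a lower bound on $\int_S|h|$ incompatible with $|S| < \tau/8$ and the bound $\int_S|h| \geq 7g\tau^2/64$ via a direct calculation, restoring the contradiction.
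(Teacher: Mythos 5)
Your route is genuinely different from the paper's, which normalizes $l$ to a constant $-b$ with $\phi(0)=0$, observes $\phi(\alpha)\geq(\rho/2)\alpha$ for $\alpha>0$ after a WLOG reduction, and then \emph{exhibits} a subinterval of $[-\tau,\tau]$ of length $\geq\tau/4$ on which $|\phi-l|\geq\rho\tau/8$; this gives both (a) and (b) at once, with a cleaner constant ($1/8$ instead of $1/16$). You instead work with $h=\phi-l$, use the symmetric subgradient sum $h(\alpha)+h(-\alpha)\geq 2h(0)+g\alpha$, and for (b) argue by contradiction against $|S|<\tau/8$ using the large measure of $S^c$ to bound the one-sided derivatives at $0$. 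Both strategies are sound in spirit, and your part (a) is essentially correct (a small omission: you treat the degenerate case $h\leq 0$ on all of $[-\tau,\tau]$ but not the one-sided case where exactly one of $\beta_\pm$ exceeds $\tau$; that case is easily filled and in fact gives a better bound).

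There is, however, a real gap in your part (b) plan for the subcase $h(0)\leq -g\tau/128$. The ingredients you propose --- the triangle bound $\int_J|h|\geq(v-u)(|h(0)|/2+g\tau/256)$ and the kink inequality $(|h(0)|-g\tau/128)(1/v+1/|u|)\geq g$ --- do not, by themselves, contradict $|S|<\tau/8$ together with $\int_S|h|>7g\tau^2/64$. The kink inequality imposes no lower bound on $|h(0)|-g\tau/128$ once $v$ or $|u|$ is small, $\int_J|h|$ is merely a lower bound on a piece of $\int_S|h|$ (so it cannot clash with another lower bound), and $\int_{S_+}|h|$ has no a priori upper bound. The fix is to run the exact same chord-slope argument you already use when $0\in S^c$, but anchored at the valley endpoint rather than at $0$: since $J\subseteq S$ you have $0\leq v<|S|<\tau/8$, the measure count still produces some $\beta_+\in S^c$ with $\beta_+>7\tau/8$, and the chord from $v$ (where $h(v)=-g\tau/128$) to $\beta_+$ (where $|h(\beta_+)|<g\tau/128$) has slope $<\frac{g\tau/64}{7\tau/8-\tau/8}=g/48$; convexity gives $h'_+(0)\leq h'_-(v)\leq h'_+(v)\leq$ this slope, and symmetrically $h'_-(0)>-g/48$, so the kink is $<g/24<g$, the desired contradiction. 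This makes your deep-negative machinery unnecessary and renders part (b) uniform. Finally note that your reduction to $\alpha^*$ with $|h(\alpha^*)|>7g\tau/8$ only handles $h(\alpha^*)>0$; if $\alpha^*$ lands in the valley (so $h(\alpha^*)<0$), the monotonicity step does not apply, but the chord-slope argument above subsumes that case too.
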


\begin{proof}[Proof of Lemma~\ref{lem:non_smooth_linear}]

Without loss of generality (up to subtracting a linear function on $\phi$), let us assume that $ \phi (0) = 0$ and $l(\alpha)  = -b$. 

Moreover, denote $\rho = \partial_{\max} \phi (0)  - \partial_{\min} \phi (0) \geq 0$, we know that at least one of the following is true:
\begin{enumerate}
\item $\partial_{\max} \phi(0) \geq \frac{\rho}{2}$, 
\item $\partial_{\min} \phi(0) \leq - \frac{\rho}{2}$.
\end{enumerate}

We shall give the proof for the case $\partial_{\max} \phi(0) \geq \frac{\rho}{2}$. The other case follows from replacing $\phi$ with $- \phi$.

Let us then consider the following two cases.
\begin{enumerate}
\item $b> 0$, in this case, by convexity of $\phi(\alpha)$ we have that $\forall \alpha > 0: \phi(\alpha) > 0$. Thus, 
\begin{align*}
\int_{-\tau}^{\tau} | \phi(\alpha) - l(\alpha) | d \alpha \geq \int_{0}^{\tau} \phi(\alpha) d \alpha \geq \frac{\rho}{4} \tau^2
\end{align*}
\item $b< 0$, in this case, $\phi(\alpha)$ intersects with $0$ at a point $\alpha_0 \geq 0$. Consider two cases: 
\begin{enumerate}
\item $\alpha_0 \geq \frac{\tau}{2}$, then we have: $b \leq -\frac{\rho \tau }{4}$. Thus, 
\begin{align*}
\int_{-\tau}^{\tau} | \phi(\alpha) - l(\alpha) | d \alpha \geq \int_{0}^{\min\{\alpha_0, \tau\}} - \phi(\alpha) d \alpha \geq \frac{\rho}{8} \tau^2
\end{align*}
\item $\alpha_0 \leq  \frac{\tau}{2}$, then we have:
\begin{align*}
\int_{-\tau}^{\tau} | \phi(\alpha) - l(\alpha) | d \alpha \geq \int_{\alpha_0}^{\tau} \phi(\alpha) d \alpha \geq \frac{\rho}{8} \tau^2
\end{align*}
\end{enumerate}
\end{enumerate}

This completes the proof of the first claim. For the second claim, in case 1, we know that  every $\alpha \in [\tau/2, \tau]$ would have $  | \phi(\alpha) - l(\alpha) | \geq \frac{\tau \rho}{128}$. In case 2(a), every $\alpha \in  [0, \alpha_0 - \tau/4]$ satisfies this claim. In case 2(b) we can take every $\alpha \in  [\alpha_0 +\tau/4, \tau]$. This completes the proof.
\end{proof}

\section{Proofs for the General Case} \label{app:proof_general}

Recall that the loss is
\begin{align*}
L(w) = \frac{1}{N} \sum_{s = 1}^N L(w, x_s, y_s)
\end{align*}
where
\begin{align*}
L(w, x_s, y_s) & = - \log o_{y_s}(x_s, w), \text{~where~}\\
o_{y}(x, w) & = \frac{ e^{f_{y}(x, w)}}{\sum_{i = 1}^k e^{f_i (x, w)}}.
\end{align*}
We consider a minibatch SGD of batch size $B$, number of iterations $T = N/B$ and learning rate $\eta$ as the following process: Randomly divide the total training examples into $T$ batches, each of size $B$. Let the indices of the examples in the $t$-th batch be $\set{B}_t$.
The update rule is:
\begin{align*}
w^{(t + 1)}_r & = w^{(t)}_r - \eta \frac{1}{B} \sum_{s \in \set{B}_t}\frac{\partial L( w^{(t)} , x_s, y_s)}{\partial w_{r}^{(t)}}, \forall r \in [m], \text{~where~} \\
\frac{\partial L(w, x_s, y_s)}{\partial w_{r}} & = \left(    \sum_{i \not= y_s} a_{i, r} o_i(x_s, w)  - \sum_{i \not= y_s} a_{y_s, r} o_i(x_s, w) \right)  1_{\langle w_{ r} , x_s \rangle \geq 0} x_{s}. 
\end{align*}

The pseudo gradient on a point $(x_s, y_s)$ is defined as:
\begin{align*}
\frac{\tilde \partial L(w, x_s, y_s)}{\partial w_{r}} & = \left(    \sum_{i \not= y_s} a_{i, r} o_i(x_s, w)  - \sum_{i \not= y_s} a_{y_s, r} o_i(x_s, w) \right)  1_{\langle w^{(0)}_{ r} , x_s \rangle \geq 0} x_{s}. 
\end{align*}
The expected pseudo gradient is:
\begin{align*}
\frac{\tilde \partial L(w)}{\partial w_{r}} = \E_{(x_s,y_s)}\left[\frac{\tilde \partial L(w, x_s, y_s)}{\partial w_{r}} \right].
\end{align*}


In the following subsections, we first show that the gradient is coupled with the pseudo gradient, then show that if the classification error is large then the pseudo gradient is large, and finally prove the convergence.

\subsection{Coupling}

We have the following lemma for coupling, analog to Lemma~\ref{lem:coupling}. 

\begin{lemma}[Coupling] \label{lem:coup_full}
For every unit vector $x \in \mathbb{R}^d$, w.h.p.\ over the random initialization, for every $\tau > 0$, for every $t = \tilde{O}\left( \frac{\tau}{ \eta}\right) $ we have that for at least $1 - \frac{10 \tau}{ \sigma}$ fraction of $r \in [m]$:  
\begin{align*}
 \frac{\partial L(w^{(t)}, x, y)}{\partial w_r} =  \frac{\tilde \partial L(w^{(t)}, x, y)}{\partial w_r} (\forall y\in [k]), \quad \text{and} \quad | \langle w^{(t)}_r, x \rangle| \geq \tau.
\end{align*}
\end{lemma}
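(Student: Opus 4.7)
The plan is to mirror the coupling proof of Lemma~\ref{lem:coupling}, adapting it in two minor ways: (i) to a single arbitrary unit vector $x$ instead of the finite set of training inputs $\{x_{a,b}\}$, and (ii) to the minibatch SGD update rather than full-batch gradient descent. Since the lemma fixes $x$ before taking the probability over the initialization, no union bound over inputs is required; all randomness is absorbed into the initialization of $\{w_r^{(0)}, a_{i,r}\}$.

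First I would establish a uniform per-step movement bound. W.h.p.\ over initialization, every $|a_{i,r}| \le L = \tilde{O}(1)$. Using $o_i(x_s,w) \in [0,1]$ and $\sum_i o_i = 1$, together with $\|x_s\|_2 = 1$, each per-example gradient $\partial L(w, x_s, y_s)/\partial w_r$ has norm at most a constant multiple of $L$; averaging over the minibatch preserves this, so
\[
\bigl\| w_r^{(t+1)} - w_r^{(t)} \bigr\|_2 \;\le\; \eta L, \qquad \bigl\| w_r^{(t)} - w_r^{(0)} \bigr\|_2 \;\le\; \eta L t.
\]
This is a deterministic bound conditional on the high-probability event $\{|a_{i,r}| \le L\}$, and is precisely the analogue of the first step in Lemma~\ref{lem:coupling}.

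Next I would use Gaussian anti-concentration on $\langle w_r^{(0)}, x\rangle$. Since $\|x\|_2 = 1$, we have $\langle w_r^{(0)}, x\rangle \sim \mathcal{N}(0, \sigma^2)$ independently across $r$. The density bound gives $\Pr[|\langle w_r^{(0)}, x\rangle| < 2\tau] \le 4\tau/\sigma$ for each fixed $\tau$. A Chernoff bound over $r \in [m]$ then shows that the empirical fraction of neurons with $|\langle w_r^{(0)}, x\rangle| < 2\tau$ concentrates below $10\tau/\sigma$, provided $m$ is polynomially large. To get this uniformly in $\tau$, I would appeal to a DKW-type bound on the empirical CDF of $\{|\langle w_r^{(0)}, x\rangle|\}_{r\in[m]}$, which controls the fraction for every threshold simultaneously with failure probability $\exp(-\Omega(m))$. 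Let $\mathcal{H}_\tau \subseteq [m]$ denote the "good set" of $r$ with $|\langle w_r^{(0)}, x\rangle| \ge 2\tau$; then $|\mathcal{H}_\tau|/m \ge 1 - 10\tau/\sigma$ simultaneously for all $\tau$ w.h.p.

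Finally I would close the coupling. For any $t$ with $L\eta t \le \tau$, i.e.\ $t = \tilde{O}(\tau/\eta)$, and any $r \in \mathcal{H}_\tau$,
\[
\bigl| \langle w_r^{(t)} - w_r^{(0)}, x \rangle \bigr| \;\le\; \| w_r^{(t)} - w_r^{(0)} \|_2 \;\le\; \tau,
\]
so (i) $|\langle w_r^{(t)}, x\rangle| \ge |\langle w_r^{(0)}, x\rangle| - \tau \ge \tau$, and (ii) $1_{\langle w_r^{(t)}, x\rangle \ge 0} = 1_{\langle w_r^{(0)}, x\rangle \ge 0}$. The expressions for $\partial L(w^{(t)}, x, y)/\partial w_r$ and $\tilde\partial L(w^{(t)}, x, y)/\partial w_r$ differ only in this ReLU indicator, and the indicator is independent of the label $y$, so (ii) yields the claimed equality for every $y \in [k]$ at once.

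The argument is essentially routine once Lemma~\ref{lem:coupling} is in hand; the only substantive new point is the uniform-in-$\tau$ fraction bound in Step~2, which I expect to be the main (but still mild) obstacle. The switch from full-batch to minibatch SGD only enters through the trivial movement bound and causes no real difficulty, since averaging over a batch cannot increase the gradient norm beyond $\tilde{O}(1)$.
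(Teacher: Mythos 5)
Your proposal is correct and matches the paper's approach: the paper proves Lemma~\ref{lem:coup_full} simply by referencing the proof of Lemma~\ref{lem:coupling}, which proceeds exactly as you describe (movement bound $\|w_r^{(t)} - w_r^{(0)}\|_2 \le L\eta t$ from $|a_{i,r}| \le L$, Gaussian anti-concentration on $\langle w_r^{(0)}, x\rangle$, then closing the indicator coupling). Your observation that fixing $x$ before the probability removes the union bound over inputs is the right adaptation, the threshold bookkeeping ($2\tau$ initially so that $\tau$ survives after movement) is sound, and the DKW-type argument for uniformity in $\tau$ makes explicit a step the paper leaves implicit in its ``standard property of Gaussian'' remark.
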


\begin{proof}
The proof follows that for Lemma~\ref{lem:coupling}. 
\end{proof}

\subsection{Expected Error Large $\implies$ Gradient Large}

Following the same structure as before, we can write the expected pseudo gradient as:
\begin{align*}
\frac{\tilde\partial L(w)}{\partial \pi_{r}}   = \sum_{i \in [k]} a_{i, r} P_{i, r}
\end{align*}

where 
\begin{align*}
P_{i, r} = \sum_{a \in [k], b \in [l]} p_{a, b} \E_{x_{a, b} \sim \mathcal{D}_{a, b}}\left[v_{i, a, b}(x_{a, b}, w) 1_{\left\langle w_{ r}^{(0)} , x_{a, b} \right\rangle \geq 0} x_{a, b}\right]
\end{align*}

where $v_{s, a, b}(x_{a,b}, w)$ is defined as:
\begin{align*}
v_{s, a, b} (x_{a,b}, w)= \left\{ \begin{array}{ll}
        \frac{\sum_{i \not= a} e^{f_{i} (x_{a,b}, w)}}{\sum_{i = 1}^k e^{f_{i} (x_{a,b}, w)}}   & \mbox{if $s = a$};\\
        -   \frac{e^{f_{s} (x_{a,b}, w)}}{\sum_{i = 1}^k e^{f_{i} (x_{a,b}, w)}}  & \mbox{otherwise}.\end{array} \right.
\end{align*}
When clear from the context, we use $v_{s, a, b} (x_{a,b})$ for short.
When the choice of $x_{a,b}$ is not important, we will also use $v_{s, a, b}$.

We would like to show that if some $\E[p_{a,b}v_{i,a,b}]$ is large, a good fraction of $r \in [m]$ will have large pseudo gradient. 
Now, the first step is to show that for any fixed $\{p_{a, b} v_{i, a, b}\}$ (that does not depend on the random initialization $w^{(0)}_r$), with good probability (over the random choice of $w_{r}^{(0)}$) we have that $P_{i, r}$ is large; see Lemma~\ref{lem:g_relu_2}. Then we will take a union bound over an epsilon net on $\{ p_{a, b} v_{i, a, b}\}$ to show that for every $\{ p_{a, b} v_{,i a, b}\}$ (that can depend on $w_r^{(0)}$), at least a good fraction of of $P_{i,  r}$ is large; See Lemma~\ref{lem:v_g_2}.

\begin{lemma}[The geometry of $\ReLU$]\label{lem:g_relu_2}

For any possible fixed set $\{p_{a, b}v_{1, a, b}\}$ (that does not depend on $w_r^{(0)}$) such that $\E[p_{1, 1}v_{1, 1, 1}] = \max\{\E[p_{a, b}v_{1, a, b}]\}_{a\in [k], b \in [\ell]}= v$, we have:
\begin{align*}
\Pr\left[ \left\|P_{1, r} \right\|_2 = \tilde{\Omega}\left( \frac{ v \delta }{kl}   \right)\right] = \Omega\left(\frac{\delta }{kl}  \right).
\end{align*}
\end{lemma}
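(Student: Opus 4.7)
The plan is to follow the proof of Lemma~\ref{lem:g_relu}, extending each step to accommodate that each $\mathcal{D}_{a,b}$ is now a distribution of diameter at most $\lambda\delta$ rather than a single atom. First I would fix an arbitrary reference point $\bar{x}_{1,1}\in\text{supp}(\mathcal{D}_{1,1})$ and similarly $\bar{x}_{a,b}\in\text{supp}(\mathcal{D}_{a,b})$ for each other component, decompose $w_r^{(0)}=\alpha\bar{x}_{1,1}+\beta$ with $\beta\perp\bar{x}_{1,1}$, and consider the scalar function $h(\alpha):=\langle P_{1,r},w_r^{(0)}\rangle=\sum_{a,b}p_{a,b}\,\E_{x\sim\mathcal{D}_{a,b}}[v_{1,a,b}(x)\ReLU(\alpha\langle\bar{x}_{1,1},x\rangle+\langle\beta,x\rangle)]$. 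As in the simplified case, the end goal is to show $|h|=\tilde{\Omega}(v\tau)$ with probability $\Omega(\delta/(kl))$ over $w_r^{(0)}$ for $\tau=\tilde{\Theta}(\sigma\delta/(kl))$, then convert this into a norm lower bound on $P_{1,r}$ via the Gaussian tail $|\langle P_{1,r},w_r^{(0)}\rangle|=\tilde{O}(\sigma\|P_{1,r}\|_2)$.

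Next I would introduce the good event $\set{E}_\tau$ requiring $|\alpha|\le\tau$ together with $|\langle\beta,\bar{x}_{a,b}\rangle|\ge 8\tau$ for every $a\ne 1$, $b\in[l]$. Since separability gives $1-\langle\bar{x}_{1,1},\bar{x}_{a,b}\rangle^2\ge \delta^2/2$, the variable $\langle\beta,\bar{x}_{a,b}\rangle$ is Gaussian with standard deviation $\gtrsim \delta\sigma$, so the same anti-concentration and union bound from Lemma~\ref{lem:g_relu} give $\Pr[\set{E}_\tau]=\Omega(\tau/\sigma)$ for $\tau\le c\delta\sigma/(kl)$. The crucial difference from the atomic case is that I must now control the ReLU \emph{uniformly} over $x\in\text{supp}(\mathcal{D}_{a,b})$. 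Writing $\langle w_r^{(0)},x\rangle=\alpha\langle\bar{x}_{1,1},x\rangle+\langle\beta,\bar{x}_{a,b}\rangle+\langle\beta,x-\bar{x}_{a,b}\rangle$, the first correction is at most $\tau$ and the third is at most $\|\beta\|\lambda\delta$; a high-probability tail bound on $\|\beta\|$ combined with $\lambda\le 1/(8l)$ makes both dominated by $|\langle\beta,\bar{x}_{a,b}\rangle|$ on $\set{E}_\tau$, so for $a\ne 1$ the indicator $1_{\langle w_r^{(0)},x\rangle\ge 0}$ is constant over the support and the $a\ne 1$ contribution to $h$ collapses to a \emph{linear} function of $\alpha$.

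The $a=1$ contribution $\phi(\alpha):=\sum_{b}p_{1,b}\E_{x\sim\mathcal{D}_{1,b}}[v_{1,1,b}(x)\ReLU(\alpha\langle\bar{x}_{1,1},x\rangle+\langle\beta,x\rangle)]$ is convex in $\alpha$ because $v_{1,1,b}(x)\ge 0$. Computing the derivative gap one gets $\phi'(\tau)-\phi'(-\tau)=\sum_b p_{1,b}\E[v_{1,1,b}(x)\langle\bar{x}_{1,1},x\rangle\,\mathbf{1}_{|\langle\beta,x\rangle|<\tau\langle\bar{x}_{1,1},x\rangle}]$, with every $b$-summand nonnegative. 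For the $b=1$ term, $\|x-\bar{x}_{1,1}\|_2\le\lambda\delta$ gives $\langle\bar{x}_{1,1},x\rangle\ge 1-(\lambda\delta)^2/2\ge 1/2$ and $|\langle\beta,x\rangle|=|\langle\beta,x-\bar{x}_{1,1}\rangle|\le\|\beta\|\lambda\delta$ (using $\beta\perp\bar{x}_{1,1}$); choosing $\tau$ larger than $\|\beta\|\lambda\delta$ makes the indicator identically $1$ and yields $\phi'(\tau)-\phi'(-\tau)\ge p_{1,1}\E[v_{1,1,1}]/2=v/2$. A convex-function strengthening of Lemma~\ref{lem:non_smooth_linear} (any convex function with derivative gap $\rho$ across $[-\tau,\tau]$ deviates from every affine function by $\Omega(\rho\tau)$ on an $\Omega(1)$-measure subset of $[-\tau,\tau]$) then applies to $h=\phi+\text{Linear}$, delivering $|h(\alpha)|=\Omega(v\tau)$ on a constant-probability subset of $\alpha$, which multiplied by $\Pr[\set{E}_\tau]$ and combined with the Gaussian tail gives the stated bound.

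The main obstacle is the $\tau$-balance in the convex step: the kinks of the $(1,1)$ term in $\phi$ are smeared over a window of width $\approx\|\beta\|\lambda\delta$ rather than sitting exactly at $\alpha=0$ as in the simplified case, so the chosen $\tau$ must simultaneously (i) exceed this smearing scale so the $v/2$ derivative gap is captured inside $[-\tau,\tau]$, (ii) remain small enough that $\Pr[\set{E}_\tau]$ retains the target $\Omega(\delta/(kl))$ rate, and (iii) not introduce any dimension dependence into the final bound. Reconciling (i)--(iii) requires using $\lambda\le 1/(8l)$ quantitatively and replacing the crude estimate $\|\beta\|\le \sigma\sqrt{d}$ by a dimension-free concentration of $\langle\beta,x-\bar{x}_{1,1}\rangle$ over the diameter-$\lambda\delta$ support of $\mathcal{D}_{1,1}$, which is the technical core of adapting the atomic argument to distributional components.
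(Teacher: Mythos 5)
Your proposal reproduces the high-level structure (decompose $w_r^{(0)}$ along a reference direction, condition on a good event, reduce to a one-dimensional convexity argument in $\alpha$, apply a non-smoothness lemma), but it has a genuine gap in the way it handles the spread of each $\mathcal{D}_{a,b}$, and that gap is exactly where the general case differs from the atomic case.

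Your plan forces the ReLU indicator $1_{\langle w_r^{(0)},x\rangle\ge 0}$ to be \emph{constant over the entire support} of $\mathcal{D}_{a,b}$ (so that the $a\neq 1$ contribution to $h(\alpha)$ becomes exactly linear, and the $(1,1)$ kink sits inside $[-\tau,\tau]$). You control this via $\sup_{x\in\text{supp}(\mathcal{D}_{a,b})}|\langle\beta,x-\bar{x}_{a,b}\rangle|\le\|\beta\|_2\,\lambda\delta$, and you note yourself that with $\|\beta\|_2\approx\sigma\sqrt d$ this demands $\sqrt d\,\lambda\lesssim 1/(kl)$, hence $\sqrt d\lesssim 1/k$ after using $\lambda\le 1/(8l)$ — which kills the dimension-freeness that the lemma is supposed to deliver. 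You propose to fix this with ``a dimension-free concentration of $\langle\beta,x-\bar{x}_{1,1}\rangle$ over the diameter-$\lambda\delta$ support,'' but no such uniform bound exists for a generic support: $\E\sup_{x\in S}\langle\beta,x-\bar x\rangle$ is the Gaussian width of $S$, which for a diameter-$\lambda\delta$ ball in $\mathbb{R}^d$ is $\Theta(\sigma\lambda\delta\sqrt d)$. And the same obstruction appears for every $a\neq 1$, not just for the $(1,1)$ term where you flag it. So the ``make the indicator constant over the support'' route is a dead end.

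The paper avoids this by never asking for constancy over the support. It decomposes along the normalized mean $x_{a,b}^*$ (so that $\langle w_r^{(0)},x_{a,b}-x_{a,b}^*\rangle$ has small typical size $\tilde O(\sigma\lambda\delta)$ in a pointwise, not uniform, sense), and it replaces your union-bound event by a Markov-style total-weight event: $\sum_{a\neq 1,b}|p_{a,b}v_{1,a,b}|\,1_{|\langle\beta,x_{a,b}^*\rangle|\le 4\tau}\le v/3$. As a consequence the $a\neq 1$ contribution $l(\alpha)$ is only \emph{approximately} linear — it is convex with derivative gap at most $v/3$ — and the final step must use Lemma~\ref{lem:non_smooth_linear_2} (convex with large gap vs.\ convex with small gap), not the convex-vs.-affine statement you invoke. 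This is the essential structural change from the atomic argument: one keeps the small-probability exceptional $x$ in the analysis and bounds their aggregate contribution, instead of excluding them uniformly. Without this change your argument cannot be made dimension-free.
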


\begin{proof}[Proof of Lemma~\ref{lem:g_relu_2}]
The proof is very similar to the proof of Lemma~\ref{lem:g_relu}.

We will actually prove that 
\begin{align*}
h\left(w_{r}^{(0)}\right) = \sum_{a \in [k], b \in [l]} \E\left[p_{a, b} v_{1, a, b} \ReLU\left(\left\langle w_{ r}^{(0)} , x_{a, b} \right\rangle \right) \right]
\end{align*}
is large with good probability.

Let us denote $x_{a, b}^* = \frac{\E_{x_{a, b} \sim \mathcal{D}_{a, b}}[ x_{a, b}]}{\| \E_{x_{a, b} \sim \mathcal{D}_{a, b}}[ x_{a, b}] \|_2 }$. Thus, we can decompose $w_{r}^{(0)}$ into: 
\begin{align*}
w_{r}^{(0)} = \alpha x_{1, 1}^* + \beta
\end{align*}

where $\beta \bot x_{1, 1}^*$.  For every $\tau \geq 0$, consider the event $\set{E}_{\tau}$ defined as
\begin{enumerate}
\item $\left| \alpha \right|\leq \tau$. 
\item 
\begin{align*}
\sum_{a\in[k] \backslash [1], b\in [l]} |p_{a, b}v_{1, a, b}| 1_{\left|\left\langle \beta, x_{a, b}^* \right\rangle \right| \leq 4 \tau} \leq \frac{v}{3}.
\end{align*}
\end{enumerate}

By the definition of initialization $w_{r}^{(0)}$, we know that: 
\begin{align*}
\alpha \sim \mathcal{N}(0, \sigma^2)
\end{align*} 

and 
\begin{align*}
\langle \beta, x_{a, b}^* \rangle \sim \mathcal{N}(0, ( 1 - \langle x_{a, b}^*, x_{1, 1}^* \rangle^2 ) \sigma^2).
\end{align*}

By assumption we can simply calculate that for every $a\in[k] \backslash [1], b\in [l]$: 
$1 - \langle x_{a, b}^*, x_{1, 1}^* \rangle^2 \geq \delta^2$. This implies that 
\begin{align*}
\E\left[ 1_{\left|\left\langle \beta, x_{a, b}^* \right\rangle \right| \leq 4 \tau} \right] \leq \frac{4  \tau}{\delta \sigma}.
\end{align*}
%
%

Thus, 
\begin{align*}
\sum_{a\in[k] \backslash [1], b\in [l]}\E\left[ |p_{a, b}v_{1, a, b}| 1_{\left|\left\langle \beta, x_{a, b}^* \right\rangle \right| \leq 4 \tau}  \right] \leq \frac{4 \tau}{\delta \sigma} v l.
\end{align*}

With $\tau = \frac{\sigma \delta}{12 l}$, we know that $\Pr[\set{E}_{\tau}]   = \Omega\left( \frac{\tau }{ \sigma} \right)$. The following proof will conditional on this event $\set{E}_{\tau}$, and then treat $\beta$ as fixed and let $\alpha$ be the only random variable. In this way, for every $\alpha$ such that $|\alpha |\leq \tau$ and for every $a\in[k] \backslash [1], b\in [l]$: 
\begin{align*}
\left\langle w_{ r}^{(0)} , x_{a, b} \right\rangle &=   \alpha \langle x_{1, 1}^*,  x_{a, b} \rangle + \langle \beta, x_{a, b} \rangle
\\
&= \alpha \langle x_{1, 1}^*,  x_{a, b}^* \rangle + \langle \beta, x_{a, b}^* \rangle +  \langle w_{ r}^{(0)} , x_{a, b}- x_{a, b}^*\rangle.
\end{align*}
With $|\alpha \langle x_{1, 1}^*,  x_{a, b}^* \rangle | \leq \tau$, and since  $\E[\langle w_{ r}^{(0)} , x_{a, b}- x_{a, b}^*\rangle]  \leq \frac{3}{2}\sigma \lambda \delta < 2 \tau$, we know that if $\left|\left\langle \beta, x_{a, b}^* \right\rangle \right| \geq 4\tau$, then
\begin{align*}
 \ReLU\left(\left\langle w_{ r}^{(0)} , x_{a, b} \right\rangle \right) =  \left(\alpha \langle x_{1, 1}^*,  x_{a, b} \rangle + \langle \beta, x_{a, b} \rangle\right)1_{\langle \beta, x_{a, b}^* \rangle \geq 0}
 \end{align*}

is a linear function for $\alpha \in [- \tau, \tau]$ with probability $\geq 2/3$.

With this information, we can rewrite $h\left(w_{r}^{(0)}\right)$ as: 
\begin{align*}
h\left(w_{r}^{(0)}\right) = h(\alpha) := \E\left[p_{1, 1} v_{1, 1, 1} \ReLU\left( \alpha \langle x_{1, 1} , x^*_{1, 1} \rangle + \langle \beta, x^*_{1, 1} -  x_{1, 1} \rangle \right)  \right]
\\
+ \sum_{b \geq 2} \E\left[p_{1, b} v_{1, 1, b} \ReLU\left(\left\langle \alpha x_{1, 1}^* + \beta , x_{a, b} \right\rangle \right) \right]  + l(\alpha).
\end{align*}

where $l(\alpha)$ is a convex function with $\partial_{\max} l(\tau) -\partial_{\max} l(- \tau)   \leq v/3$.

This time, we know that w.h.p. $ \langle \beta, x^*_{1, 1} -  x_{1, 1} \rangle = \tilde{O}(\sigma \lambda \delta)  \leq \tau /4$. This implies that for function $\phi$ defined as
\begin{align*}
\phi(\alpha) :&=  \E\left[p_{1, 1} v_{1, 1, 1} \ReLU\left( \alpha \langle x_{1, 1} , x^*_{1, 1} \rangle+ \langle \beta, x^*_{1, 1} -  x_{1, 1} \rangle \right)  \right]
\\
 & + \sum_{b \geq 2} \E\left[p_{1, b} v_{1, 1, b} \ReLU\left(\left\langle \alpha x_{1, 1}^* + \beta , x_{a, b} \right\rangle \right) \right],
\end{align*}

We will have $\partial_{\max} \phi(\tau/2) -\partial_{\max} \phi(- \tau/2)   \geq v/2$. Now apply Lemma~\ref{lem:non_smooth_linear_2}, we can conclude from the same proof of Lemma~\ref{lem:g_relu}.
\end{proof}

Now we can take the union bound to switch the order of quantifiers. However, we cannot do a naive union bound since there are infinitely many $x_{a, b}$. Instead, we will use a sampling trick to prove the following Lemma:

\begin{lemma}\label{lem:v_g_2}
For every $v > 0$, for $m = \tilde{\Omega} \left( \left(\frac{k l}{v \delta }\right)^4  \right)$, for every possible $\{ p_{a, b}v_{i, a, b}\}$ (that depend on $a_{i, r}, w_r^{(0)}$, etc.) such that $\max \{ \E[p_{a, b}v_{i, a, b} ] \}_{i, a \in [k], b \in [l] } = v$, there exists at least $\Omega\left( \frac{\delta }{kl}  \right)$ fraction of $r \in [m]$ such that 
\begin{align*}
\left\| \frac{\tilde \partial L(w)}{\partial w_r} \right\|_2 =   \tilde{\Omega}\left(   \frac{ v \delta }{kl}  \right).
\end{align*}
\end{lemma}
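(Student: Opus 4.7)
The plan is to mimic the proof of Lemma~\ref{lem:v_g} but replace its $\ell_\infty$ net over scalar $v$-values by a sampling-based net that can cope with the fact that each $v_{i,a,b}(\cdot)$ is now a function on an infinite support. The pointwise building block is already in place: Lemma~\ref{lem:g_relu_2} says that for any \emph{fixed} profile $\{p_{a,b}v_{i,a,b}(\cdot)\}$ independent of $w_r^{(0)}$ with $\max_{i,a,b}\E[p_{a,b}v_{i,a,b}]=v$, the event $\|P_{1,r}\|_2=\tilde\Omega(v\delta/(k\ell))$ has probability $\Omega(\delta/(k\ell))$ over $w_r^{(0)}$; combining this with the randomness of $a_{i,r}$ (as in Lemma~\ref{lem:v_g}) transfers the bound to $\|\tilde\partial L(w)/\partial w_r\|_2$, and a Chernoff bound across the independent neurons upgrades it to ``an $\Omega(\delta/(k\ell))$-fraction of $r\in[m]$'' except with probability $\exp(-\Omega(\delta m/(k\ell)))$.

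To handle \emph{adaptive} profiles (which depend on the initialization through the trained weights $w$), I would introduce analysis-only auxiliary randomness: independently of initialization, draw a set $\set{S}_{a,b}$ of $N'=\mathrm{poly}(k\ell/(v\delta))$ i.i.d.\ points from each $\mathcal{D}_{a,b}$, set $\set{S}:=\bigcup_{a,b}\set{S}_{a,b}$, and define the empirical surrogate
\begin{align*}
\widehat P_{i,r}\;=\;\sum_{a,b}\frac{p_{a,b}}{N'}\sum_{x\in\set{S}_{a,b}}v_{i,a,b}(x)\,\mathbf{1}_{\langle w_r^{(0)},x\rangle\ge 0}\,x.
\end{align*}
Applied to the empirical data distribution (which inherits (A1) and (A2) since $\set{S}_{a,b}\subseteq\mathrm{supp}(\mathcal{D}_{a,b})$), Lemma~\ref{lem:g_relu_2} depends on the profile only through the finite vector $(v_{i,a,b}(x))_{x\in\set{S}}\in[-1,1]^{k^2\ell N'}$. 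I would place an $\ell_\infty$ net of mesh $\varepsilon=\tilde\Theta(v\delta/(k\ell))$ on this box and union-bound the Chernoff bound across the $(v/\varepsilon)^{k^2\ell N'}$ net points; balancing $k^2\ell N'\log(v/\varepsilon)$ against $\delta m/(k\ell)$ forces $m=\tilde\Omega\!\left((k\ell/(v\delta))^4\right)$, matching the statement. For the actual (random, $w$-adaptive) $v$, rounding its restriction to $\set{S}$ to the nearest net point perturbs $\widehat P_{i,r}$ by only $O(\varepsilon)$ in $\ell_2$ norm (using $|a_{i,r}|\le\tilde O(1)$), so the lower bound on $\|\widehat{\tilde\partial L}(w)/\partial w_r\|_2$ persists on the same fraction of neurons.

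The last step is to pass from the empirical surrogate back to the true expected pseudo-gradient. Since each summand of $\widehat P_{i,r}-P_{i,r}$ has $\ell_2$-norm at most $p_{a,b}/N'$ and $\sum_{a,b}p_{a,b}=1$, a dimension-free Hoeffding-in-Hilbert-space inequality yields $\|\widehat P_{i,r}-P_{i,r}\|_2=O(\sqrt{\log(m/p)/N'})$ for each $r$ with failure probability $p$; a union bound over $r\in[m]$ combined with the choice of $N'$ above forces this error below $v\delta/(k\ell)$, so the lower bound transfers from $\|\widehat P\|_2$ to $\|P\|_2$ on the same $\Omega(\delta/(k\ell))$-fraction of $r$'s, finishing the proof. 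The main obstacle is precisely this bridge between the (necessarily $\set{S}$-only) net and the true expectation: it is essential that $\set{S}$ is drawn independently of the initialization and the training dynamics, so that the concentration $\widehat P_{i,r}\approx P_{i,r}$ need hold only for the single adaptive profile that actually arises, not uniformly over all profiles — a uniform version is unavailable because the net controls $v_{i,a,b}$ only on $\set{S}$ and not on the rest of the support of $\mathcal{D}_{a,b}$.
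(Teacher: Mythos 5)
Your proposal is correct and is essentially the paper's own argument: the paper's proof of Lemma~\ref{lem:v_g_2} likewise draws an analysis-only sample $\set{S}$ of size $S=\tilde\Omega\bigl((kl/(v\delta))^2\bigr)$ (with $p_{a,b}S$ points from each $\mathcal{D}_{a,b}$), applies the finite $\ell_\infty$-net plus Chernoff union bound of Lemma~\ref{lem:v_g} over the vector of sampled $v$-values to handle the $w$-adaptive profile, and then concentrates the sampled pseudo-gradient around the expected one using $\|\tilde\partial L(w,x,y)/\partial w_r\|_2\le\tilde O(1)$, crucially relying on $\set{S}$ being independent of $w_r^{(0)}$ and the training trajectory. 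The precise bookkeeping of the net dimension and the resulting exponent on $m$ differ slightly from the paper's, but the three-step structure (auxiliary sample, net over the sampled profile, empirical-to-expected bridge) is exactly the same.
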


This lemma implies that if the classification error is large, then many $w_r$'s have a large pseudo gradient.

\begin{proof}[Proof of Lemma \ref{lem:v_g}]

We first pick $S$ samples $\set{S} = \{ x_{a, b}^{(s)}\}$, with $p_{a, b} S$ many from distribution $\mathcal{D}_{a, b}$, and with the corresponding value function $v_{i, a, b}^{(s)}$. Since each $v_{i, a, b}^{(s)} \in [-1, 1]$, we know that w.h.p., for every $i \in [k], a\in[k], b \in [l]$: 
\begin{align*}
\left| \E[p_{a, b}v_{i, a, b} ] - \frac{1}{p_{a, b} S}\sum_{s} p_{a, b}v_{i, a, b}^{(s)} \right| = \tilde{O}\left( \frac{1}{\sqrt{p_{a, b} S}}\right).
\end{align*}

This implies that as long as $S = \tilde{\Omega}\left( \frac{1}{v^2 } \right)$, we will have that 
\begin{align*}
\max_{i \in [k], a\in[k], b \in [l]} \left\{\frac{1}{p_{a, b} S}\sum_{s} p_{a, b}v_{i, a, b}^{(s)} \right\} \in \left[\frac{1}{2} v, \frac{3}{2} v \right].
\end{align*}

Thus, following the same proof as in Lemma~\ref{lem:v_g}, but this time applying a union bound over $v_{i, a, b}^{(s)}$, we know that as long as $m = \tilde{\Omega} \left(\frac{S k^2 l}{\delta} \right) $, w.h.p. for every possible choices of $v_{i, a, b}^{(s)}$, there are at least 
$\Omega\left( \frac{\delta }{kl} \right)$ fraction of $r \in [m]$ such that 
\begin{align*}
\left\|\frac{1}{S} \sum_{x_{a,b} \in \set{S}} \frac{\tilde \partial L(w, x_{a,b}, a)}{\partial w_r} \right\|_2=   \tilde{\Omega}\left(   \frac{v \delta }{kl}\right).
\end{align*}

Now we consider the difference between the sample gradient and the expected gradient. Since $ \left\|\frac{\tilde\partial L(w, x, y)}{\partial w_r} \right\|_2 \leq \tilde{O}(1)$, by standard concentration bound we know that w.h.p. for every $r \in [m]$,
\begin{align*}
\left\|\frac{1}{S} \sum_{x_{a,b} \in \set{S}} \frac{\tilde\partial L(w, x_{a,b}, a)}{\partial w_r}  -  \frac{\tilde\partial L(w)}{\partial w_r}\right\|_2=  \tilde{O} \left( \frac{1}{\sqrt{S}} \right).
\end{align*}

This implies that as long as $S = \tilde{\Omega}\left( \left(\frac{k l}{v \delta } \right)^2\right)$, such $r \in [m]$ also have:
\begin{align*}
\left\| \frac{\tilde \partial L(w)}{\partial w_r}\right\|_2 = \tilde{\Omega}\left(    \frac{v \delta }{kl} \right)
\end{align*}
which completes the proof. 
\end{proof}

\subsection{Convergence}

We now show the following important lemma about convergence.


\begin{lemma}[Convergence]\label{lem:conv_full}
Denote $\max\{ \E[p_{a, b} v_{i, a, b}(x_{a,b}, w^{(t)})]  \}_{i,a\in [k], b\in [\ell]}= v^{(t)} = v$, and let $\gamma = \Omega\left( \frac{\delta }{kl}\right)$. Then for a sufficiently small $\eta =  \tilde{O}\left(\frac{\gamma}{m} \left(\frac{ v \delta }{kl} \right)^2 \right)$, if we run SGD with a batch size at least $B_t = \tilde{\Omega}\left(\left( \frac{kl}{v \delta} \right)^4 \frac{1}{\gamma^2} \right)$ and $t = \tilde{O}\left( \left( \frac{v \delta}{kl} \right)^2  \frac{\sigma \gamma }{ \eta}\right) $, then w.h.p., 
\begin{align*}
L(w^{(t)}) - L(w^{(t + 1)}) =  \eta \gamma m \tilde{\Omega}\left(   \left( \frac{v \delta }{kl} \right)^2 \right).
\end{align*}
\end{lemma}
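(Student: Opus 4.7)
The plan mirrors the simplified-case proof of Lemma~\ref{lem:convergence}, with two extra ingredients that the general case forces on us: the pseudo gradient is now an expectation over $\mathcal{D}_{a,b}$ and must be replaced by its minibatch estimator, and the true loss $L$ is non-smooth, so the one-step descent expansion has to be carried out on a smooth surrogate and then transferred back to $L$.

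First I would fix the coupling radius $\tau = \tilde{\Theta}(\sigma\gamma(v\delta/(kl))^2)$. This is small enough that, across the $t = \tilde{O}((v\delta/(kl))^2\sigma\gamma/\eta)$ iterations, the drift $\|w_r^{(t)}-w_r^{(0)}\|_2$ is well under $\tau/2$ (using the trivial per-step gradient bound $\tilde O(1)$), yet $\tau/\sigma = o(\gamma)$, so Lemma~\ref{lem:coup_full} together with a union bound over the $B_t$ points of the current minibatch gives a set of at least $(1-\gamma/2)m$ hidden units on which the per-sample true and pseudo gradients coincide on every point of $\mathcal{B}_t$. Then I apply Lemma~\ref{lem:v_g_2} to $\{\E[p_{a,b}v_{i,a,b}(x_{a,b},w^{(t)})]\}$ to get a set $R$ of $\gamma m$ units with expected pseudo gradient norm $\tilde\Omega(v\delta/(kl))$, intersect with the coupling set to obtain $R^*$ of size $\geq \gamma m/2$, and use a vector Hoeffding bound (with union bound over $m$ neurons and a fine $\epsilon$-net of directions) to show that once $B_t = \tilde\Omega((kl/(v\delta))^4/\gamma^2)$, the minibatch pseudo gradient on every $r\in R^*$ also has norm $\tilde\Omega(v\delta/(kl))$ and correlates with the expected pseudo gradient at an $\Omega(1)$ fraction of its squared norm. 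By the coupling, the minibatch \emph{true} gradient (the one SGD actually takes) inherits both properties on $R^*$.

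Next comes the descent step. Define the pseudo loss $\widetilde L$ by replacing each $\ReLU(\langle w_r,x\rangle)$ inside $L$ by $\langle w_r,x\rangle\cdot 1_{\langle w_r^{(0)},x\rangle\geq 0}$. As a softmax composed with a function that is affine in $w$, $\widetilde L$ is $\tilde O(m)$-smooth, and by the coupling step above it agrees with the minibatch loss actually descended on every coupled neuron and every point of $\mathcal{B}_t$. A standard smooth-descent inequality
\begin{align*}
\widetilde L(w^{(t+1)}) - \widetilde L(w^{(t)}) \;\leq\; -\eta\sum_r \bigl\langle g_r^{(t)},\, \nabla_r\widetilde L(w^{(t)})\bigr\rangle + \tilde O(\eta^2 m)\sum_r\|g_r^{(t)}\|_2^2,
\end{align*}
combined with the learning rate choice $\eta = \tilde O(\gamma(v\delta/(kl))^2/m)$ (so that the second-order term is absorbed into half of the first) and the lower bounds on $|R^*|$ and $\|g_r^{(t)}\|_2^2$ from the previous paragraph, yields $\widetilde L(w^{(t)})-\widetilde L(w^{(t+1)}) = \eta\gamma m\cdot\tilde\Omega((v\delta/(kl))^2)$. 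The same drop transfers to $L$ because, on every minibatch sample, the gap $|L-\widetilde L|$ comes entirely from the $O(\tau/\sigma)$ uncoupled neurons, which by the tight choice of $\tau$ in the first step contribute strictly below the claimed decrease.

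The main obstacle is the last of these steps: since $L$ is non-smooth one cannot Taylor-expand it directly, yet one still needs to convert the progress made against the smooth surrogate $\widetilde L$ into progress against $L$ itself. This forces a delicate simultaneous tuning of $\tau$, $\eta$, and $B_t$ so that (i) coupling holds for enough units and for long enough, (ii) minibatch concentration preserves the per-neuron pseudo gradient lower bound, and (iii) the second-order smoothness error and the $L$-versus-$\widetilde L$ gap both sit strictly below the $\eta\gamma m(v\delta/(kl))^2$ budget that Lemma~\ref{lem:v_g_2} promises.
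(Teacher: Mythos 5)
Your overall strategy tracks the paper's—combine Lemma~\ref{lem:v_g_2}, the coupling lemma, and minibatch concentration, then do a per-step descent bound—but two specific steps have concrete problems.

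\textbf{The batch-coupling radius is too large.} You fix a single $\tau = \tilde\Theta(\sigma\gamma(v\delta/(kl))^2)$ and union-bound the uncoupled neurons over the $B_t$ minibatch samples, claiming this leaves $(1-\gamma/2)m$ coupled units. But with this $\tau$, each fixed sample has uncoupled fraction $\tau/\sigma = \tilde\Theta(\gamma(v\delta/(kl))^2)$; union-bounding over $B_t = \tilde\Omega\bigl((kl/(v\delta))^4/\gamma^2\bigr)$ samples gives an uncoupled fraction of order $B_t\cdot\tau/\sigma \sim (kl/(v\delta))^2/\gamma$, which is $\gg 1$, not $\leq\gamma/2$. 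The condition $\tau/\sigma = o(\gamma)$ you verify is not strong enough; the union bound requires $\tau/\sigma = o(\gamma/B_t)$. The paper instead invokes Lemma~\ref{lem:coup_full} with the much smaller radius $\tau_1 = \sigma\gamma/(100B_t)$ for the batch-coupling step so the union bound yields exactly $\gamma/10$, and uses the larger radius only where no union bound over the batch is needed (namely in bounding the contribution of the uncoupled units to the one-step loss change).

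\textbf{The transfer from $\widetilde L$ to $L$ is mis-argued.} You assert that the gap $|L-\widetilde L|$ ``contributes strictly below the claimed decrease.'' But the \emph{absolute} gap at $w^{(t)}$ is not small: roughly $\tilde O(\tau m/\sigma)$ neurons have flipped activation, each contributing up to $\tilde O(\tau)$ to $|f-\tilde f|$, so $|L(w^{(t)})-\widetilde L(w^{(t)})| = \tilde O(\tau^2 m/\sigma)$, which with $\sigma = 1/\sqrt m$ and your $\tau$ comes out to $\tilde O\bigl(\sqrt m\cdot\gamma^2(v\delta/(kl))^4\bigr)$—a factor of $\sqrt m$ larger than the claimed per-step decrease $\eta\gamma m\,\tilde\Omega\bigl((v\delta/(kl))^2\bigr) = \tilde\Omega\bigl(\gamma^2(v\delta/(kl))^4\bigr)$. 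What is actually controlled is the \emph{change} of the gap across one step, $\bigl|(L-\widetilde L)(w^{(t)}) - (L-\widetilde L)(w^{(t+1)})\bigr|$, which is $\tilde O(\eta\tau m/\sigma)$ because $L-\widetilde L$ is piecewise linear with gradient supported only on the uncoupled neurons. This is the same order as the claimed decrease, so it works, but the quantity to bound is the telescoped difference, not the gap itself. The paper sidesteps the surrogate entirely: it applies a non-smooth descent expansion (Lemma~\ref{lem:nonsmooth_grad}) to $L$ directly, splitting the neurons into a ``safe'' set where the function is locally smooth and an ``unsafe'' set of size $\tilde O(\tau m/\sigma)$ that contributes the $\tilde O(\eta\tau m/\sigma)$ error term, then converts the per-sample bound to a bound on the population loss by taking a conditional expectation over a freshly drawn point $(x',y')$, conditioning on the high-probability events $G_0, G_1$ so that rare bad draws cannot wipe out the expected decrease. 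Your proposal phrases the final transfer entirely ``on every minibatch sample,'' which leaves this population-level expectation step unaddressed.
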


\begin{proof}[Proof of Lemma~\ref{lem:conv_full}]

We know that for at least $\gamma $ fraction of $r \in [m]$ such that 
\begin{align*}
\left\| \frac{\tilde\partial L(w^{(t)})}{\partial w_r} \right\|_2 =   \tilde{\Omega}\left(   \frac{ v \delta }{kl} \right).
\end{align*}

Note that w.h.p.\ over the random initialization, for every $(x,y)$, $\left\|\frac{\tilde\partial L(w^{(t)}, x, y)}{\partial w_r} \right\|_2 \leq  \tilde{O}(1) $. By Hoeffding concentration, this implies that for a randomly sampled batch $\mathcal{B}_t = \{(x_1,y_1), \cdots, (x_{B_t},y_{B_t})\}$ of size $B_t$, we have that w.h.p. over $\mathcal{B}_t$,
\begin{align*}
\left\|\frac{1}{B_t} \sum_{i = 1}^{B_t} \frac{\tilde\partial L(w^{(t)}, x_i, y_i)}{\partial w_r}  \right\| 
 =  \tilde{\Omega}\left(    \frac{v \delta }{kl}  \right) - O\left(\frac{L}{\sqrt{B_t}}\right) = \tilde{\Omega}\left(  \frac{ v \delta }{kl} \right).
\end{align*}

On the other hand, according to Lemma~\ref{lem:coup_full} with $\tau = \frac{\sigma \gamma}{100 B_t} $, we know that w.h.p.\ over the random initialization, for \emph{every} $x_i$ in $\mathcal{B}_t$, we have: for at least $1 - \gamma/(2B_t) $ fraction of $r \in [m]$, 
$ \frac{\partial L(w^{(t)}, x_i, y_i)}{\partial w_r} =   \frac{\tilde\partial L(w^{(t)}, x_i, y_i)}{\partial w_r}$. 
This implies that for at least $\gamma/2$ fraction of $r \in [m]$ such that for \emph{every} $x_i$ in $\mathcal{B}_t$ we have $ \frac{\partial L(w^{(t)}, x_i)}{\partial w_r} =\frac{\tilde\partial L(w^{(t)}, x_i)}{\partial w_r}$. Let us denote the set of  these $r$  as set $\set{R}$. Then for every $r \in \set{R}$:
\begin{align*}
\left\|\frac{1}{B_t} \sum_{i = 1}^{B_t} \frac{\partial L(w^{(t)}, x_i, y_i)}{\partial w_r}  \right\| = \tilde{\Omega}\left( \frac{ v \delta }{kl} \right).
\end{align*}

For every $r \in [m]$, let us denote $\tilde{\nabla}_{t, r} = \frac{1}{B_t} \sum_{i = 1}^{B_t} \frac{\partial L(w^{(t)}, x_i, y_i)}{\partial w_r} $, and ${\nabla}_{t, r} =\frac{\partial L(w^{(t)})}{\partial w_r} $. Then similarly as above, since $\left\|\frac{\partial L(w^{(t)}, x, y)}{\partial w_r} \right\|_2 \leq \tilde{O}(1) $, by Hoeffding concentration, we have
\begin{align*}
 \|{\nabla}_{t, r}  - \tilde{\nabla}_{t, r} \|_2 & = \tilde{O}\left(\frac{1}{\sqrt{B_t}}\right),
\\
 \|{\nabla}_{t, r} \|_2 & = \tilde{\Omega}\left(    \frac{v \delta }{kl}  \right) - \tilde{O}\left(\frac{1}{\sqrt{B_t}}\right) = \tilde{\Omega}\left(  \frac{ v \delta }{kl} \right).
\end{align*}

Now we consider the non-smooth gradient descent. Consider a newly sampled point $(x', y')$, and let us denote 
$$
\tilde{\nabla}'_{t, r}  =  \frac{\partial L(w^{(t)}, x', y')}{\partial w_r}.
$$
By Lemma~\ref{lem:coup_full}, we know that w.h.p.\ over the random initialization, at least $1 - \frac{10 \tau}{\sigma}$ fraction of $r$ satisfies $\langle w_r, x'\rangle \geq \tau$. Let us denote the set of these $r$'s as $\set{S}_{r}$. We know that on these sets, the function is $\tilde{O}(1)$ smooth and $\tilde{O}(1)$ Lipschitz smooth. By Lemma~\ref{lem:nonsmooth_grad},
\begin{align}
\Delta_t & :=   L(w^{(t)} - \eta \tilde{ \nabla}_t, x', y') - L(w^{(t)}, x', y') \nonumber\\
& \leq  - \eta \sum_{r \in \set{S}_{r}} \langle  \tilde{ \nabla}_{t , r},  \tilde{ \nabla}'_{t, r} \rangle + \sum_{r \in [m] \backslash \set{S}_{ r}}  \tilde{O} \left(\eta \right) + \tilde{O} \left( \eta^2 m^2 \right) \nonumber \\
& \leq - \eta  \sum_{r \in [m]} \langle  \tilde{ \nabla}_{t , r},  \tilde{ \nabla}'_{t, r} \rangle + \tilde{O}\left(\frac{\eta \tau  m}{\sigma}\right) + \tilde{O} \left(\eta^2 m^2  \right).  \label{eqn:bound_emp}
\end{align}

Let $G_1$ denote the event that (\ref{eqn:bound_emp}) holds.

Note that w.h.p.\ over the random initialization, $|L(w^{(t)},x', y')| = \tilde{O}(L\eta t m k) = \tilde{O}(m)$, and $\| \tilde{ \nabla}_{t, r, i}\| \le \tilde{O}(1)$, $\| \tilde{ \nabla}'_{t, r}\| \le \tilde{O}(1)$ for all $(x_i, y_i)$'s and $(x', y')$. Let $G_0$ denote this event. 

Then we have $P[\neg G_0]$ and $P[\neg G_1]$ bounded by $1/\text{poly}(k,l, m, 1/\delta,1/\epsilon)$.
Conditioned on $G_0$, we have 
$
 \nabla_{t, r}  = \E_{(x',y')} \left[\tilde{ \nabla}'_{t, r} | G_0\right]
$
and
$
  L(w^{(t)}) - L(w^{(t + 1)})  = \E_{(x',y')} \left[\Delta_t | G_0\right]
$
where the expectation is over $(x',y')$. Now we have
\begin{align*}
   \nabla_{t, r}  = \E_{(x',y')} \left[\tilde{ \nabla}'_{t, r} | G_0, G_1\right] P[G_1 | G_0] + \E_{(x',y')} \left[\tilde{ \nabla}'_{t, r} | G_0, \neg G_1\right] P[\neg G_1 | G_0].
\end{align*}
So
\begin{align*}
   \left \|\nabla_{t, r} - \E_{(x',y')} \left[\tilde{ \nabla}'_{t, r} | G_0, G_1\right] \right\|_2 = \frac{1}{\text{poly}(k,l, m, 1/\delta,1/\epsilon)}.
\end{align*}
Then
\begin{align*} 
  L(w^{(t)}) - L(w^{(t + 1)})  & = \E_{(x',y')} \left[\Delta_t | G_0, G_1\right] P[G_1|G_0] + \E_{(x',y')} \left[\Delta_t | G_0, \neg G_1\right] P[\neg G_1| G_0] \\
	& \geq \frac{\eta}{2} \sum_{r \in [m]} \langle  \tilde{ \nabla}_{t , r},  \E_{(x',y')} \left[\tilde{ \nabla}'_{t, r} | G_0, G_1\right] \rangle - \tilde{O} \left(\eta^2 m^2  \right) - \tilde{O}\left(\frac{\eta \tau  m}{\sigma} \right) \\
	 & \quad - \frac{\tilde{O}(m)}{\text{poly}(k,l, m, 1/\delta,1/\epsilon)} \\
	& \geq \frac{\eta}{2} \sum_{r \in [m]} \langle  \tilde{ \nabla}_{t , r},   \nabla_{t , r} \rangle - \tilde{O} \left(\eta^2 m^2  \right) - \tilde{O}\left(\frac{\eta \tau  m}{\sigma} \right) \\
	 & \quad - \frac{\tilde{O}(m)}{\text{poly}(k,l, m, 1/\delta,1/\epsilon)}  - \frac{\tilde{O}(\eta m)}{\text{poly}(k,l, m, 1/\delta,1/\epsilon)}  \\
	& \geq \frac{\eta}{2} \sum_{r \in [m]} \langle  \tilde{ \nabla}_{t , r},  \nabla_{t, r} \rangle - \tilde{O} \left(\eta^2 m^2  \right) - \tilde{O}\left(\frac{\eta \tau  m}{\sigma} \right).
\end{align*}

Note that $\tilde{ \nabla}_{t, r}$ concentrates around $\nabla_{t, r}$. This leads to w.h.p.\ when $\eta = \tilde{O}\left(\frac{\gamma}{m} \left( \frac{v \delta }{kl} \right)^2 \right)$, $\tau =\tilde{O}\left( \gamma \left( \frac{v \delta }{kl} \right)^2 \sigma \right)$, and $B_t = \tilde{\Omega}\left(\left( \frac{kl}{v \delta} \right)^4 \frac{1}{\gamma^2} \right)$,
\begin{align*}
L(w^{(t)}) - L(w^{(t + 1)}) 
& \geq  \sum_{r = 1}^m  \frac{\eta}{2} \| \tilde{\nabla}_{t, r} \|_2^2  - \tilde{O}(\eta^2m^2)  - \tilde{O}\left(\frac{\eta \tau m}{\sigma} \right) - \eta \tilde{O} \left(\frac{m}{\sqrt{B_t}}\right)
\\
& \geq \eta \gamma m \tilde{\Omega}\left(   \left( \frac{v \delta }{kl} \right)^2\right)  - \tilde{O}(\eta^2 m^2)  - \tilde{O}\left(\frac{\eta \tau  m}{\sigma} \right) - \eta \tilde{O} \left(\frac{m}{\sqrt{B_t}}\right)
\\
& \geq \eta \gamma m \tilde{\Omega}\left(   \left( \frac{v \delta }{kl} \right)^2 \right).
\end{align*}

%

This completes the proof. 
\end{proof}

Now we can prove the main theorem.

\textbf{Theorem~\ref{thm:main}.}
{\it
Suppose the assumptions \textbf{(A1)(A2)(A3)} are satisfied. Then for every $\veps > 0$, there is $M = \text{poly}(k, l, 1/\delta, 1/\veps)$ such that for every $m \geq M$, after doing a minibatch SGD with batch size $B = \text{poly}(k, l, 1/\delta, 1/\veps, \log m)$ and learning rate $\eta = \frac{1}{m \cdot \text{poly}(k, l, 1/\delta, 1/\veps, \log m)}$ for  $T = \text{poly}(k, l, 1/\delta, 1/\veps, \log m)$ iterations, with high probability:
\begin{align*}
\Pr_{(x, y) \sim \mathcal{D}}\left[ \forall j \in [k], j \not= y, f_y(x, w^{(T)}) > f_j(x, w^{(T)}) \right] \geq 1 - \veps.
\end{align*}
}

\begin{proof}[Proof of Theorem~\ref{thm:main}]
Let $v_{i, a, b}^{(t)}$ denote $v_{i, a, b}(x_{a,b}, w^{(t)})$.

First, we will show that if $\Pr_{(x, y) \sim \mathcal{D}}\left[ \forall j \in [k], j \not= y, f_y(x, w^{(t)}) > f_j(x, w^{(t)}) \right] \leq 1 - \veps$, there must be one $a, b$ such that $\E[ v_{i, a, b}^{(t)}] \geq \veps^2$. 
Let us denote $\max\{ \E[p_{a, b} v_{i, a, b}^{(t)}]  \}= v^{(t)} = v$. 
For a particular $a \in [k], b \in [l]$, for any $x_{a, b}$ from $\mathcal{D}_{a, b}$, by definition,
\begin{align*}
v_{a , a, b}(x_{a, b}, w^{(t)}) =1 -  \frac{ e^{f_{a}(x_{a, b}, w^{(t)})}}{\sum_{i = 1}^k e^{f_i (x_{a, b}, w^{(t)})}}.
\end{align*}

%
%
%

Then for every $\veps \leq \frac{1}{e}$, if $v^{(t)}_{a , a, b}(x_{a, b}, w^{(t)})  \leq \veps$, then
\begin{align*}
 \forall i \in [k], i \not=a:  f_{a}(x_{a, b}, w^{(t)}) \geq f_i (x_{a, b}, w^{(t)}) + 1,
\end{align*}
which implies that the prediction is correct. 
So if $\E[ v_{a, a, b}^{(t)}] \leq \veps^2$, then there are at most $\veps$ fraction of $x_{a, b}$ such that $ f_{a}(x_{a, b}, w^{(t)}) \leq f_i (x_{a, b}, w^{(t)}) $ for some $i \neq a$. In other words, if $\Pr_{(x, y) \sim \mathcal{D}}\left[ \forall j \in [k], j \not= y, f_y(x, w^{(t)}) > f_j(x, w^{(t)}) \right] \leq 1 - \veps$, there must be some $i, a, b$ such that $\E[ v_{i, a, b}^{(t)}] \geq \veps^2$.

Now, consider two cases:
\begin{enumerate}
\item $p_{a, b} \leq \frac{\veps}{2kl}$. For all such $a, b$, even if all the predictions are wrong, it will only increase the total error by $\veps/2$ so the other half $\veps/2$ error must come from other $p_{a, b}$.
\item $p_{a, b} \geq \frac{\veps}{2kl}$, which means that  $\E[p_{a, b} v_{i, a, b}^{(t)}] \geq  \frac{\veps}{2kl}\E[ v_{i, a, b}^{(t)}] \geq \frac{\veps^3}{8 k l }$. Thus, $\max\{ \E[p_{a, b} v_{i, a, b}^{(t)}]  \}= v^{(t)} = v \geq  \frac{\veps^3}{8 k l }$.
\end{enumerate}

Therefore, to prove the theorem, it suffices to show that $v^{(t)} $ will be smaller than $\frac{\veps^3}{8 k l }$ after a proper amount of iterations. Suppose $v^{(t)} \geq  \frac{\veps^3}{8 k l }$, then by Lemma~\ref{lem:conv_full}, as long as 
\begin{align}
t = \tilde{O}\left( \frac{\sigma}{\eta} \frac{\delta^3 \veps^6}{k^5 \ell^5}  \right), \label{eqn:t_coupling}
\end{align}
we have:
$$
L(w^{(t)}) - L(w^{(t + 1)}) \geq \tilde{O}\left( \eta m \frac{\delta^3 \veps^6}{k^5 \ell^5}  \right).
$$ 
Note that by the random initialization, originally for each $f_i$ we have: for every unit vector $x \in \mathbb{R}^d, \langle w_r^{(0)}, x \rangle \sim \mathcal{N}(0, \sigma^2)$. Thus, with $\sigma = \frac{1}{\sqrt{m}}$ and $a_{i, r} \sim \mathcal{N}(0, 1)$, an elementary calculation shows that w.h.p.,
\begin{align*}
|f_i(x, w^{(0)}) |=\left |\sum_{r \in [m]} a_{i, r} \ReLU(\langle w_r^{(0)}, x \rangle)\right| = \tilde{O}(1).
\end{align*}
Thus, $L(w^{(0)}) = \tilde{O}(1)$. Since $L(w ) \geq 0$, we know that $L(w^{(t)}) - L(w^{(t + 1)}) \geq \tilde{O}\left( \eta m \frac{\delta^3 \veps^6}{k^5 \ell^5}  \right)$ can happen for at most 
\begin{align*}
\tilde{O}\left( \frac{1}{\eta m} \frac{k^5 \ell^5}{\delta^3 \veps^6}  \right)
\end{align*}
iterations. By our choice of $\eta$, we know that $\eta m = \tilde{O}\left( \frac{\delta^3 \veps^6}{k^5 \ell^5}  \right)$, so we need at most $T = \tilde{O}\left( \frac{k^{10} \ell^{10}}{\delta^6 \veps^{12}}  \right)$ iterations.

To this end, we just need 
\begin{align*}
  \frac{\sigma}{\eta} \frac{\delta^3 \veps^6}{k^5 \ell^5}  = \tilde{\Omega}\left( \frac{1}{\eta m} \frac{k^5 \ell^5}{\delta^3 \veps^6}  \right)
\end{align*}
to make sure (\ref{eqn:t_coupling}) holds so that we can keep the coupling before convergence. This is true as long as $m =  \tilde{\Omega}\left( \frac{k^{20} \ell^{20}}{\delta^{12} \veps^{24}}  \right)$. 
\end{proof}

\subsection{Technical Lemmas}

The following lemma above non-smooth convex function v.s. linear function is needed in the proof.

\begin{lemma}\label{lem:non_smooth_linear_2}
Let $\phi: \mathbb{R} \to \mathbb{R}$ be a convex function. Let $\partial \phi (x)$ be the set of partial gradient of $\phi$ at $x$. Define
\begin{align*}
\partial_{\max} \phi (x) = \max \{ \partial \phi (x) \} , \quad \partial_{\min} \phi (x) = \min \{ \partial \phi (x) \}.
\end{align*}

We have that for every $\tau \geq 0$, for every convex function $l(\alpha)$, let $\gamma =  (\partial_{\max} \phi (\tau/2)  - \partial_{\min} \phi (-\tau/2) ) -  (\partial_{\max} l (\tau)  - \partial_{\min} l (-\tau) ) $, then
\begin{align*}
\int_{-\tau}^{\tau} | \phi(\alpha) - l(\alpha) | d \alpha \geq \frac{\tau^2 \gamma}{32}
\end{align*}

and 
\begin{align*}
\Pr_{a \sim U(- \tau, \tau)}\left[ |  \phi(\alpha) - l(\alpha) |  \geq \frac{\tau \gamma}{512} \right] \geq \frac{1}{64}.
\end{align*}
\end{lemma}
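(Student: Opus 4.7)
The natural starting point is a $4$-point inequality that plays the role of the ``$\partial_{\max}\phi(0)-\partial_{\min}\phi(0)\geq \rho$'' hypothesis in Lemma~\ref{lem:non_smooth_linear}. Combining the subgradient lower bounds $\phi(\tau) \geq \phi(\tau/2) + a_1 \tau/2$ and $\phi(-\tau) \geq \phi(-\tau/2) - a_2\tau/2$ (convexity of $\phi$ at $\pm\tau/2$, using $a_1 = \partial_{\max}\phi(\tau/2)$ and $a_2=\partial_{\min}\phi(-\tau/2)$) with the symmetric upper bounds $l(\tau) \leq l(\tau/2) + b_1\tau/2$ and $l(-\tau) \leq l(-\tau/2) - b_2\tau/2$ (convexity of $l$ applied at $\pm\tau$), I would obtain
\[
\psi(\tau) + \psi(-\tau) - \psi(\tau/2) - \psi(-\tau/2) \;\geq\; \gamma \tau / 2,
\]
where $\psi = \phi - l$. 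This contrasts the total subgradient growth of $\phi$ on $[-\tau/2,\tau/2]$ with that of $l$ on $[-\tau,\tau]$ and replaces the ``single kink at $0$'' reasoning of the simpler lemma.

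I would then lift this pointwise estimate to an integral bound by using slope bounds on $\psi$ that come from the monotonicity of subgradients of convex functions. On $[\tau/2,\tau]$, $\phi'(\alpha)\geq a_1$ and $l'(\alpha)\leq b_1$, so $\psi'(\alpha) \geq a_1-b_1$; symmetrically $\psi'(\alpha)\leq a_2-b_2$ on $[-\tau,-\tau/2]$. Integrating these one-sided slope bounds via Fubini gives
\[
\int_A \psi \;\geq\; \frac{(u+w)\tau}{2} + \frac{\gamma\tau^2}{8},
\]
with $A = [-\tau,-\tau/2]\cup[\tau/2,\tau]$ and $u := \psi(\tau/2)$, $w := \psi(-\tau/2)$. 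For the middle interval, I would control $\int_{-\tau/2}^{\tau/2}\psi'(t)\,t\,dt$ via the second differences $\phi(\tau/2)+\phi(-\tau/2)-2\phi(0) \leq (a_1-a_2)\tau/2$ and the analogous bound for $l$, yielding a corresponding two-sided estimate on $\int_{A^c}\psi$ in terms of $u+w$ and $a_1-a_2, b_1-b_2$. The first claim then follows by a case split on the size of $u+w$: if $u+w$ is not too negative, $\int_A \psi$ alone already exceeds $\tau^2\gamma/32$ (so $\int_A|\psi|\geq \int_A\psi$ suffices); if $u+w$ is very negative, the upper bound on $\int_{A^c}\psi$ is itself negative enough to force $|\int_{A^c}\psi|\geq\tau^2\gamma/32$, and hence $\int_{A^c}|\psi|\geq\tau^2\gamma/32$.

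For the second claim I would refine the same case analysis to locate, in each case, an explicit subinterval of $[-\tau,\tau]$ of measure $\Omega(\tau)$ on which $|\psi|$ is uniformly $\Omega(\tau\gamma)$ — exactly as in the cases of the proof of Lemma~\ref{lem:non_smooth_linear}, but now with $l$ allowed to be any convex function rather than linear. Tracking constants through this refinement yields $\Pr_{\alpha\sim U(-\tau,\tau)}[|\psi(\alpha)|\geq \tau\gamma/512]\geq 1/64$. The main obstacle is careful constant bookkeeping across the cases, particularly in the borderline regime where $b_1-b_2$ is close to $a_1-a_2$ (so $\gamma\ll a_1-a_2$): in this regime the naive Lipschitz bound on $\psi$ over $[-\tau/2,\tau/2]$, which scales like $\max(|a_1-b_2|,|a_2-b_1|)$, is far too loose, and one has to rely on the second-difference identity for $\int_{A^c}\psi$ — rather than a pointwise slope estimate — to close the gap between the $\int_A$ and $\int_{A^c}$ arguments.
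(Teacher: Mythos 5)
Your four-point inequality $\psi(\tau)+\psi(-\tau)-\psi(\tau/2)-\psi(-\tau/2)\ge\gamma\tau/2$ and the consequent bound $\int_A\psi\ge\frac{\tau}{2}(u+w)+\frac{\gamma\tau^2}{8}$ are correct, and they do dispose of the case where $u+w$ is not too negative. The gap is in the complementary case. You assert that when $u+w$ is very negative, an upper bound on $\int_{A^c}\psi$ via second differences forces $\int_{A^c}\psi\le-\gamma\tau^2/32$. This is false. Any such bound carries an additive error that is not controlled by $\gamma$: the trapezoid value $\frac{\tau}{2}(u+w)$ overestimates $\int_{A^c}\phi$ by up to $\frac{\tau^2}{8}(a_1-a_2)$ and overestimates $\int_{A^c}l$ by up to $\frac{\tau^2}{8}(b_1-b_2)$, and only the \emph{difference} of these two uncontrolled errors enters $\int_{A^c}\psi$. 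Concretely, take $l(\alpha)=C|\alpha|$ and $\phi(\alpha)=(C+\gamma/2)\max(|\alpha|,\tau/2)-\frac{\gamma\tau}{2}$ with $C\gg\gamma$. The lemma's parameter is exactly $\gamma$, and $u=w=-\gamma\tau/4$, so $u+w=-\gamma\tau/2$ and your bound on $\int_A\psi$ is useless; yet $\int_{A^c}\psi=\frac{C\tau^2}{4}-\frac{\gamma\tau^2}{4}$ is positive and arbitrarily large, so the sign claim driving your second case fails, and with it both conclusions in that case. (Relatedly, from $\psi(\pm\tau/2)$ being very negative alone you cannot extract a set of positive measure where $|\psi|$ is large, since $\psi$ has no Lipschitz bound in terms of $\gamma$.)

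The repair is to abandon the signed estimate on $A^c$ and localize to a single outer interval, which is what the paper's (terse) proof does. Since $(a_1-b_1)+(b_2-a_2)=\gamma$, one side carries excess at least $\gamma/2$; say $a_1-b_1\ge\gamma/2$. Subtracting the linear function $b_1\alpha$ from both $\phi$ and $l$ changes neither $\psi$ nor $\gamma$, and afterwards $l'\le 0$ on $[-\tau,\tau]$ while $\phi'\ge\gamma/2$ on $[\tau/2,\tau]$, hence $\psi'\ge\gamma/2$ on $[\tau/2,\tau]$. Now the zero-crossing case analysis of Lemma~\ref{lem:non_smooth_linear}, run on $[\tau/2,\tau]$ with crossing point $\alpha_0$, gives $\int_{\tau/2}^{\tau}|\psi|\ge\frac{\gamma}{4}\bigl((\alpha_0-\tau/2)^2+(\tau-\alpha_0)^2\bigr)\ge\frac{\gamma\tau^2}{32}$, together with a subinterval of length $\Omega(\tau)$ on which $|\psi|=\Omega(\gamma\tau)$, which yields the probability claim. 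In the counterexample above this is exactly where the mass of $|\psi|$ guaranteed by the lemma actually lives, namely on $[\tau/2,\tau]$ rather than on $A^c$.
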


\begin{proof}
Without loss of generality, we can assume that either $\partial_{\max} l (\tau) $ and $\partial_{\max} \phi (\tau/2)  \geq \gamma/2$ , or $\partial_{\min} l (-\tau)  = 0$ and $\partial_{\min} \phi (-\tau/2) \leq - \gamma/2$. The lemma can be proved using the same argument as in Lemma~\ref{lem:non_smooth_linear}.
\end{proof}

We also need the following lemma regarding the gradient descent on non-smooth function.
\begin{lemma} \label{lem:nonsmooth_grad}
%
%

Suppose for every $i \in [m]$, $g_i: \mathbb{R}^d \to \mathbb{R}$ is a $L$-Lipschitz smooth function. Moreover, suppose for an $r \in [m]$, for all $i \in [m - r] $ we have that $g_i$ is also $L$-smooth. Suppose $g: \mathbb{R}\to \mathbb{R}$ is $L$-smooth and $L$-Lipschitz smooth, and let $f(w)$ denote $g(\sum_{i\in [m]} g_i(w_i) )$. Then for every $w, \delta\in \mathbb{R}^{dm}$ with $\| \delta_i \|_2 \leq p$ we have: 
\begin{align*}
g\left( \sum_{i \in [m]} g_i(  w_i + \delta_i) \right) - g\left( \sum_{i \in [m]} g_i(  w_i ) \right)  \leq \sum_{i \in [m - r]} \left\langle \frac{\partial f(w)}{\partial w_i} , \delta_i \right\rangle +  L^3 m^2 p^2 + L^2 r p.
\end{align*}
\end{lemma}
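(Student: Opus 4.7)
The statement is a deterministic, one-shot descent-type inequality for the composite scalar $f(w) = g\!\left(\sum_i g_i(w_i)\right)$, where the inner summands split into $m-r$ truly smooth pieces and $r$ merely Lipschitz pieces. The plan is to apply the standard ``descent lemma'' to the outer function $g$ once, and then expand the inner perturbation $\sum_i g_i(w_i+\delta_i) - \sum_i g_i(w_i)$ coordinate-by-coordinate, using first-order Taylor expansion for the smooth indices and only the Lipschitz bound for the $r$ bad indices.

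Concretely, I would set $S_0 := \sum_{i\in[m]} g_i(w_i)$ and $S_1 := \sum_{i\in[m]} g_i(w_i+\delta_i)$. Because $g$ is $L$-smooth (Lipschitz gradient), the standard quadratic upper bound gives
\[
  g(S_1) - g(S_0) \;\le\; g'(S_0)\,(S_1 - S_0) \;+\; \tfrac{L}{2}\,(S_1 - S_0)^2 .
\]
For the linear term, I split $S_1 - S_0 = A + B$, where
\[
  A \;=\; \sum_{i\in[m-r]} \bigl(g_i(w_i+\delta_i)-g_i(w_i)\bigr), \qquad
  B \;=\; \sum_{i=m-r+1}^{m} \bigl(g_i(w_i+\delta_i)-g_i(w_i)\bigr).
\]
For each $i\in[m-r]$, $g_i$ being $L$-smooth gives $g_i(w_i+\delta_i) - g_i(w_i) = \langle \nabla g_i(w_i),\delta_i\rangle + O(L p^2)$, summing to $A = \sum_{i\in[m-r]} \langle \nabla g_i(w_i),\delta_i\rangle + O(L m p^2)$. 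For each $i$ in the non-smooth block, $L$-Lipschitzness only yields $|g_i(w_i+\delta_i)-g_i(w_i)| \le L p$, so $|B| \le L r p$. Multiplying by $g'(S_0)$ and using $|g'(S_0)|\le L$ and the chain rule identity $\partial f(w)/\partial w_i = g'(S_0)\nabla g_i(w_i)$, the linear piece becomes
\[
  g'(S_0)(S_1-S_0) \;\le\; \sum_{i\in[m-r]} \Bigl\langle \tfrac{\partial f(w)}{\partial w_i},\delta_i\Bigr\rangle + O(L^2 m p^2) + O(L^2 r p).
\]

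For the quadratic remainder I use the crude bound $|S_1 - S_0| \le \sum_i |g_i(w_i+\delta_i)-g_i(w_i)| \le L m p$ (Lipschitz applies to all $m$ indices), giving $\tfrac{L}{2}(S_1-S_0)^2 \le \tfrac{1}{2} L^3 m^2 p^2$. Collecting the three error contributions and absorbing $L^2 m p^2$ into $L^3 m^2 p^2$ yields the claimed inequality. There is no real analytical obstacle here; the only care needed is bookkeeping which $L$ is being used for which property (Lipschitz of $g$, smoothness of $g$, Lipschitz of all $g_i$, smoothness of the good $g_i$'s) and ensuring that the $r p$ term is linear in $p$ — that term is the unavoidable cost of the $r$ non-smooth directions, and it is precisely why the overall bound is not purely quadratic in $p$.
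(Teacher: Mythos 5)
Your proposal is correct and follows essentially the same route as the paper's proof: Lipschitzness of $g$ and of the $r$ non-smooth $g_i$'s yields the $L^2 r p$ term, the quadratic descent bound for $g$ yields the $L^3 m^2 p^2$ term, and smoothness of the good $g_i$'s plus the chain rule yields the inner-product term (with the $O(L^2 m p^2)$ Taylor remainder absorbed, exactly as in the paper). The only difference is cosmetic ordering — the paper first strips the bad-block perturbations before invoking the descent lemma, whereas you invoke it on the full perturbation and then split the linear term.
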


\begin{proof}[Proof of Lemma~\ref{lem:nonsmooth_grad}]
The proof of this lemma follows directly from 
\begin{align*}
& g\left( \sum_{i \in [m]} g_i(  w_i + \delta_i) \right) - g\left( \sum_{i \in [m]} g_i(  w_i ) \right) 
\\
&\leq g\left( \sum_{i \in [m - r]} g_i(  w_i + \delta_i) + \sum_{i > m - r} g_i(w_i) \right) - g\left( \sum_{i \in [m]} g_i(  w_i ) \right)  \\
& \quad + L \left| \sum_{i > m - r} g_i(w_i) - \sum_{i > m - r} g_i(w_i + \delta_i) \right|
\\
& \leq  g\left( \sum_{i \in [m - r]} g_i(  w_i + \delta_i) + \sum_{i > m - r} g_i(w_i) \right) - g\left( \sum_{i \in [m]} g_i(  w_i ) \right)  + L^2 pr
\\
& \leq  \left\langle \nabla g\left( \sum_{i \in [m]} g_i(  w_i ) \right) , \sum_{i \in [m - r]} g_i(  w_i + \delta_i) -\sum_{i \in [m - r]} g_i(  w_i ) \right\rangle \\
& \quad +  \frac{L}{2} \left\| \sum_{i \in [m - r]} g_i(  w_i + \delta_i) -\sum_{i \in [m - r]} g_i(  w_i ) \right \|^2  + L^2 pr
\\
& \leq  \left\langle \nabla g\left( \sum_{i \in [m]} g_i(  w_i ) \right) , \sum_{i \in [m - r]} g_i(  w_i + \delta_i) -\sum_{i \in [m - r]} g_i(  w_i ) \right\rangle + L^3 m^2 p^2   + L^2 pr 
\\
& \leq  \sum_{i \in [m - r]} \left\langle \frac{\partial f(w)}{\partial w_i} , \delta_i \right\rangle + L^3 m^2 p^2   + L^2 pr
\end{align*}
where the last line follows from the chain rule and Lipschitz smoothness, and the last to second line follows from
\begin{align*}
\left| \sum_{i \in [m - r]} g_i(  w_i + \delta_i) -\sum_{i \in [m - r]} g_i(  w_i )  \right| &\leq Lpm.
\end{align*}
This completes the proof.
\end{proof}


\newpage

\section{Illustration of the Separability Assumption} \label{app_pre}

\begin{figure}[h]
     \centering
     \subfloat[][Each class as two components]{\includegraphics[width=.3\linewidth]{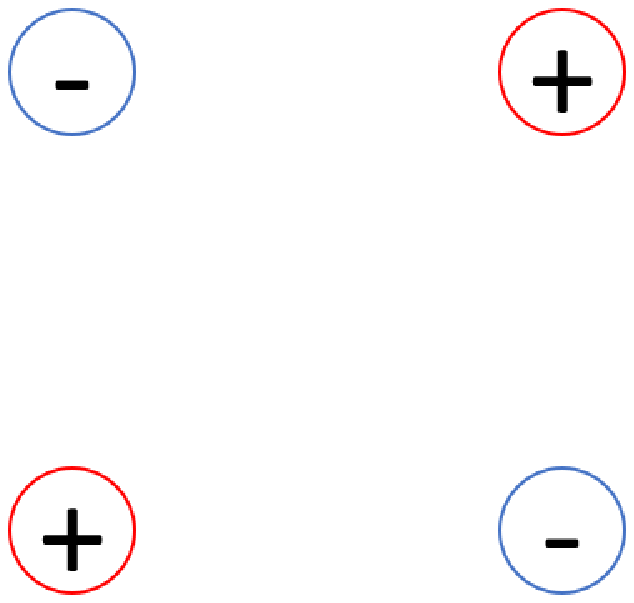}\label{fig:xor}}
		 \hspace{2cm}
     \subfloat[][Each class as one component]{\includegraphics[width=.35\linewidth]{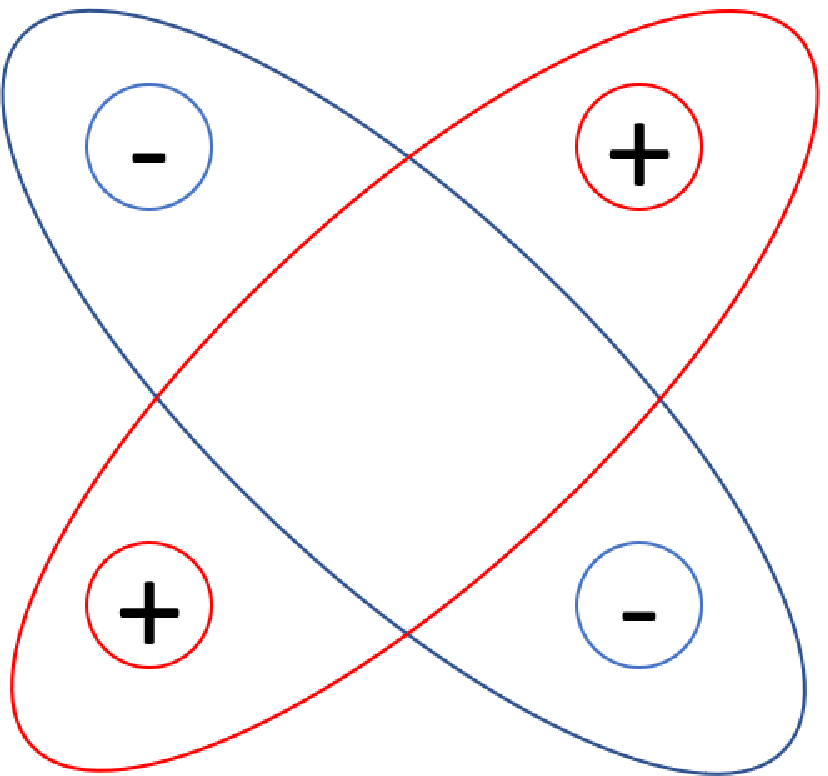}\label{fig:xor2}}
     \caption{Illustration of the separability assumption. The data lie in $\mathcal{R}^2$ and are from two classes $-$ and $+$. The $+$ class contains points uniformly over two balls of diameter $1/10$ with centers $(0,0)$ and $(2,2)$, and the $-$ class contains points uniformly over two balls of the same diameter with centers $(0,2)$ and $(2,0)$. (a) We can view each ball in each class as one component, then the data will satisfy the separability assumption with $\ell=2$. (b) We can also view each class as just one component, but the data will not satisfy the separability assumption with $\ell=1$.}
     \label{fig:separability}
\end{figure}

Recall the separability assumption introduced in Section~\ref{sec:preliminary}:
\begin{enumerate}
\item[\textbf{(A1)}] (Separability) There exists $\delta > 0$ such that for every $i_1 \not= i_2 \in [k]$ and every $j_1, j_2 \in [l]$, 
\begin{align*}
  \text{dist}\left(\text{supp}(\mathcal{D}_{i_1, j_1}), \text{supp}(\mathcal{D}_{i_2, j_2}) \right) \geq \delta.
\end{align*}
Moreover, for every $i \in [k], j \in [l]$,
\begin{align*} 
  \text{diam}(\text{supp}(\mathcal{D}_{i, j}) )\leq \lambda \delta, ~\text{for}~\lambda \leq 1/(8l).
\end{align*} 
\end{enumerate}

In this assumption, each class can contain multiple components when $\ell \ge 2$. This allows more flexibility and also allows non-linearly separable data. See Figure~\ref{fig:separability} for such an example. The data lie in $\mathcal{R}^2$ and are from two classes $-$ and $+$. The $+$ class contains points uniformly over two balls of diameter $1/10$ with centers $(0,0)$ and $(2,2)$, and the $-$ class contains points uniformly over two balls of the same diameter with centers $(0,2)$ and $(2,0)$. As illustrated in Figure~\ref{fig:separability}(a), the data satisfy the separability assumption with $\ell=2$: each ball in each class is viewed as one component, then the distance between any two points in one component is at most $1/10$ while the distance between any two points from different components will be at least $19/10$. However, as illustrated in Figure~\ref{fig:separability}(b), the data do not satisfy the separability assumption with $\ell=1$, by viewing each class as just one component. This demonstrates that allowing $\ell \ge 2$ leads to more flexibility. Furthermore, the data are clearly not linearly separable, showing that the assumption captures nonlinear structures of practical data better than linear separability.  
\section{Additional Experimental Results} \label{app:exp}


Here we provide some additional experimental results.

\subsection{Statistics When Achieving A Small Error v.s. Number of Hidden Nodes}

\begin{figure}[h]
     \centering
     \subfloat[][On synthetic data]{\includegraphics[width=.5\linewidth]{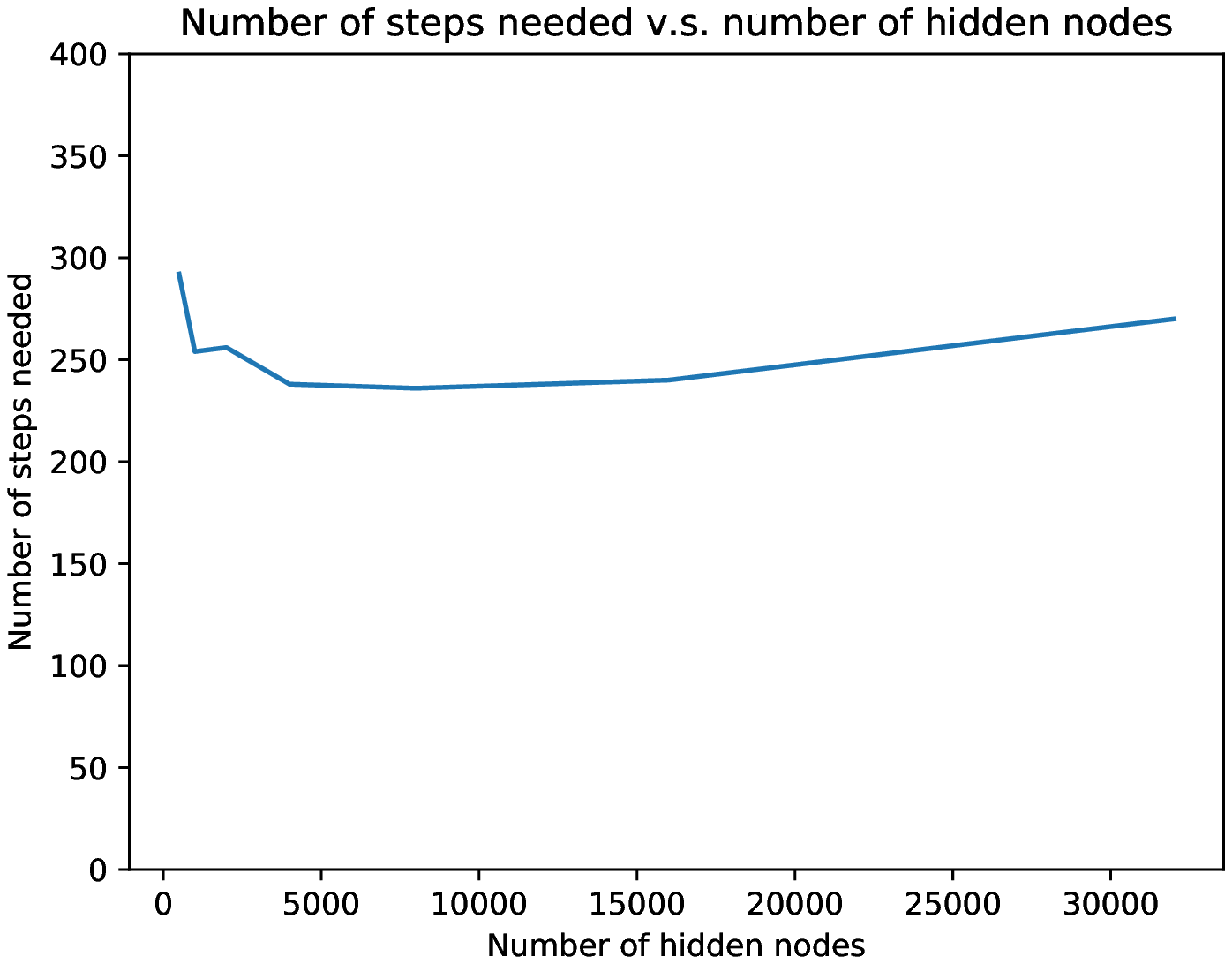}\label{fig:gaussiankrd_ls_sigma1_bm1000_ls_step}}
     \subfloat[][on MNIST]{\includegraphics[width=.5\linewidth]{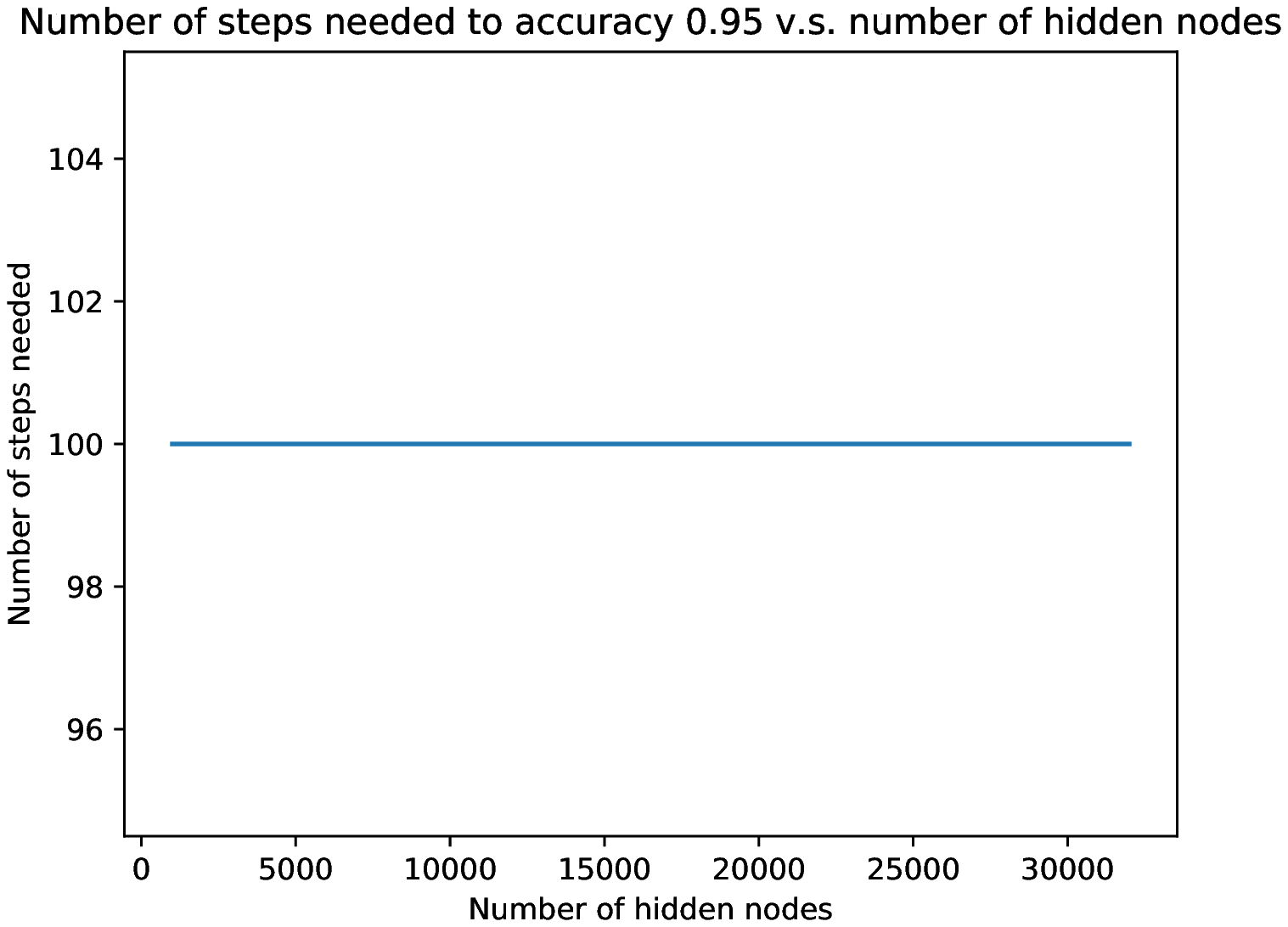}\label{fig:mnist_ls_ms20000_lr100_bm4000_ls_step}}
     \caption{Number of steps to achieve $98\%$ on the synthetic data and $95\%$ test accuracy on MNIST for different values of number of hidden nodes. They are roughly the same for different number of hidden nodes.}
     \label{fig:ls_step}
\end{figure}

\begin{figure}[h]
     \centering
     \subfloat[][On synthetic data]{\includegraphics[width=.5\linewidth]{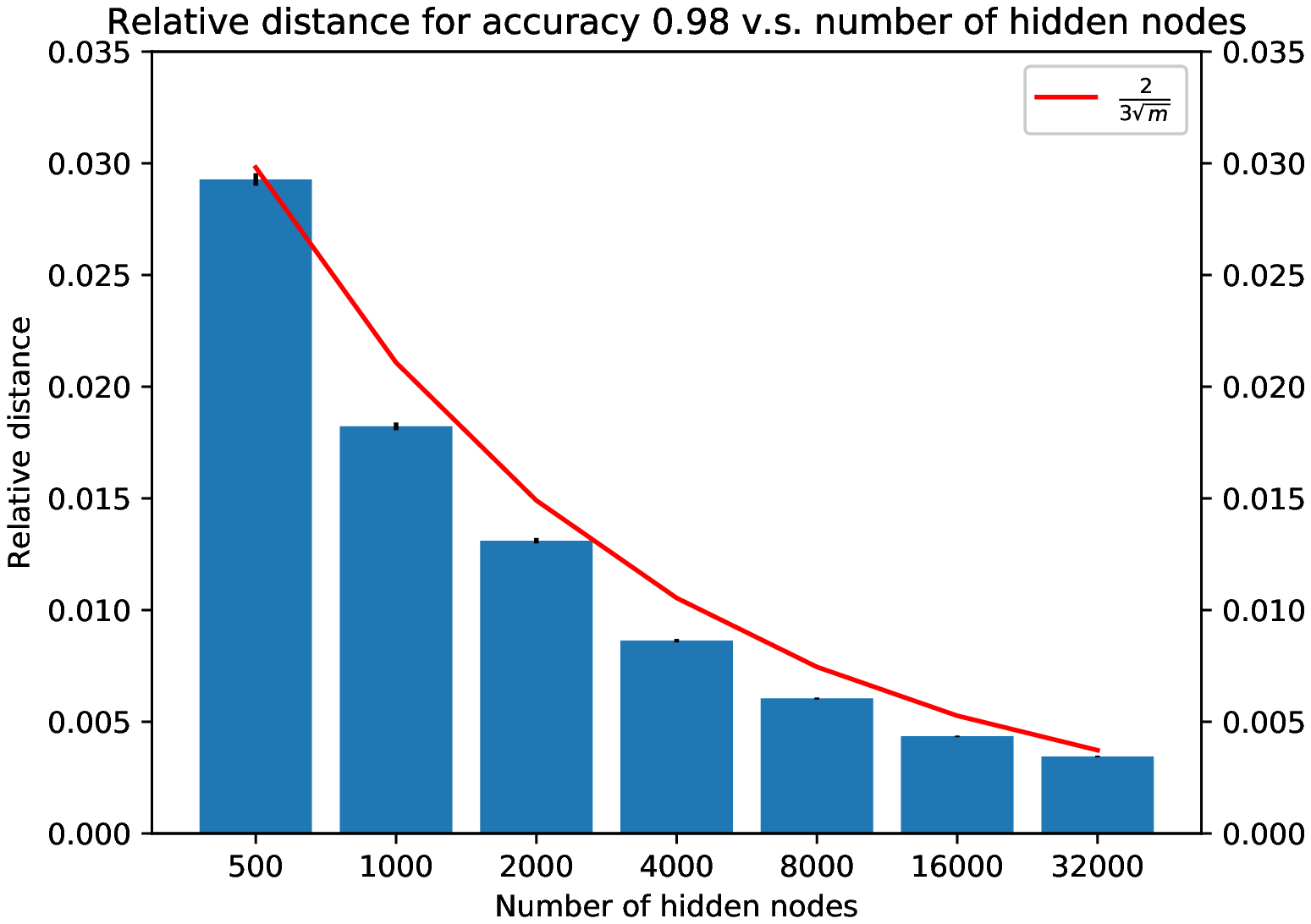}\label{fig:gaussiankrd_ls_sigma1_bm1000_ls_rel_dist}}
     \subfloat[][on MNIST]{\includegraphics[width=.5\linewidth]{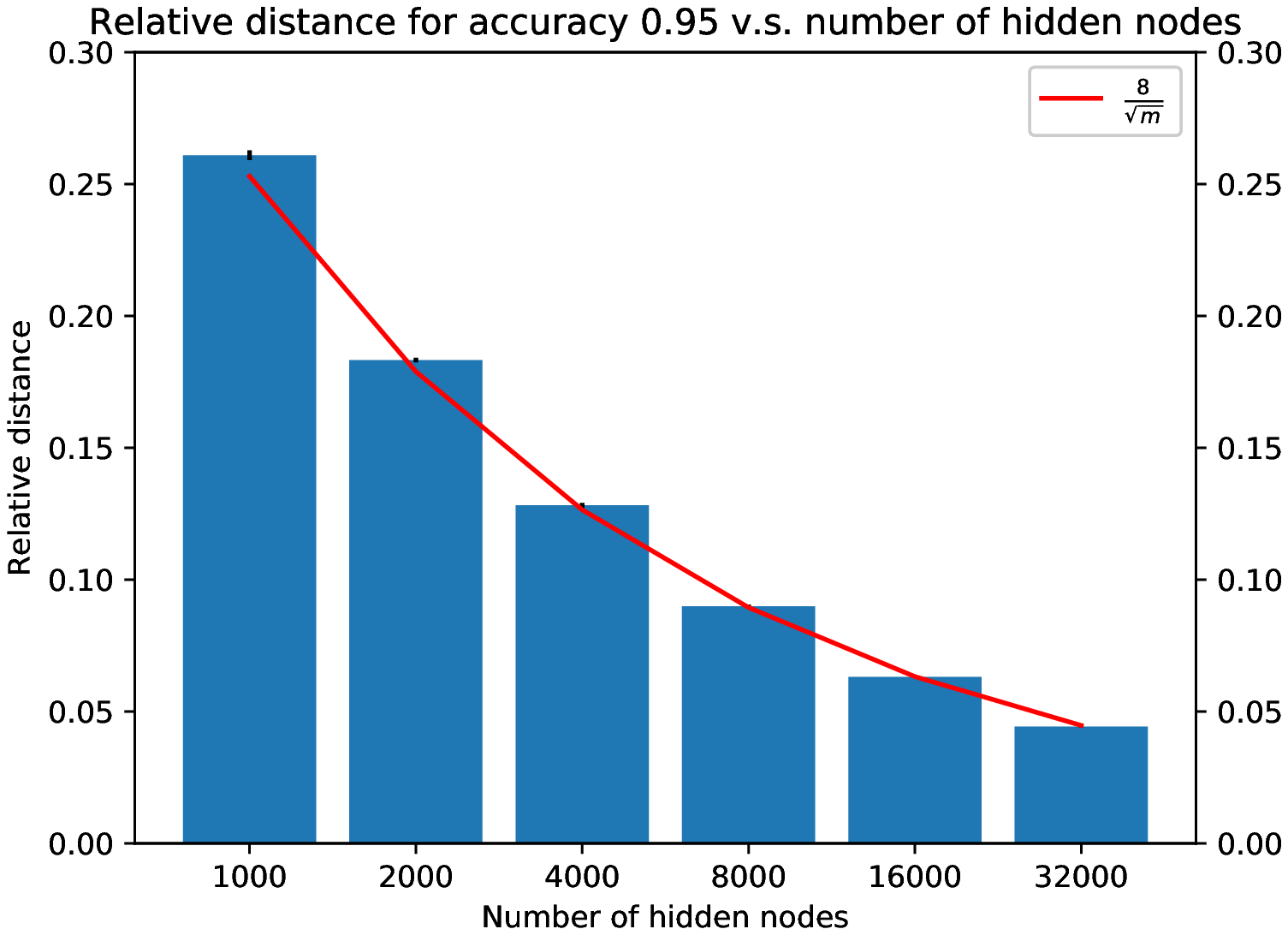}\label{fig:mnist_ls_ms20000_lr100_bm4000_ls_rel_dist}}
     \caption{Relative distances when achieving $98\%$ on the synthetic data and $95\%$ test accuracy on MNIST for different values of number of hidden nodes. They closely match $2/3\sqrt{m}$ on the synthetic data and $8/\sqrt{m}$ on MNIST (the red lines), where $m$ is the number of hidden nodes.}
     \label{fig:ls_rel_dist}
\end{figure}

Recall that our analysis that for a learning rate decreasing with the number of hidden nodes $m$, the number of iterations to get the accuracy roughly remain the same. A more direct way to check is to plot the number of steps to achieve the accuracy for different $m$. As shown in Figure~\ref{fig:ls_step}, the number of steps roughly match what our theory predicts. 

Furthermore, Figure~\ref{fig:ls_rel_dist} shows the relative distances when achieving the desired accuracies. It is observed that the distances scale roughly as $O(1/\sqrt{m})$. In particular, they closely match $2/3\sqrt{m}$ on the synthetic data and $8/\sqrt{m}$ on MNIST (the red lines in the figures), where $m$ is the number of hidden nodes. Explanations are left for future work.

\subsection{Synthetic Data with Larger Variances}

\begin{figure}[h]
     \centering
     \subfloat[][Test accuracy]{\includegraphics[width=.5\linewidth]{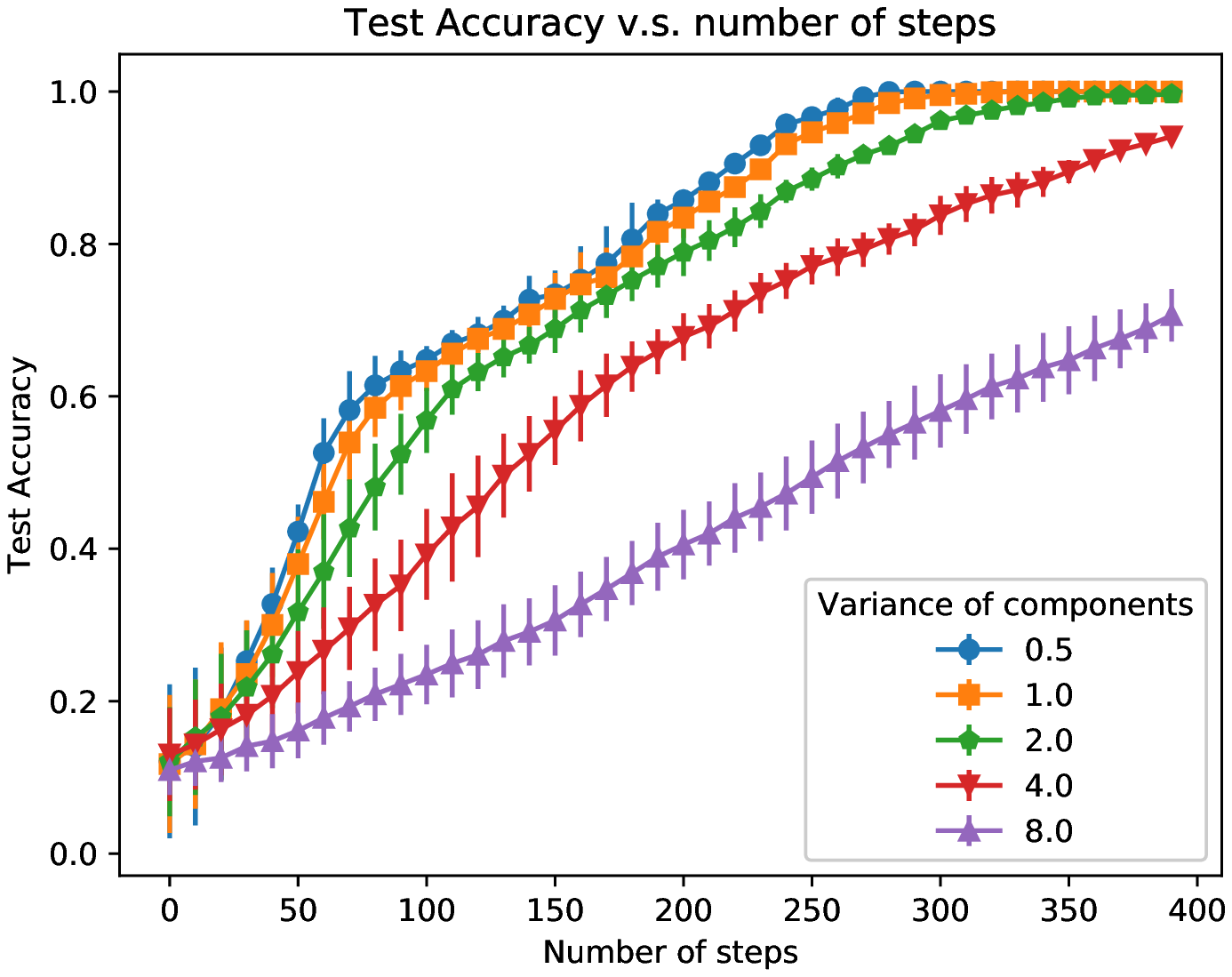}\label{fig:gaussiankrd_ls1000_sigma_bm1000_acc}}
     \subfloat[][Coupling]{\includegraphics[width=.5\linewidth]{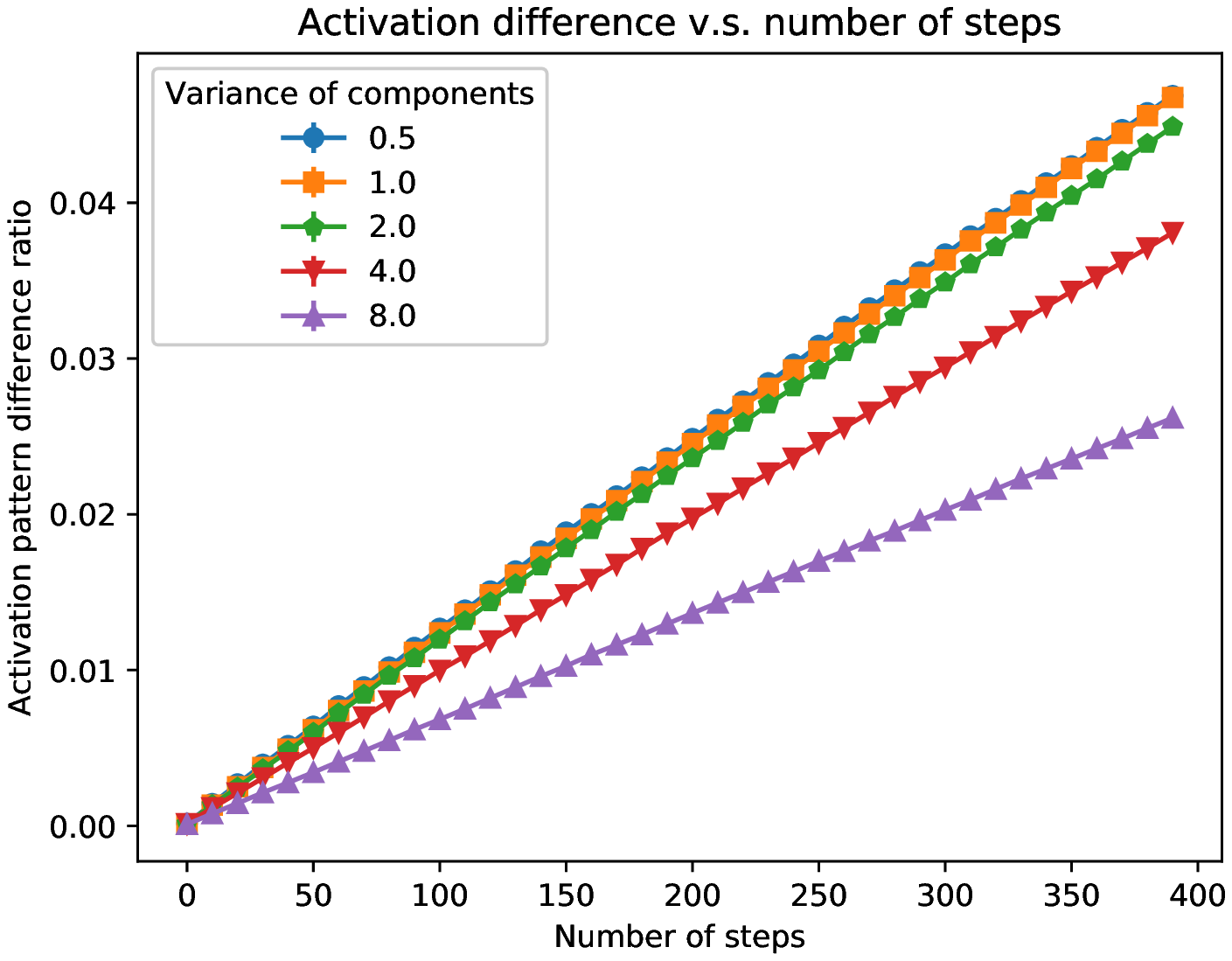}\label{fig:gaussiankrd_ls1000_sigma_bm1000_act}}\\
		 \subfloat[][Distance from the initialization]{\includegraphics[width=.5\linewidth]{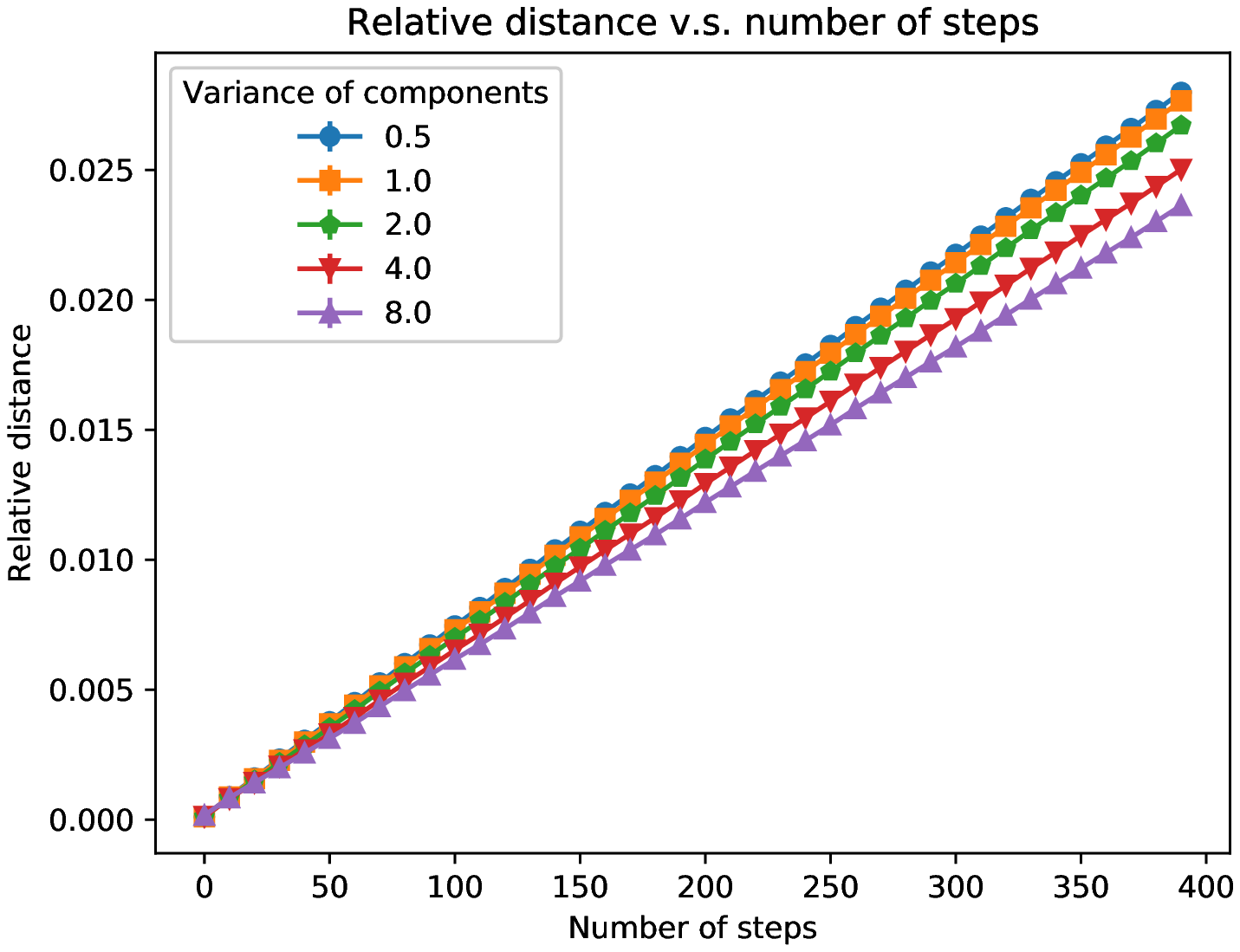}\label{fig:gaussiankrd_ls1000_sigma_bm1000_rel_dist}}
		 \subfloat[][Rank of accumulated updates ($y$-axis in log-scale)]{\includegraphics[width=.5\linewidth]{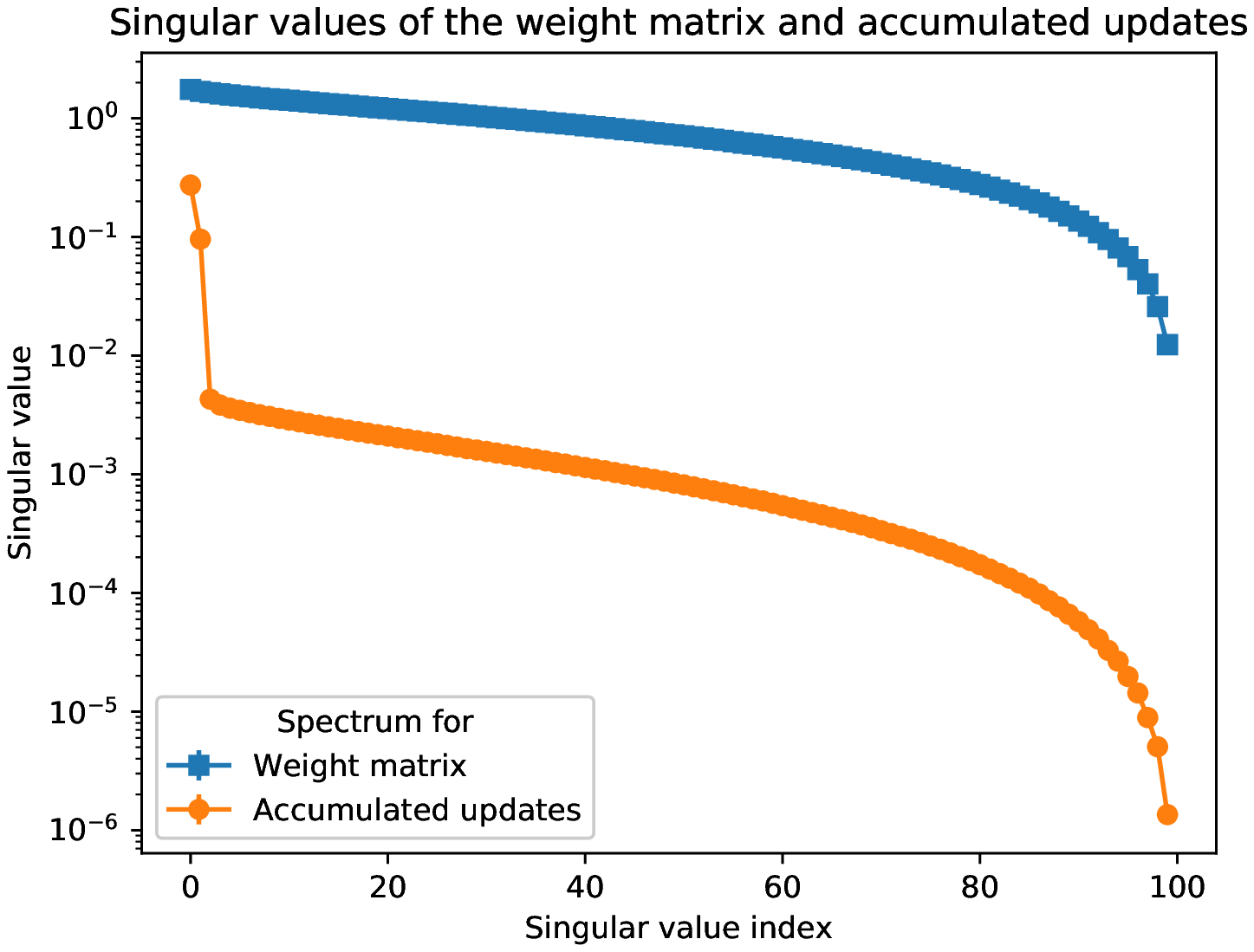}\label{fig:gaussiankrd_ls1000_sigma_bm1000_rank_s_sd}}
     \caption{Results for synthetic data with different variances.}
     \label{fig:varaince}
\end{figure}

Here we test the effect of the in-component variance on the learning process. 
First recall that the synthetic data are of 1000 dimension and consist of $k=10$ classes, each having $\ell=2$ components. Each component is of equal probability $1/(kl)$, and is a Gaussian with covariance $\sigma/\sqrt{d} I$ and its mean is i.i.d.\ sampled from a Gaussian distribution $\mathcal{N}(0, \sigma_0/\sqrt{d})$. $1000$ training data points and $1000$ test data points are sampled.
Here we fix $\sigma_0 = 5$ and vary $\sigma$ and plot the test accuracy, the coupling, the distance across different time steps, and the spectrum of the final solution. 

Figure~\ref{fig:varaince} shows that the test accuracy decreases with increasing variance $\sigma$, and it takes longer time to get a good solution. On the other hand, an increasing variance does not change the trends for activation patterns, distance, and the rank of the weight matrix. This is possibly due to that the signal in the updates remain small with increasing variances, while the noise in the updates act similarly as the randomness in the weights.
 
\subsection{Synthetic Data with Larger Number of Components in Each Class} 

Here we test the effect of the number of components in each class on the learning process. 
First recall that the synthetic data are of 1000 dimension and consist of $k=10$ classes, each having $\ell$ components. Each component is of equal probability $1/(kl)$, and is a Gaussian with covariance $1/\sqrt{d} I$ and its mean is i.i.d.\ sampled from a Gaussian distribution $\mathcal{N}(0, 5/\sqrt{d})$. $1000$ training data points and $1000$ test data points are sampled.
Here we vary $\ell$ from $1$ to $7$ and plot the test accuracy, the coupling, the distance across different time steps, and the spectrum of the final solution. 

Figure~\ref{fig:kinclass} shows that the test accuracy decreases with increasing number of components $\ell$ in each class, and it takes longer time to get a good solution. On the other hand, a larger $\ell$ leads to more  significant coupling and smaller relative distances at the same time step. This is probably because the learning makes less progress due to the more complicated structure of the data. 

\begin{figure}[h]
     \centering
     \subfloat[][Test accuracy]{\includegraphics[width=.5\linewidth]{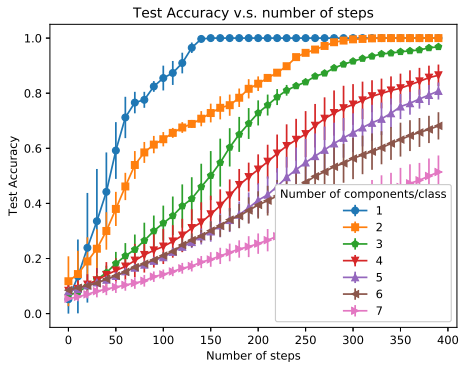}\label{fig:gaussiankrd_ls1000_sigma1_bm1000_kid_acc}}
     \subfloat[][Coupling]{\includegraphics[width=.5\linewidth]{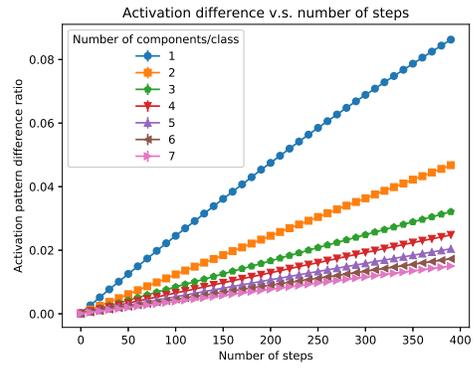}\label{fig:gaussiankrd_ls1000_sigma1_bm1000_kid_act}}\\
		 \subfloat[][Distance from the initialization]{\includegraphics[width=.5\linewidth]{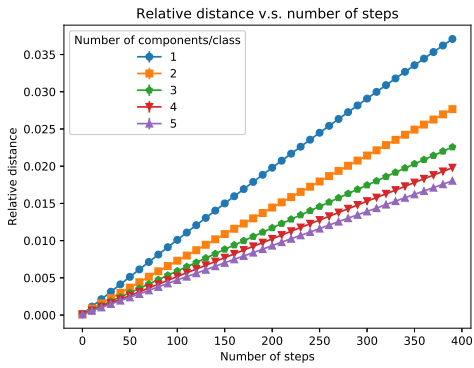}\label{fig:gaussiankrd_ls1000_sigma1_bm1000_kid_rel_dist}}
		 \subfloat[][Rank of accumulated updates  for 4 components in each class ($y$-axis in log-scale)]{\includegraphics[width=.5\linewidth]{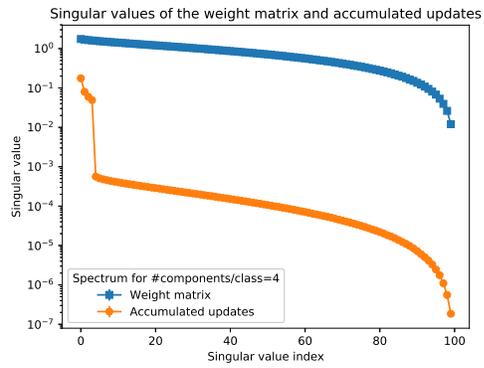}\label{fig:gaussiankrd_ls1000_sigma1_bm1000_kid_rank_s_sd}}
     \caption{Results for synthetic data with larger number of components in each class.}
     \label{fig:kinclass}
\end{figure}

\end{document}